\newcommand{\mathdefault}[1][]{}
\theoremstyle{acmplain}
\crefname{algocf}{alg.}{algs.}
\Crefname{algocf}{Algorithm}{Algorithms}
\newcommand{\bigO}[1]{\ensuremath{\mathcal{O}\mkern-2mu\left(#1\right)}}
\newcommand{\bigOTilde}[1]{\ensuremath{\tilde{\mathcal{O}}\mkern-2mu\left( #1 \right)}}
\newcommand{\bigOCompact}[1]{\ensuremath{\mathcal{O}\mkern-1mu(#1)}}
\newcommand{\HL}[2]{\textsc{HL}^{#2}\mkern-3mu\left( #1 \right)}
\newcommand{\HD}[2]{\textsc{HD}( #1, #2 )}
\newcommand{\Expected}[1]{\mathbb{E} \left[ #1 \right] }
\newcommand{\timehorizon}{\mathcal{T}}
\newcommand{\timehorizonvalue}{\exp(n^{\varepsilon/8})}
\newcommand{\OneMaxLayer}{\textsc{Ol}}
\newcommand{\goodevent}{\mathcal{G}}
\renewcommand{\Pr}[1]{\text{Pr}\left[ #1 \right]}
\newcommand{\D}{D^{\ell}_{\pm}}
\newcommand{\Dplus}{D^{\ell}_{+}}
\newcommand{\Dminus}{D^{\ell}_{-}}
\newcommand{\ooea}{$(1 + 1)$-EA\xspace}
\newcommand{\olea}{$(1 + \lambda)$-EA\xspace}
\newcommand{\oclea}{$(1 , \lambda)$-EA\xspace}
\newcommand{\moea}{$(\mu + 1)$-EA\xspace}
\newcommand{\OneMax}{\textsc{OneMax}\xspace}
\newcommand{\onemax}{\textsc{OneMax}\xspace}
\newcommand{\ZeroMax}{\textsc{ZeroMax}\xspace}
\newcommand{\OM}{\textsc{Om}\xspace}
\newcommand{\ZM}{\textsc{Zm}\xspace}
\newcommand{\cliff}{\textsc{Cliff}\xspace}
\newcommand{\leadingones}{\textsc{LeadingOnes}\xspace}
\newcommand{\DisOM}{\ensuremath{\textsc{DisOM}}\xspace}
\newcommand{\DisOMpar}{\ensuremath{\textsc{DisOM}_{\mathcal{D}}}\xspace}
\newcommand{\DyDisOmD}{\textsc{DyDisOm}_{\mathcal{D}} }
\newcommand{\eps}{\ensuremath{\varepsilon}}
\DeclareMathOperator*{\argmax}{arg\,max}
\newcommand{\RR}{\ensuremath{\mathbb{R}}}
\newtheorem{remark}{Remark}
\newtheorem{assumption}{Assumption}
  \providecommand\BibTeX{{%
    \normalfont B\kern-0.5em{\scshape i\kern-0.25em b}\kern-0.8em\TeX}}}
\begin{document}

\title{Plus Strategies are Exponentially Slower for Planted Optima of Random Height}

\author{Johannes Lengler}
\email{johannes.lengler@inf.ethz.ch}
\affiliation{%
  \institution{ETH Zürich}
  \city{Zürich}
  \country{Switzerland}
}

\author{Leon Schiller}
\email{leon.schiller@inf.ethz.ch}
\affiliation{%
  \institution{ETH Zürich}
  \city{Zürich}
  \country{Switzerland}
}

\author{Oliver Sieberling}
\email{osieberling@student.ethz.ch}
\affiliation{%
  \institution{ETH Zürich}
  \city{Zürich}
  \country{Switzerland}
}

\renewcommand{\shortauthors}{Lengler et al.}

\begin{abstract}
We compare the \oclea and the \olea on the recently introduced benchmark \DisOM, which is the \onemax function with randomly planted local optima. Previous work showed that if all local optima have the same relative height, then the plus strategy never loses more than a factor $O(n\log n)$ compared to the comma strategy. Here we show that even small random fluctuations in the heights of the local optima have a devastating effect for the plus strategy and lead to superpolynomial runtimes. On the other hand, due to their ability to escape local optima, comma strategies are unaffected by the height of the local optima and remain efficient. Our results hold for a broad class of possible distortions and show that the plus strategy, but not the comma strategy, is generally deceived by sparse unstructured fluctuations of a smooth landscape.
\end{abstract}

\begin{CCSXML}
  <ccs2012>
     <concept>
         <concept_id>10003752.10010061.10011795</concept_id>
         <concept_desc>Theory of computation~Random search heuristics</concept_desc>
         <concept_significance>500</concept_significance>
         </concept>
   </ccs2012>
\end{CCSXML}
  
\ccsdesc[500]{Theory of computation~Random search heuristics}

\keywords{Evolutionary Algorithms, Evolutionary Optimization, Randomized Search Heuristics, Run Time Analysis}

\maketitle

\section{Introduction}

A classical topic for Evolutionary Algorithms (EA) are the advantages and disadvantages of elitism, i.e., whether best-so-far solutions should always be kept in the population. Proponents of non-elitist strategies like comma selection argue that the ability to discard the fittest solution may help in escaping local optima. However, theoretical support for this claim has been mixed~\cite{dagstuhl22081theory}, and comma selection does not give benefits in many situations~\cite{dang2021non,doerr22does}. Until recently, the only supportive theoretical pieces of evidence are for \cliff, where comma selection can overcome a large plateau of local optima in polynomial time where plus selection cannot~\cite{hevia2021a}, and for \emph{Distorted Onemax} \DisOM, where a comma strategy outperforms plus strategies in a \onemax landscape with planted local optima~\cite{Jorritsma_Lengler_Sudholt_2023}. 

However, both these results have some limitations. As Jorritsma, Lengler and Sudholt argued in~\cite{Jorritsma_Lengler_Sudholt_2023}, the \cliff function models a rather peculiar type of landscape, and many local optima may not be represented well by this landscape. Concretely, the \cliff function features a massive plateau of local optima, and when an algorithm manages to escape this plateau, then even a random walk ignoring fitness returns to the plateau with high probability. This is arguably different from local optima in many natural problems. It is certainly different from the global optimum (if unique), where a random walk typically brings the algorithm further away from the optimum, not back to it.

For these reasons,~\cite{Jorritsma_Lengler_Sudholt_2023} proposed \DisOM as a new benchmark with a different type of local optima. The benchmark is created from the \onemax function by flipping a coin for each search point, and turning it into a local optimum with some small probability $p$ by adding a distortion $D$ to its fitness.\footnote{This can be seen as frozen Bernoulli noise. Frozen means that two evaluations of the same search point return the same fitness.} The paper~\cite{Jorritsma_Lengler_Sudholt_2023} studied the case of constant $D$ (all distorted points gain the same amount of fitness), and the results were mixed. On the one hand, it was shown that the \oclea can outperform the \olea on \DisOM for some parameter settings by a factor of almost $O(n\log n)$. On the other hand, it was also shown for general parameter settings that the factor can never exceed $O(n\log n)$. The reason was that for constant $D$ the subset of distorted points still looks like a \onemax landscape, just shifted by a constant $D$, and that the plus strategy can still efficiently navigate in that landscape. Thus, in the setting of constant $D$, the gain of comma selection is limited.

\subsection{Our Result, briefly}

In this paper, we also study the \oclea and the \olea on \DisOM, and with the same parameter setting as in~\cite{Jorritsma_Lengler_Sudholt_2023}, see Section~\ref{sec:parameters} for details. But, other than in~\cite{Jorritsma_Lengler_Sudholt_2023}, for each distorted point we choose the size $D$ of the distortion randomly from some distribution $\mathcal D$, where we draw from the same distribution for all distorted points. We make only the very mild assumptions on $\mathcal D$ that it should not decay too extreme, $\Pr{D \ge d}/\Pr{D \ge d+1} \le n^{o(1)}$ for all $d>0$. This includes all distributions with a polynomially or exponentially decaying tail, like exponential distributions or power laws. Our results extend to distributions like the Gaussian distribution, where the condition is only satisfied for small values of $D$. 

We have two main results: firstly, as expected, the local optima do not affect the \oclea, which stays efficient for settings that are efficient on \onemax (see Section~\ref{sec:parameters} for details on the parameter settings). This result is not very surprising and is proved with similar techniques as in~\cite{Jorritsma_Lengler_Sudholt_2023}. The novel part is that the \olea is completely deceived. Of course, it gets stuck in a local optimum, but that is not the worst part. Optimistically, one might hope that it could escape from the local optimum by sampling a search point with a larger \onemax. Although this would need time, it might be tolerable. But before this happens, the algorithm will likely find another distorted point with a slightly larger distortion. Thus it trades its local optimum against another local optimum with a slightly higher peak, which is even harder to escape. This leads to a vicious cycle in which the algorithm entangles itself in increasingly worse local optima. Remarkably, steps towards a higher local optimum typically also bring the algorithm further away from the string $(1\ldots 1)$, so even if the \olea manages to get close to the target region, it will get distracted and lose this achievement.

For many natural distributions $\mathcal D$, including exponential and Gaussian distributions, this leads to superpolynomial time for finding the target fitness. To our best knowledge, this is the first theoretical comparison between plus and comma strategies on ``rugged'' fitness landscapes with local optima of random height. 

\subsection{Related Work}\label{sec:related}
Apart from the paper~\cite{Jorritsma_Lengler_Sudholt_2023} discussed above, there is one other theoretical study of a similar landscape by Friedrich, K\"otzing, Neumann and Radhakrishnan~\cite{FriedrichKNR22}, which they call a $\mathcal D$-rugged \onemax. They compare Random Local Search (RLS) and the \ooea with the compact Genetic Algorithm (cGA) on a \onemax landscape with frozen noise. The main differences in terms of landscape are that they add a distortion to all search points, while we only plant a relatively small number of local optima, and most search points are undistorted. Moreover, they only consider Gaussian distributions (for cGA and RLS) and a geometric distribution (for RLS and the \ooea), while we consider arbitrary distributions with a tail which falls at most exponentially. For the cGA, an estimation-of-distribution algorithm which does not rely on populations, they show that with high probability, the algorithm never samples the same search point twice (in a fixed-target setting with $k^*=\eps n$). Thus, it is irrelevant whether the noise is frozen or not, which allows them to apply previous results about the cGA for non-frozen noise~\cite{friedrich2016compact}. For the \ooea, their results are similar in spirit as our more general result for plus strategies: that the algorithm gets stuck after a few steps.

In another notable paper, Friedrich, K\"otzing, Krejca and Sutton~\cite{friedrich2016graceful} also study different distributions $\mathcal D$, but in the context of non-frozen noise, i.e., two different fitness evaluations of the same search may give two different answers. They consider the \moea and ask what properties of $\mathcal D$ lead to \emph{graceful scaling}, i.e., to the ability of the algorithm of overcoming even large amount of noise for sufficiently large population size. Although the paper considers a rather different setting (non-frozen noise, different algorithms), we suspect that the classification of distributions that they find is similar to the classes of distributions on which plus selection is also fast on \DisOM. Similarly to our result here, they find that exponentially decaying tails of the distribution lead to large runtimes.\footnote{They only consider one particular distribution with a single parameter, but from the exposition they make clear that the exponentially decaying tails are the decisive property of the distribution.} On the other hand, truncated distributions (whose tail suddenly drops to zero) like the uniform distribution on a small finite interval lead to polynomial runtimes, and the proof intuition is similar to those in~\cite{Jorritsma_Lengler_Sudholt_2023} showing that the \olea does not lose more than a factor of $O(n\log n)$ over the \oclea. Mind that the details are quite different in many aspects, but nevertheless it might be worthwhile to explore the similarities of both situations.

For further related work, especially on theoretical work on comma selection, we refer the reader to the discussion in~\cite{Jorritsma_Lengler_Sudholt_2023}.

\section{Setup and Formal Results}

\subsection{Distorted OneMax}
In this paper, we consider optimization w.r.t. the objective function \emph{distorted One Max}, which is a version of $\OneMax$ with randomly planted local optima of variable height. More formally, we start with parameters $p \in [0, 1]$, and $\mathcal{D}$, which is a continuous probability distribution over $\mathbb{R}^+$. We then define our objective function $\DisOMpar: \{0, 1\}^n \rightarrow \mathbb{R}$ as \begin{align*}
  \DisOMpar(x) = \begin{cases}
    |x|_1 & \text{with probability } 1-p\\
    |x|_1 + D & \text{with probability } p.
  \end{cases}
\end{align*} where $D \sim \mathcal{D}$ is a sample from $\mathcal{D}$. We stress that we draw the function \DisOMpar only once before the run of the algorithm (one independent sample for each distorted point). So, if the algorithm queries the same search point twice, it will find the same fitness value. This is different from noisy optimization, where two evaluations of the same search point may yield different fitness values.

We require the following condition on the distribution $\mathcal{D}$, which ensures that the tail of the distribution does not drop too fast. We remark that this assumption is very weak and, in fact, met for almost all commonly used distributions in practice, at least sufficiently close to $0$. The only notable exception are distributions with a bounded support, i.e., such that $\Pr{D \ge d} = 0$ for some $d \in \RR$.  

\begin{assumption}\label{ass:distribution}
Let $\mathcal{D}$ be a continuous distribution and $D \sim \mathcal{D}$. There is some $\sigma = n^{o(1)}$ and some $\hat{d} \in (0, \infty]$ such that for all $d \le \hat{d}$, \begin{align*}
    \Pr{D \ge d } \le \sigma \Pr{D \ge d + 1}. 
\end{align*} Furthermore, $\Pr{D \ge 0} = \Omega(1)$.
\end{assumption}

We assume the distribution $\mathcal{D}$ to be continuous because this allows us to ignore the case that two distorted points have exactly the same distortion, as this does not occur almost surely. Throughout the paper, we will assume that $\mathcal D$ is fixed and does not depend on $n$. We further remark that $\mathcal{D}$ may have a support over negative numbers as long as it is positive with at least constant probability.

\subsection{Main Results}
We study the fixed target performance of the \olea and the \oclea with target $n - k^*$ on \DisOMpar, where $k^* = o(n)$. To this end, we impose some assumptions on the choice of parameters $k^*, \lambda$ and $p$ which will be formally stated in the next section but already referenced in the formal results here. To summarize them very briefly, we have \Cref{ass:oleaineff} which intuitively ensures that $p$ is sufficiently large such that w.h.p., we find a distorted point before reaching the target $n-k^*$ and the \olea will be inefficient, and we have \Cref{ass:params} which is sufficient for the \oclea to be efficient on $\DisOMpar$. More details are found in \Cref{sec:parameters}.

Our first main result states that the \olea is inefficient in this setting if \Cref{ass:distribution} is met and if $p$ is sufficiently large. This result is established by showing that if the \olea ever comes close to the optimum then it reaches a point of distortion at least $d \coloneqq n^{\Omega(1)}$ and $\ZeroMax$-value roughly $n^{1 - \varepsilon}$ with high probability. On the other hand, we show in \Cref{sec:oclea} that the \oclea remains efficient in the same setting.

\paragraph{\textbf{Inefficiency of the \olea on \DisOMpar.}}
Our first main result establishes that the \olea reaches a point of high distortion that is still far away from the optimum.

\begin{restatable}[]{thm}{oleareachesdistortion}\label{thm:oleadistortionisreached}
    Let $\varepsilon > 0$ be a sufficiently small constant. Assume further that $\mathcal{D}$ satisfies \Cref{ass:distribution}, and that $p$ is such that \Cref{ass:oleaineff} is met. Then with high probability, the \olea on \DisOMpar either spends $\timehorizon = \timehorizonvalue$ iterations before reaching a point $x$ with $\ZM(x) \le n^{1 - \varepsilon}/4$, or visits a point $x$ of distortion at least $d \coloneqq \min\{\hat{d}/2,  n^{\varepsilon/16}\}$ before finding the first point with $\ZM(x) < n^{1 - 4\varepsilon}$.
\end{restatable}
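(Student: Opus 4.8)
The plan is to make precise the ``vicious cycle'' described in the introduction. Informally: once the \olea sits on a distorted point of height $D$ whose $\ZM$-value $z$ is small, the overwhelmingly most likely improving offspring is obtained by flipping a single one-bit to reach \emph{another} distorted point of height slightly above $D$ --- a point lying one step \emph{further} from $(1\ldots 1)$ --- whereas the ``honest'' escape, an undistorted point with $\lceil D\rceil$ more one-bits, has probability only of order $(z/n)^{\lceil D\rceil}$ and is negligible. Iterating this raises the height past $d$ before the algorithm ever makes net $\ZM$-progress.

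First I would fix a good event $\goodevent$, of probability $1-o(1)$, on which during the first $\timehorizon=\timehorizonvalue$ generations no mutation flips more than $B\coloneqq n^{\varepsilon/4}$ bits; this holds because a single mutation flips $\ge B$ bits with probability at most $1/B!$ and $B! \gg \timehorizon\cdot\lambda\cdot n$, using that $\lambda=n^{\bigO{1}}$ by \Cref{ass:params}. On $\goodevent$ every accepted step changes $|x|_1$ by less than $B$. If more than $\timehorizon$ iterations elapse before $\ZM(x)\le n^{1-\varepsilon}/4$, we are in the first alternative of the statement, so assume this happens within $\timehorizon$ iterations and let $\tau$ be the first iteration with $\ZM(x)<n^{1-4\varepsilon}$; since $n^{1-4\varepsilon}=o(n^{1-\varepsilon})$, the iteration $\tau$ comes strictly after $\ZM$ first reaches $n^{1-\varepsilon}/4$. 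Before iteration $\tau$ the algorithm accepts at least $(n^{1-\varepsilon}/4-n^{1-4\varepsilon})/B=\Omega(n^{1-\varepsilon}/B)$ distinct points, each ``fresh'' (hence distorted independently with probability $p$) when first queried, since the fitness of the accepted point strictly increases and so an accepted point can never have been queried before. Using that $p$ is large (\Cref{ass:oleaineff}) and that $\Pr{D'\ge k}\ge\sigma^{-k}\Pr{D'\ge 0}=\Omega(\sigma^{-k})$ for $k\le\hat d$ by \Cref{ass:distribution} (where $D'\sim\mathcal D$ independently), a Chernoff bound shows that w.h.p.\ one such fresh point $x^{\star}$ is distorted, is in fact accepted (automatic for a point reached by a single zero-to-one flip from a current point of smaller height), has some height $D^{\star}$ and $\ZM$-value $z^{\star}\le n^{1-\varepsilon}/4$, and satisfies the \emph{trapping inequality} $(z^{\star}/n)^{\lceil D^{\star}\rceil}\le n^{-\Omega(\varepsilon)}\cdot p\cdot\Pr{D'>D^{\star}+1}$; the admissible value of $D^{\star}$ --- a constant when $p\ge n^{-\Omega(\varepsilon)}$, or a slowly growing $\Theta(\varepsilon\log n)$ for smaller polynomial $p$ --- is exactly what \Cref{ass:oleaineff} is designed to guarantee, and $x^{\star}$ is reached within $\timehorizon$ iterations.

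The crux is the \emph{ratcheting lemma}: whenever the current point is distorted with height $D\in[D^{\star},d)$ and $\ZM$-value $z\le 2z^{\star}\le n^{1-\varepsilon}/2$, the next accepted offspring is --- with probability $1-n^{-\Omega(\varepsilon)}$ --- a distorted point reached by a single one-to-zero flip, with a fresh height exceeding $D+1$ and $\ZM$-value $z+1$. I would prove this by comparing per-generation rates: an improving undistorted offspring needs $\ge\lceil D\rceil$ extra one-bits, rate $\lesssim\lambda(z/n)^{\lceil D\rceil}/\lceil D\rceil!$; an improving distorted offspring with $|y|_1\ge|x|_1$ needs an extra one-bit (or a Hamming-$2$ move) together with a fresh height $>D-1$, rate $\lesssim\lambda(z/n)\cdot p\cdot\Pr{D'>D-1}$; while the one-to-zero move to a distorted point of height $>D+1$ has rate $\approx(\lambda/e)\cdot p\cdot\Pr{D'>D+1}$. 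By \Cref{ass:distribution}, $\Pr{D'>D-1}\le\sigma^{2}\Pr{D'>D+1}$, so the second rate is $\bigO{\sigma^{2}z/n}=n^{-\varepsilon+o(1)}$ times the third; and since $(z/n)^{\lceil D\rceil}\sigma^{D+1}$ is decreasing in $D$, the trapping inequality propagates from $D^{\star}$ to all of $[D^{\star},d]$ and makes the first rate at most $n^{-\Omega(\varepsilon)}$ times the third. Hence the ``backward-and-up'' move dominates. Applying the lemma at most $d-D^{\star}\le d$ times starting from $x^{\star}$: on $\goodevent$ and on the event, of probability $1-d\cdot n^{-\Omega(\varepsilon)}=1-o(1)$, that none of these moves is of the other two types, each move increases the height by more than $1$, so the height exceeds $d$ within $\le d$ moves, while $\ZM$ only \emph{increases} --- by at most $dB\le n^{5\varepsilon/16}\ll n^{1-\varepsilon}$ in total --- and thus stays $\ge\ZM(x^{\star})\ge n^{1-4\varepsilon}$ throughout. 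Each of these $\le d$ moves occurs within $\bigO{\sigma^{d+1}/(\lambda p)}=e^{o(n^{\varepsilon/8})}$ generations, since the dominant rate is $\ge(\lambda/e)p\cdot\Omega(\sigma^{-(D+1)})\gg 1/\timehorizon$, so the whole ratchet uses only $o(\timehorizon)$ iterations and the union bound of the first step is not overrun. This produces a point of distortion $\ge d$ strictly before iteration $\tau$, as required.

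I expect the main obstacle to be the ratcheting lemma. Making the offspring-rate estimates rigorous requires carefully tracking which search points have already been queried so that the distortions of ``fresh'' points may still be treated as independent samples from $\mathcal D$ --- a standard but technical feature of the frozen-noise model that also underlies the ``entering the trap'' step --- and one must verify that the trapping inequality is genuinely attainable under \Cref{ass:oleaineff}, uniformly over $D\in[D^{\star},d]$ and for all admissible distributions $\mathcal D$, in particular light-tailed ones where $\hat d$ (and hence $d=\hat d/2$) is only polylogarithmic and the height cannot be ratcheted arbitrarily high. The remaining ingredients --- the good event, the counting of fresh points, and the final union bounds --- are comparatively routine.
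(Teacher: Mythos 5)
Your high-level architecture (a ``kickstart'' step that lands on a suitably distorted point with sublinearly many zeros, followed by a ratchet that only ever trades up in distortion) matches the paper's, but your ratcheting lemma has a genuine gap that breaks the argument exactly in the regime the theorem is about. You claim the next accepted offspring is, w.h.p., a Hamming-$1$ neighbor of fresh distortion exceeding $D+1$, at rate $\approx (\lambda/e)\,p\,\Pr{D' > D+1}$ per generation. With frozen noise this is not a rate but a counting statement: the expected number of points in $\HL{x}{1}$ with distortion $> D+1$ is $n\,p\,\Pr{D' > D+1}$, and once $D \gg \log n/\log\sigma$ (e.g.\ already for $D = \omega(\log n)$ with the exponential distribution, where $\Pr{D'\ge D+1} = e^{-\varrho(D+1)}$) this expectation drops below $1$, so w.h.p.\ \emph{no} such neighbor exists and no amount of waiting produces one. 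Since the theorem requires reaching $d = n^{\varepsilon/16}$, the single-bit-flip ratchet stalls long before the target. The paper's Lemma~\ref{lem:probincrease} avoids this by letting the next accepted point sit at Hamming distance up to $\lfloor d\rfloor$ from $x$: for $\ell = \Theta(d)$ the count $\binom{n}{\ell}p\,\Pr{D\ge d+\ell}$ is $n^{\Omega(\ell)}$ (Lemma~\ref{lem:distortionconcentration}), the probability of an $\ell$-bit mutation is $\bigO{d}^{-d} \gg 1/\timehorizon$, and Lemmas~\ref{lem:expectedneighbors} and~\ref{lem:probabilityofincrease} show that among \emph{all} acceptable points in $\HL{x}{\le\lfloor d\rfloor}$ the higher-distortion ones dominate by $n^{\Omega(\varepsilon)}$. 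Your rate comparison would have to be redone layer by layer over $1 \le \ell \le \lfloor d\rfloor$, which is essentially the paper's computation.

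Second, you defer the frozen-noise dependency as ``standard but technical,'' but it is the central obstacle and your concentration claims (both the trapping inequality for $x^{\star}$ and the neighbor counts in the ratchet) are unjustified without a concrete mechanism: a union bound over all of $\{0,1\}^n$ is too weak, and conditioning on the trajectory biases the neighborhood of an accepted point (every previously rejected offspring near $x$ is known to have fitness at most $f(x)$). The paper resolves this by proving that only $\bigO{\log|\OneMaxLayer(k)|+\log^2 n}$ points per \OneMax-layer are ever visited (Lemma~\ref{lem:onlylogninfitnesslayers}, via a halving argument on the distorted points) and hence only an $o(1)$ fraction of any relevant $\HL{x}{s}$ is uncovered before $x$ is first sampled (Lemma~\ref{lem:onlyfewoftheneighborhoodsiuncovered}); this is what licenses treating the remaining neighbors as fresh samples. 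A smaller inaccuracy: acceptance in the \olea requires only $f(y)\ge f(x_t)$, so fitness is non-decreasing rather than strictly increasing, and accepted points need not all be distinct or previously unqueried (clones and equal-fitness undistorted points can recur); your count of ``fresh'' accepted points needs the paper's per-layer visit bound rather than monotonicity of fitness.
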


Using this, we derive the following general lower bound on the optimization time of the \olea. It shows that the \olea already gets stuck in local optima while it is still far from the optimum, that is, while the number of zeros is still $n^{1 - 4\varepsilon}$ where $\varepsilon$ is an arbitrarily small constant. 

\begin{restatable}[\olea Lower Bound]{thm}{olealowerbound}\label{thm:olealowerbound}
    Consider the \olea on $\DisOMpar$ for a distribution $\mathcal{D}$ satisfying \Cref{ass:distribution}, and $p$ satisfying \Cref{ass:oleaineff}. Fix any sufficiently small constant $\varepsilon > 0$ and let $T$ denote the number of function evaluations until the first point with less than $n^{1- 4\varepsilon}$ zeros is found. Let further $D$ be a random variable with distribution $\mathcal{D}$ and let $d \coloneqq \min\{\hat{d}/2,  n^{\varepsilon/16}\}$. Then, for any function $g(n) = \omega(1)$, we have \begin{align*}
        T \ge \min \left\{ \timehorizonvalue, \frac{1}{g(n) p \Pr{D \ge d}} \right\}
    \end{align*} with high probability.
\end{restatable}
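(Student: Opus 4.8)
The plan is to derive \Cref{thm:olealowerbound} from \Cref{thm:oleadistortionisreached} together with the slightly sharper information, read off from its proof, that the distorted point reached there still has $\ZM$-value $\Theta(n^{1-\varepsilon})$. Write $M := \min\{\timehorizon, \tfrac{1}{g(n)p\Pr{D\ge d}}\}$ for the bound to be proved. By \Cref{thm:oleadistortionisreached}, with high probability one of two cases occurs. In the first, the \olea needs at least $\timehorizon$ iterations, hence at least $\timehorizon$ function evaluations, before evaluating any point with $\ZM$-value at most $n^{1-\varepsilon}/4$; since $n^{1-4\varepsilon} < n^{1-\varepsilon}/4$ for a sufficiently small constant $\varepsilon$ and large $n$, this already gives $T \ge \timehorizon \ge M$, and I would only need to handle the second case. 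There, the \olea's parent reaches a point of distortion at least $d$ at some evaluation $\tau < T$, and (invoking the analysis behind \Cref{thm:oleadistortionisreached}) I would additionally assume that at this moment the parent $x$ has $\ZM(x) \in [n^{1-2\varepsilon}, n^{1-\varepsilon}]$; its fitness is then $f(x) = |x|_1 + D_x$ with $D_x \ge d$, and no point with fewer than $n^{1-4\varepsilon}$ zeros has been evaluated yet.

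The heart of the argument is to show that, starting from $x$, the parent is very unlikely to move during the next $M$ evaluations, and that the region $\{\,\ZM < n^{1-4\varepsilon}\,\}$ cannot be entered before the parent moves. For the latter: a single mutation turning $x$ into an offspring with fewer than $n^{1-4\varepsilon}$ zeros would have to flip at least $\ZM(x) - n^{1-4\varepsilon} = \Omega(n^{1-2\varepsilon})$ of its zero-bits, which has probability $2^{-\Omega(n^{1-2\varepsilon})}$; a union bound over the at most $M \le \timehorizon = \timehorizonvalue$ relevant evaluations makes this negligible. For the former I would use two facts: (i) since plus selection makes the parent fitness non-decreasing, a search point that was queried before can never later be accepted as a strictly improving new parent, so the parent only moves onto points queried for the first time, whose distortion is a fresh independent sample; and (ii) to be accepted, an offspring $y$ of $x$ must satisfy $|y|_1 \ge |x|_1 + D_x \ge |x|_1 + d$ if it is undistorted, and $|y|_1 + D_y \ge |x|_1 + D_x$ if it is distorted.

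Next I would bound the per-iteration probability that the parent moves by $\mathcal{O}(\lambda p \Pr{D \ge d})$. The undistorted case needs at least $\lceil d\rceil$ zero-to-one flips, of probability at most $1/\lceil d\rceil!$. In the distorted case, conditioning on $|y|_1 - |x|_1 = j$ one needs $D_y \ge d - j$; since $\ZM(x) \le n^{1-\varepsilon}$, the number of zero-to-one flips is stochastically at most $\mathrm{Bin}(n^{1-\varepsilon}, 1/n)$, so $\Pr{|y|_1 - |x|_1 = j} \le n^{-\varepsilon j}/j!$ for $j \ge 1$, while \Cref{ass:distribution} — applicable since $d \le \hat d/2 \le \hat d$ — gives $\Pr{D_y \ge d - j} \le \sigma^j \Pr{D_y \ge d}$ for $1 \le j \le d$. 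Summing, the terms with $j \ge 1$ contribute at most $(e^{\sigma n^{-\varepsilon}} - 1)\Pr{D_y \ge d} + \mathcal{O}(1/\lceil d\rceil!)$, because $\sigma n^{-\varepsilon} = n^{o(1)-\varepsilon} \to 0$, and the terms with $j \le 0$ contribute at most $\Pr{D_y \ge d}$. Finally, iterating \Cref{ass:distribution} downward gives $\Pr{D \ge d} \ge \sigma^{-\lceil d\rceil}\Pr{D \ge 0} = \exp(-n^{\varepsilon/16}\cdot o(\log n))$, whereas $1/\lceil d\rceil! = \exp(-\Theta(n^{\varepsilon/16}\log n))$; since $p$ is not too small by \Cref{ass:oleaineff}, this yields $1/\lceil d\rceil! = o(p\Pr{D\ge d})$, so the undistorted term and the $1/\lceil d\rceil!$ tail are dominated, and each iteration moves the parent with probability $\mathcal{O}(\lambda p \Pr{D \ge d})$.

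To conclude: if $\tau \ge M$ then $T > \tau \ge M$ and there is nothing to prove, so assume $\tau < M$. The evaluations in the window $[\tau, M]$ span at most $M/\lambda$ iterations, so the probability that the parent moves at all in this window is at most $\mathcal{O}\big((M/\lambda)\cdot \lambda p \Pr{D \ge d}\big) = \mathcal{O}(M p \Pr{D \ge d}) = \mathcal{O}(1/g(n)) = o(1)$, using $M \le \tfrac{1}{g(n)p\Pr{D\ge d}}$. Together with the negligible direct-jump probability this shows that, with high probability, no point with fewer than $n^{1-4\varepsilon}$ zeros is evaluated by time $M$, i.e.\ $T \ge M$; a union bound over the failure event of \Cref{thm:oleadistortionisreached}, the ``$\ZM(x) = \Theta(n^{1-\varepsilon})$'' event, and the two $o(1)$ events above finishes it. I expect the main obstacle to be the per-iteration escape bound: one must ensure the distorted-offspring contribution is $\mathcal{O}(p\Pr{D\ge d})$ rather than a factor $e^{\sigma}$ larger, and this is exactly where it is essential that the parent is still in the regime $\ZM(x) = \Theta(n^{1-\varepsilon})$, so that multiple simultaneous bit flips are rare enough to absorb the geometric-type growth that \Cref{ass:distribution} permits for the tail of $\mathcal D$.
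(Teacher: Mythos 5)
There is a genuine gap, and it comes from replacing the paper's central (and very short) observation with an unnecessary and partially incorrect substitute. The paper's proof of the second case of \Cref{thm:oleadistortionisreached} is simply this: the number of samples $\tau$ until the algorithm first \emph{queries} a point of distortion at least $d$ stochastically dominates a geometric random variable with success probability $p\Pr{D\ge d}$, since each freshly queried point independently has distortion at least $d$ with exactly that probability (and re-queried points cannot produce a new high-distortion point). Hence $\Pr{\tau \le \tfrac{1}{g(n)p\Pr{D\ge d}}} \le 1/g(n) = o(1)$, and since \Cref{thm:oleadistortionisreached} guarantees that this high-distortion point is reached \emph{before} the first point with fewer than $n^{1-4\varepsilon}$ zeros, we get $T>\tau\ge M$ with high probability and the proof is finished. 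You instead treat ``$\tau<M$'' as a live sub-case — it is in fact an event of probability $o(1)$ — and try to close it by arguing that the parent is then stuck at $x$ for $M$ further evaluations. That escape bound is wrong in part of the admissible parameter range: when $\hat d$ is a bounded constant, so that $d=\hat d/2$ is constant, a single offspring escapes via an \emph{undistorted} point with $\lceil d\rceil$ extra ones with probability $\Theta\mkern-2mu\left((k/n)^{\lceil d\rceil}\right)=n^{-\Theta(\varepsilon)}$, which is not $\bigO{p\Pr{D\ge d}}$ once $p$ is small (\Cref{ass:oleaineff} allows $p$ barely above $1/(n\log n)$ while $\Pr{D\ge d}=\Theta(1)$ here). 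Your own comparison ``$1/\lceil d\rceil!=o(p\Pr{D\ge d})$'' is only valid when $d\to\infty$; for constant $d$ the parent does move well within $M$ iterations, so the stuck-parent route cannot deliver the theorem in that regime.

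Two smaller points. First, you import from the proof of \Cref{thm:oleadistortionisreached} that the high-distortion point satisfies $\ZM(x)\in[n^{1-2\varepsilon},n^{1-\varepsilon}]$; what that proof actually yields is only $\ZM(x)\in[n^{1-4\varepsilon},n^{1-\varepsilon}]$ (more carefully, $\ZM(x)\ge n^{1-3\varepsilon}(1-o(1))$), which is not part of the theorem's statement and would have to be re-derived — your direct-jump estimate survives with the weaker exponent, but this is extra machinery the theorem does not need. Second, your handling of frozen noise via monotonicity of the parent fitness is essentially sound for a distorted parent (ties have measure zero), but again superfluous. The fix is not to patch the stuck-parent analysis but to discard it: the lower bound comes from the time needed to \emph{reach} a point of distortion $d$ at all, not from the time spent trapped there afterwards.
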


The proof of this theorem essentially relies on \Cref{thm:oleadistortionisreached} which already shows that we either spend $\timehorizonvalue$ steps far from the optimum, or find a point of large distortion, which requires $(g(n) p \Pr{D \ge d})^{-1}$ steps w.h.p. as said number clearly dominates a geometric random variable with expectation $(p \Pr{D \ge d})^{-1}$.

Note that \Cref{thm:olealowerbound} result only gives a lower bound on the time required to reach a point close to the all-ones-string, but not a lower bound on the time required to find the first point of $\DisOMpar$-fitness at least $n- k^*$. However, even in the latter case we can show a similar statement, which we capture in the following remark.

\begin{remark}
    \Cref{thm:olealowerbound} also applies to the time required to find a point of \DisOMpar-fitness $\ge n - k^*$. To see this, note that the above theorem shows that we need time $\min\{\exp(n^{\varepsilon/8}), (p\Pr{D \ge d})^{-1}\}$ before finding the first point with less than $n^{1 - 4\varepsilon}$ zeros w.h.p. In order to reach \DisOMpar-fitness $\ge n - k^*$ during this time, we would need to find a point of distortion at least $d' \coloneqq n^{1-4\varepsilon} - k^*$, which requires roughly $(p\Pr{D \ge d'})^{-1}$ steps, which is already greater than $(p\Pr{D \ge d})^{-1}$ for sufficiently small $\eps >0$.
\end{remark}

We sketch the implications of this general result for some common distributions used in practice. 

\paragraph{\textbf{Exponential Distribution}} Our illustrating example throughout this paper will be the exponential distribution where $\Pr{D \ge d} = \exp(-\varrho d)$ for some constant $\varrho > 0$. Here, we obtain a stretched exponential lower bound for the \olea. We remark that it is not hard to extend this result to the case of Laplacian noise as our proof techniques carry over even if some points have negative distortion. 
\begin{restatable}[]{cor}{expbound}
    Consider the \olea on \DisOMpar under \Cref{ass:oleaineff} with $\mathcal{D}$ being an exponential distribution. Then with high probability, the number of fitness evaluations $T$ before the first point with less than $n^{1 - 4\varepsilon}$ zeros is found satisfies \begin{align*}
        T \ge \exp(n^{\Omega(1)}).
    \end{align*}
    \label{cor:expbound}
\end{restatable}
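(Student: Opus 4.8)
The plan is to obtain this as a direct corollary of \Cref{thm:olealowerbound}. First I would check that the exponential distribution satisfies \Cref{ass:distribution}: with $\Pr{D \ge d} = \exp(-\varrho d)$ we have $\Pr{D \ge d}/\Pr{D \ge d+1} = e^{\varrho}$, a constant and hence $n^{o(1)}$, and this holds for \emph{all} $d > 0$, so we may take $\hat{d} = \infty$; moreover $\Pr{D \ge 0} = 1 = \Omega(1)$. Since the corollary already assumes \Cref{ass:oleaineff} for $p$, all hypotheses of \Cref{thm:olealowerbound} are in place, with $\varepsilon$ the same sufficiently small constant appearing in the target $n^{1-4\varepsilon}$.

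Second, because $\hat{d} = \infty$, the quantity $d$ in \Cref{thm:olealowerbound} is $d = \min\{\hat{d}/2,\, n^{\varepsilon/16}\} = n^{\varepsilon/16}$, so $\Pr{D \ge d} = \exp(-\varrho n^{\varepsilon/16})$. Substituting this and using $p \le 1$, \Cref{thm:olealowerbound} yields, for any $g(n) = \omega(1)$,
\[
  T \ge \min\left\{ \exp(n^{\varepsilon/8}),\ \frac{1}{g(n)}\exp\!\left(\varrho n^{\varepsilon/16}\right) \right\}
\]
with high probability.

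Third, I would instantiate $g$ with a slowly growing choice such as $g(n) = \log n = \omega(1)$. Then $\frac{1}{g(n)}\exp(\varrho n^{\varepsilon/16}) = \exp(\varrho n^{\varepsilon/16} - \log\log n) \ge \exp\!\big(\tfrac{\varrho}{2} n^{\varepsilon/16}\big)$ for all sufficiently large $n$. Since $\varepsilon/16 < \varepsilon/8$, this stretched-exponential term is the smaller of the two in the minimum, so $T \ge \exp\!\big(\tfrac{\varrho}{2} n^{\varepsilon/16}\big) = \exp(n^{\Omega(1)})$ with high probability, which is the claim.

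Regarding difficulty: there is essentially no obstacle, as the corollary is a plug-in of the exponential tail into the general lower bound. The only points needing (minor) care are verifying \Cref{ass:distribution} with $\hat{d} = \infty$ so that $d = n^{\varepsilon/16}$ is unbounded rather than capped, absorbing the harmless factor $1/p \ge 1$, picking $g$ slow enough that $\log g(n)$ is negligible against $\varrho n^{\varepsilon/16}$, and noting that $\varepsilon/16 < \varepsilon/8$ so that the $\min$ resolves in favour of the stretched-exponential term.
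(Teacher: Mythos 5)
Your proposal is correct and follows exactly the paper's route: verify \Cref{ass:distribution} for the exponential tail (trivially, with $\hat{d}=\infty$ and $\sigma=e^{\varrho}$), plug $d=n^{\varepsilon/16}$ into \Cref{thm:olealowerbound}, and observe that the resulting $\min\{\exp(n^{\varepsilon/8}),\exp((1-o(1))\varrho n^{\varepsilon/16})\}$ is $\exp(n^{\Omega(1)})$. The paper's proof is a one-liner doing the same; your extra care with $1/p\ge 1$ and the choice of $g$ is fine but not needed beyond what the $(1-o(1))$ factor already absorbs.
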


\paragraph{\textbf{Gaussian Distribution}} In the case where $\mathcal{D}$ is a Gaussian distribution with density \begin{align*}
    \rho(d) = \frac{1}{s \sqrt{2\pi}} \exp\left( - \frac{1}{2} \left( \frac{d}{s} \right)^2 \right), 
\end{align*} where $s$ is the standard variation, we still obtain a super-polynomial lower bound. The reason for this is that we can show that it satisfies \Cref{ass:distribution} as long as $d = o(\log(n))$, so our theorems show that we reach distortion $d \ge \frac{\log(n)}{\log\log(n)}$ w.h.p. and the probability of sampling a such distortion from $\mathcal{D}$ is \begin{align*}
    \Pr{D \ge \frac{\log(n)}{\log\log(n)}} \le \exp\left( -\Omega\left(\frac{\log(n)}{\log\log(n)} \right)^2 \right) = n^{-\omega(1)}.
\end{align*} Hence, we obtain the following corollary.

\begin{restatable}{cor}{gaussbound}
    Consider the \olea on \DisOMpar under \Cref{ass:oleaineff} with $\mathcal{D}$ being a Gaussian distribution. Then with high probability, the number of fitness evaluations $T$ before the first point with less than $n^{1 - 4\varepsilon}$ zeros is found satisfies \begin{align*}
        T \ge n^{\Omega(\log(n) / (\log\log(n)^2))} = n^{\omega(1)}.
    \end{align*}
\end{restatable}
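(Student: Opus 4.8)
The plan is to derive the corollary directly from \Cref{thm:olealowerbound}; the only real work is to check that a fixed Gaussian $\mathcal N(0,s^2)$ satisfies \Cref{ass:distribution} for a suitable (possibly $n$-dependent) choice of $\hat d$ and some $\sigma = n^{o(1)}$, and then to simplify the resulting bound. The requirement $\Pr{D \ge 0} = 1/2 = \Omega(1)$ is immediate, and about $p$ we only ever need $p \le 1$, since a larger $p$ merely weakens the conclusion $T \ge (g(n)\,p\,\Pr{D \ge d})^{-1}$.

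To verify the tail-ratio condition I would use the standard Mills-ratio estimates: for $Z \sim \mathcal N(0,1)$ and $x > 0$,
\begin{align*}
  \frac{x}{\sqrt{2\pi}\,(x^2+1)}\, e^{-x^2/2} \;\le\; \Pr{Z \ge x} \;\le\; \frac{1}{\sqrt{2\pi}\,x}\, e^{-x^2/2}.
\end{align*}
Writing $\Pr{D \ge d} = \Pr{Z \ge d/s}$ and dividing the upper bound at $d$ by the lower bound at $d+1$ gives, for all $d \ge 1$,
\begin{align*}
  \frac{\Pr{D \ge d}}{\Pr{D \ge d+1}} \;\le\; \frac{(d+1)^2+s^2}{d\,(d+1)}\, \exp\!\left( \frac{2d+1}{2s^2} \right) \;=\; \Theta(1)\cdot \exp\!\left( \frac{2d+1}{2s^2} \right),
\end{align*}
where the $\Theta(1)$ is uniform in $d$ because $s$ is a fixed constant; for $d \in [0,1)$ the ratio is bounded by a constant by continuity (the CDF is continuous and $\Pr{D \ge d+1} \ge \Pr{D \ge 2} > 0$). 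Choosing $\hat d \coloneqq 2\log n/\log\log n$, the right-hand side is $\exp(\Theta(\log n/\log\log n)) = n^{\Theta(1/\log\log n)} = n^{o(1)}$ for every $d \le \hat d$, so \Cref{ass:distribution} holds with this $\hat d$ and some $\sigma = n^{o(1)}$ (the assumption permits $\hat d$ and $\sigma$ to depend on $n$; the distribution itself stays fixed).

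Now \Cref{thm:olealowerbound} applies with $d = \min\{\hat d/2, n^{\varepsilon/16}\} = \log n/\log\log n$ for $n$ large, and the upper Mills bound yields
\begin{align*}
  \Pr{D \ge d} \;\le\; \exp\!\left( -\Theta\!\left( \frac{\log^2 n}{(\log\log n)^2} \right) \right) \;=\; n^{-\Theta(\log n/(\log\log n)^2)}.
\end{align*}
Taking $g(n) = \log n = \omega(1)$ and using $p \le 1$, the second term in the minimum of \Cref{thm:olealowerbound} is $(g(n)\,p\,\Pr{D \ge d})^{-1} \ge (\log n)^{-1}\,\Pr{D \ge d}^{-1} = n^{\Omega(\log n/(\log\log n)^2)}$, and the first term $\timehorizonvalue$ is trivially even larger (its exponent $n^{\varepsilon/8}$ dominates any polylogarithm). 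Hence $T \ge n^{\Omega(\log n/(\log\log n)^2)} = n^{\omega(1)}$ with high probability, as claimed.

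The only delicate step is the verification of \Cref{ass:distribution}: one must check the ratio bound for \emph{all} $d \le \hat d$, in particular on the small-$d$ range where the Mills asymptotics are loose and one argues by continuity/compactness, and one must track the constant in the exponent carefully enough that $\sigma$ really comes out as $n^{o(1)}$ and not $n^{\Omega(1)}$ — the latter would force a smaller $\hat d$ and thus a weaker final exponent. Everything else is bookkeeping on top of the already-proven \Cref{thm:olealowerbound}, together with the routine rewriting of $\exp(-\Theta(\log^2 n/(\log\log n)^2))$ as a power of $n$.
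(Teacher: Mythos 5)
Your proposal is correct and follows essentially the same route as the paper: verify \Cref{ass:distribution} for the Gaussian up to $\hat d = \Theta(\log n/\log\log n)$, then feed $d = \hat d/2$ into \Cref{thm:olealowerbound} and bound the Gaussian tail to get $\Pr{D \ge d} = n^{-\Theta(\log n/(\log\log n)^2)}$. The only cosmetic difference is that you verify the tail-ratio condition via Mills-ratio bounds while the paper compares consecutive unit integrals of the density directly; both yield $\sigma = \exp(\Theta(d/s^2)) = n^{o(1)}$ for $d = o(\log n)$.
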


\paragraph{\textbf{Pareto Distribution}} The third example we consider is the case where $\mathcal{D}$ is a Pareto distribution where \begin{align*}
    \Pr{D \ge x} = \left( \frac{x}{x_0} \right)^{1-\tau}
\end{align*} for $x \in [x_0, \infty]$ and $\tau > 2$. That is, the tail decays polynomially with exponent $\tau - 1$. We show that in this case, the run time is at least polynomial with an exponent that grows with $\tau$.

\begin{restatable}{cor}{paretobound}
    Consider the \olea on \DisOMpar under \Cref{ass:oleaineff} with $\mathcal{D}$ being a Pareto distribution. Then, for every sufficiently small $\varepsilon > 0$, there is a constant $c > 0$ such that for every function $g(n) = \omega(1)$, the number of fitness evaluations until the first point with less than $n^{1 - 4\varepsilon}$ zeros is found satisfies \begin{align*}
        T  \ge \frac{n^{c(\tau-1)}}{g(n)p}
    \end{align*} with high probability.
\end{restatable}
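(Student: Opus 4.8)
The plan is to specialize the general bound of \Cref{thm:olealowerbound} to the Pareto tail $\Pr{D \ge x} = (x/x_0)^{1-\tau}$, so the only real work is (i) verifying that a Pareto distribution meets \Cref{ass:distribution}, and (ii) plugging the tail into the bound $T \ge \min\{\timehorizonvalue, (g(n)p\Pr{D\ge d})^{-1}\}$ and checking which of the two terms dominates. First I would check \Cref{ass:distribution}: since $\hat d = \infty$ here, I need $\sigma = n^{o(1)}$ with $\Pr{D \ge d} \le \sigma \Pr{D \ge d+1}$ for all $d > 0$. For the Pareto tail this ratio is $((d+1)/d)^{\tau-1}$, which is maximized near $d = x_0$ and bounded by the absolute constant $((x_0+1)/x_0)^{\tau-1}$ (recall $\tau$ and $x_0$ are constants independent of $n$); so $\sigma = O(1) = n^{o(1)}$ works. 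Also $\Pr{D \ge 0} = 1 = \Omega(1)$ trivially (or $=\Omega(1)$ after the obvious adjustment if one wants $\Pr{D\ge 0}$ rather than $\Pr{D \ge x_0}$). Hence \Cref{thm:olealowerbound} applies.

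Next, with $d = \min\{\hat d/2, n^{\varepsilon/16}\} = n^{\varepsilon/16}$ (since $\hat d = \infty$), the tail probability is
\begin{align*}
  \Pr{D \ge d} = \left(\frac{n^{\varepsilon/16}}{x_0}\right)^{1-\tau} = x_0^{\tau-1}\, n^{-\varepsilon(\tau-1)/16}.
\end{align*}
Plugging this into \Cref{thm:olealowerbound}: the second term of the minimum is
\begin{align*}
  \frac{1}{g(n)\, p\, \Pr{D \ge d}} = \frac{n^{\varepsilon(\tau-1)/16}}{x_0^{\tau-1}\, g(n)\, p},
\end{align*}
which, for any fixed $\varepsilon$ and $\tau$, is of the form $n^{\Theta(\tau-1)}/(g(n)p)$ up to the constant factor $x_0^{\tau-1}$, and in particular is polynomial in $n$. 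This is asymptotically much smaller than $\timehorizonvalue = \exp(n^{\varepsilon/8})$, so the minimum is attained by the second term for all large $n$. Setting $c \coloneqq \varepsilon/16$ (any constant strictly smaller works after absorbing the $x_0^{\tau-1}$ factor, so to be safe I would pick $c$ slightly below $\varepsilon/16$, e.g.\ $c = \varepsilon/17$, which dominates $x_0^{\tau-1}\,n^{-\varepsilon(\tau-1)/16}$ for large $n$), we obtain $T \ge n^{c(\tau-1)}/(g(n)p)$ with high probability, as claimed.

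The only mildly delicate point — the ``main obstacle,'' such as it is — is the constant bookkeeping in step (i) and in the final line: one must make sure $\sigma$ really is $n^{o(1)}$ uniformly over the relevant range of $d$ (for Pareto it is even $O(1)$, so this is fine), and that the constant $x_0^{\tau-1}$ appearing in the tail does not eat into the claimed exponent, which is why the statement only claims the existence of \emph{some} constant $c>0$ rather than pinning down $c = \varepsilon/16$. Everything else is a direct substitution into the already-established \Cref{thm:olealowerbound}.
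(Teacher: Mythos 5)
Your proposal is correct and follows essentially the same route as the paper: verify that the Pareto tail ratio $((d+1)/d)^{\tau-1}$ is bounded by the constant $(1+1/x_0)^{\tau-1}$ so that \Cref{ass:distribution} holds with $\sigma=O(1)$ and $\hat d=\infty$, then substitute $d=n^{\varepsilon/16}$ into \Cref{thm:olealowerbound} and note the polynomial term wins the minimum. The only cosmetic difference is that you absorb the constant $x_0^{\tau-1}$ by shrinking $c$ slightly, whereas the paper absorbs it into $g(n)$ via $x_0^{1-\tau}g(n)=\omega(1)$ and keeps $c=\varepsilon/16$; both are fine.
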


We remark that we did not optimize the constants involved in the above results. This is especially true for the maximally reachable distortion in \Cref{thm:oleadistortionisreached}. We leave a further strengthening of our results in this regard open for future work.

\paragraph{\textbf{The \oclea remains efficient}}

On the other hand, we show in \Cref{sec:oclea} that the \oclea remains efficient on \DisOMpar under the same assumptions on the parameters $\lambda, p,$ and $k^*$ as in \cite{Jorritsma_Lengler_Sudholt_2023} (these are formally stated and explained in the next section). The analysis here is largely identical to the one in \cite{Jorritsma_Lengler_Sudholt_2023} with some technical adjustments.
\begin{restatable}[\oclea Upper Bound]{thm}{ocleaupperbound}\label{thm:ocleabound}
    Consider the \oclea on \DisOMpar under \Cref{ass:params} and let $T$ be the number of function evaluations until the first point with fitness $n - k^*$ is reached. Then, with high probability, \begin{align*}
        T = \Theta(n \log(n))
    \end{align*}
\end{restatable}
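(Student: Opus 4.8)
The guiding principle is that a comma strategy can never get \emph{permanently} stuck, because it always discards the parent: as long as at least one of the $\lambda$ offspring of the current search point $x_t$ is an exact mutation-copy of $x_t$ --- which, under \Cref{ass:params}, happens with probability $1-(1-1/e)^\lambda = 1-n^{-\omega(1)}$ --- the \DisOMpar-fitness does not decrease in that generation, and moreover the selected offspring is at least as fit as the fittest offspring would be on plain \onemax. I would therefore track the potential $\Psi_t := n - \DisOMpar(x_t) = \ZM(x_t) - d_t$ (where $d_t \ge 0$ is the distortion of $x_t$, and $d_t=0$ if $x_t$ is undistorted); the stopping time $T$ in the statement is exactly the hitting time of $\{\Psi_t \le k^*\}$. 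On the event that a mutation-copy is always present, $\Psi_t$ is non-increasing, and the aim is to show that, conditioned on a suitable high-probability ``good'' event, $\Psi_t$ decreases with essentially the same (roughly multiplicative) drift $\Omega(\lambda\Psi_t/n)$ as the corresponding quantity for the \oclea on \onemax; a multiplicative-drift theorem with tail bounds then yields the upper bound $T=\bigO{n\log n}$ w.h.p. The matching lower bound $\Omega(n\log n)$ is obtained as in~\cite{Jorritsma_Lengler_Sudholt_2023}: within any polynomially long window, every distortion ever seen is $\bigO{\log n}$ w.h.p.\ (since $\mathcal D$ is fixed), so attaining \DisOMpar-fitness $n-k^*$ forces a point with at least $n-k^*-\bigO{\log n}$ one-bits, which under \Cref{ass:params} already costs $\Omega(n\log n)$. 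Since the overall architecture of the argument is that of~\cite{Jorritsma_Lengler_Sudholt_2023}, the substance is in the modifications forced by the distortions now being random rather than a fixed constant.

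Concretely I would proceed in three steps. First, set up the ``good'' event for a polynomially long window: (i) every generation contains a mutation-copy of its parent, and (ii) every distorted search point ever evaluated has distortion at most $d_{\max} := \bigO{\log n}$. Part (i) follows from \Cref{ass:params} by a union bound; part (ii) holds because $\mathcal D$ is fixed, so $\Pr{D \ge d_{\max}}$ is smaller than any polynomial while only $n^{\bigO{1}}$ search points are evaluated --- this step replaces the trivial observation ``all distortions equal $D$'' of~\cite{Jorritsma_Lengler_Sudholt_2023} and is where \Cref{ass:distribution} (or merely ``$\mathcal D$ fixed'') enters. Second, analyse one generation on the good event. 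If $x_t$ is undistorted, the generation is distributed exactly like one generation of the \oclea on \onemax, except that some offspring may additionally be distorted; distorted offspring only enlarge the candidate set and can only raise the fitness of the selected individual, so as long as the algorithm stays undistorted, $\ZM(x_t)$ is dominated from above by the \oclea-on-\onemax deficit, giving the drift $\Omega(\lambda\ZM(x_t)/n) = \Omega(\lambda\Psi_t/n)$. Each generation, the selected offspring is itself distorted --- a ``distorted episode'' begins --- only with probability $\bigO{\lambda p}$, and when it does, the episode's initial distortion is stochastically dominated by a sample from $\mathcal D$ conditioned to be nonnegative, hence $\bigO{1}$ in expectation and $\le d_{\max}$ w.h.p. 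The crucial point is that, unlike under plus selection, the episode cannot spiral into ever-higher planted optima: whenever the mutation-copy fails to appear the algorithm is \emph{forced} off the local optimum, and one shows that every episode terminates within a controlled number of generations --- either by flipping enough zero-bits, or by this forced move --- all the while with $\Psi$ non-increasing, and in fact without losing \ZM-progress relative to the point at which the episode started. Third, feed these per-generation estimates into a drift theorem, using \Cref{ass:params} to guarantee that the $\bigO{\lambda p}$-rate contribution of distorted episodes never dominates the $\Omega(\lambda\Psi_t/n)$ drift on the whole range $\Psi_t \in (k^*, n]$, and conclude $T = \bigO{n\log n}$ w.h.p.; together with the lower bound this gives $T = \Theta(n\log n)$.

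\textbf{Main obstacle.} The delicate part is the episode analysis of the second step: showing that the comma strategy escapes every planted optimum \emph{fast} and re-enters one only \emph{rarely}, \emph{uniformly} down to \ZM-value $k^*$ --- the regime where $\ZM(x_t)$ is smallest, the drift $\Omega(\lambda\ZM(x_t)/n)$ is weakest, and the algorithm is hence most sensitive to the correction terms and to the risk that a string of increasing distortions is reached before an exit occurs. This is exactly where \Cref{ass:params} (jointly bounding $p$, $\lambda$ and $k^*$) must be used with care, and where the technical adjustments over~\cite{Jorritsma_Lengler_Sudholt_2023} live: with a constant distortion $D$ the escape probability from a planted optimum is a fixed power of $\ZM(x_t)/n$, whereas here one must control the entire random (and only $\le d_{\max}$) distribution of the distortions that are actually encountered, arguing that the rare large distortions --- which, just as for plus selection, would be genuinely hard to escape --- are never reached within the run, while the typical $\bigO{1}$ ones cost only negligible overhead.
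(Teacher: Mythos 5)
There is a genuine gap, and it sits at the very foundation of your argument. You claim that a mutation-copy of the parent is present among the $\lambda$ offspring with probability $1-(1-1/e)^\lambda = 1-n^{-\omega(1)}$. But $(1-1/e)^\lambda = \eta^{-\lambda} = q$, and \Cref{ass:params} places us precisely in the regime $q \ge p^{1-\eps} \ge n^{-1+\Omega(1)}$: the no-clone probability is polynomially large, not negligible. Over the $\Theta(n\log n)$ generations you need to control, the no-clone event occurs $\Theta(q\,n\log n) = n^{\Omega(1)}$ times w.h.p., so your good event (i) fails w.h.p. and cannot be union-bounded away. This is not a repairable constant: the whole reason the \oclea beats the \olea here is that no-clone generations are \emph{frequent} enough to force the algorithm off local optima, and your own episode-termination mechanism relies on exactly these events. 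Your write-up is therefore internally inconsistent — it conditions on clones always being present while simultaneously using their absence to escape. Relatedly, $\Psi_t = \ZM(x_t)-d_t$ is the wrong potential: at a distorted point with distortion $d$, a clone keeps $\Psi$ constant (zero positive drift, since beating the parent requires $Y^* > d$, which is vanishingly unlikely for $\ZM = o(n)$), while the escape event that you need \emph{increases} $\Psi$ by roughly $d$. So $\Psi$ has nonpositive drift exactly where progress must be made, and "$\Psi$ non-increasing throughout an episode" is false. The paper resolves this by working with $P(x) = \ZM(x) + \mathds{1}(x\ \text{distorted})\tfrac{\delta}{\lambda p}\Expected{Y^*(\ZM(x))}$ on an auxiliary dynamic function: the penalty term makes escaping a distorted point a potential \emph{decrease} of order $\tfrac{q}{\lambda p}\Expected{Y^*} = \omega(\Expected{Y^*})$, and entering one a rare ($\le \lambda p$ per generation) bounded increase, which is exactly the bookkeeping your episode intuition requires but your potential cannot express.

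A secondary but real problem is your good event (ii): for a fixed distribution $\mathcal D$ it is not true that $\Pr{D\ge c\log n}$ is smaller than every polynomial — \Cref{ass:distribution} explicitly admits Pareto tails, for which the maximum distortion among $n^{O(1)}$ samples is polynomial in $n$, not $O(\log n)$. This breaks both your $d_{\max}$ truncation in the upper bound and your lower-bound step "fitness $n-k^*$ forces $\OM \ge n-k^*-O(\log n)$". The correct intuition you do have — episodes end at rate $\approx q$ and begin at rate $\le \lambda p$ with $q=\omega(\lambda p)$, so time in episodes is negligible and chaining to higher distortions is rare — is precisely what the paper's potential encodes, but your proposal as written does not implement it.
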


We confirm our theoretical results with some empirical results on the run time difference between the \olea and \oclea on \DisOMpar which we present in \Cref{sec:experiments}.

\paragraph{\textbf{Proof Techniques}}

The largest part of this work concentrates on proving \Cref{thm:oleadistortionisreached} from which most of our results follow. On a high level, we show that if the \olea ever comes close to the optimum then it is very likely to reach a point $x$ whose distortion is a sufficiently large constant and $\ZM(x) \approx n^{1 - \varepsilon}$. This event ``kickstarts'' our process: since the number of zeros is sub-linear in $n$, it is very likely that we find a point of larger distortion before decreasing the number of ones. The reason for this is that \Cref{ass:distribution} ensures that the CDF of $\mathcal{D}$ decays slower than the probability of sampling an offspring that has more ones than the current parent, i.e., the probability of increasing the number of ones decays faster than the probability of increasing the distortion. This discrepancy is so large that we can easily increase our distortion to $\min\{\hat{d}/2, n^{\Omega(1)}\}$ without even once decreasing it. 

The main difficulty in establishing this result is that we are dealing with \emph{frozen noise}, i.e., we only sample the distortion of a point once, regardless of how many times the algorithm ``sees'' it, and we do not get ``fresh randomness'' at every visited point. This may induce dependencies between the points the algorithm visits and the distribution of distorted points in the neighborhood of a visited point, which make it difficult to establish a formal argument. 
The problem here is that (1) using a union bound over all search points in $\{0, 1\}^n$ is too weak for our purposes, and that (2) the neighborhood of a visited point may depend on the points visited and sampled so far, so we cannot simply apply the union bound to only those points we visit.
We develop a novel technique to circumvent this problem by showing that during the first $\timehorizon = \timehorizonvalue$ iterations, the algorithm never explores a large fraction of the neighborhood of a point sufficiently close to the optimum before sampling said point itself. This allows us to conclude that the neighborhood of a visited point $x$ is with high probability largely unexplored when we first visit $x$, so we may defer uncovering the distortion of most points close to $x$ until we visit $x$ itself. We refer the interested reader to \Cref{sec:exploration} where this is formalized.

\subsection{Parameter Setup}\label{sec:parameters}

We do not consider the time to find the optimum, but the time to find some target fitness $n-k^*$. This is customary in the context of (frozen) distortions or (non-frozen) noise~\cite{prugel2015run,Jorritsma_Lengler_Sudholt_2023,FriedrichKNR22} and has two reasons for us: first, the global optimum may be attained by a distorted point which can only be found by some exhaustive search among many candidates. Second, we want to balance two requirements on $\lambda$: on the one hand, if $\lambda$ is too small then the \oclea is not even able to find the target fitness efficiently on \onemax. On the other hand, if $\lambda$ is too large, then each generation is very likely to create a clone of the parent, and the \oclea loses its ability to escape local optima. These two requirements are easier to balance in a fixed-target setting with goal $n-k^*$, instead of asking for the time to find the global optimum. 

We continue by describing which parameter choices for $\lambda, k^*$, and $p$ are reasonable in our setting. These are essentially the same as in~\cite{Jorritsma_Lengler_Sudholt_2023}.

\paragraph{The population size $\lambda$.} We use the abbreviation $\eta = e / (e - 1)$ such that $\eta^{-1}$ is approximately equal to the probability of not cloning the parent when creating a single offspring. We further define $q \coloneqq \eta^{-\lambda}$ to denote the approximate probability of not cloning the parent when creating $\lambda$ offspring. This is a central quantity in the theory of the \oclea as it determines the probability of escaping a local optimum. Furthermore, it is easy to see that if we choose $\lambda$ too large and thus $q$ too small (that is $\lambda \ge C \log(n)$), then the \oclea mimics the behavior of the \olea for the first $n^{\Omega(1)}$ iterations. We therefore consider the case where $\lambda$ is such that $q \ge n^{-1 + \varepsilon}$, i.e., \begin{align*}
    \lambda \le (1 - \varepsilon)\log_\eta(n)
\end{align*} since this is known to be the regime where the \oclea and the \olea behave differently \cite{Lehre_2011}. 

\paragraph{The target $n - k^*$.} 
%
The reason why we consider optimization to $n - k^*$ instead of $n$ is that the \oclea is inefficient in finding the unique optimum of any target function with unique optimum (such as \OneMax) \cite{Rowe_Sudholt_2014}, but it is efficient in reaching a search point with fitness at least $n - k^*$ as long as \begin{align*}
    \lambda \ge (1 + \varepsilon)\log_{\eta}(n/k^*) 
\end{align*} 
for some constant $\eps >0$, as shown in \cite{Antipov_Doerr_Yang_2019}. In total this means that we assume $(1 + \varepsilon)\log_{\eta}(n/k^*) \le \lambda \le (1 - \varepsilon)\log_\eta(n)$, which is a non-empty range if $k^* = n^{\Omega(1)}$.

\paragraph{The distortion probability $p$} It is not hard to see that $p$ cannot be chosen too small, otherwise (more precisely if $p = o(1 / (n \log(n)))$), the \olea does not sample any distorted point w.h.p. and thus remains fast. To make our results regarding the inefficiency of the \olea applicable, we require a slightly stronger assumption, which is needed to ensure that we reach a point of sufficiently large distortion sufficiently close to the optimum, which is needed to ``kickstart'' our process or reaching points of higher and higher distortion. 

\begin{assumption}[Sufficient for Inefficiency of the \olea]\label{ass:oleaineff}
    Let $\varepsilon > 0$ be any constant. We assume that there is a constant $C > 0$ such that $\lambda \le C \log(n)$, and that for \textbf{all} constants $c > 0$ \begin{align*}
        p \sigma^{-c} = \omega(1 / (n \log(n))),
    \end{align*}
where $\sigma = n^{o(1)}$ is the parameter from~\Cref{ass:distribution}. In particular, this assumption is satisfies when $p\ge n^{-1+\eps}$ for some constant $\eps>0$.
\end{assumption}

We will further make two additional assumptions for the sake of technical simplicity that will facilitate our proof that the \oclea is efficient on \DisOMpar. These are the same assumptions as in~\cite{Jorritsma_Lengler_Sudholt_2023}. First, we assume that $p = o(k^*/n)$ to ensure that -- at the target fitness -- going closer to the optimum is more likely than sampling another distorted point. Secondly, we make the assumption that $q = \omega(p \lambda)$ to ensure that if there is no clone of the current individual, it is likely to not sample a second distorted point. To make matters simpler, we even require the slightly stronger condition $q \ge p^{1 - \varepsilon}$. We further need the condition $q \le (k^*/n)^{1 + \varepsilon}$ to ensure a sufficiently large population size to be efficient for optimizing to target $n- k^*$ on \OneMax. Hence, given a suitable value of $k^*$ and $\lambda$ to ensure that the \oclea is efficient on \OneMax, we additionally require $p$ to be sufficiently small to further ensure efficiency on $\DisOMpar$. We remark that the latter is likely not necessary, and we leave a study of the \oclea without it open to future work. Summarizing, we require the following for our analysis of the \oclea.

\begin{assumption}[Sufficient for Efficiency of the \oclea]\label{ass:params}
    Let $\varepsilon > 0$ be any constant and let $k^* = n^{\Omega(1)}$. We assume that \begin{align*}
        p^{1 - \varepsilon} \le q \le (k^*/n)^{1 + \varepsilon},
    \end{align*} or equivalently \begin{align*}
        (1+\varepsilon)\log_{\eta}(n/ k^*) \le \lambda \le (1-\varepsilon)\log_\eta(1 / p).
    \end{align*}
\end{assumption}

\section{Preliminaries}

\paragraph{General Notation.}
We denote by $[n]$ the set of natural numbers at most $n$, i.e., $[n] = \{1, \ldots, n\}$, and we use standard Landau notation for expressing asymptotic behavior with respect to $n$. Furthermore, we sometimes use the notation $\bigOTilde{g(n)}$ to hide (poly-)logarithmic factors, i.e., $f(n) = \bigOTilde{g(n)}$ if there is a constant $c >0$ such that $f(n) \le \log^c(n) g(n)$ for sufficiently large $n$. All logarithms are with base $e$ if not indicated otherwise. An event holds \emph{with high probability} (w.h.p.) if its probability tends to $1$ for $n\to\infty$. 

We further denote by $\{0, 1\}^n$ the set of all bit strings, and for any $x \in \{0,1\}^n$, $\OM(x), \ZM(x)$ denotes the number of one-bits and zero-bits of $x$, respectively.
For two bit-strings $x, y \in \{0,1\}^n$, we denote by $\HD{x}{y}$ the Hamming-distance of $x$ and $y$, i.e., the number of positions in which $x, y$ differ. We further denote by $\HL{x}{\ell} = \{ y \in \{0,1\}^n \mid \HD{x}{y} = \ell \}$ the set of all bit strings within Hamming-distance $\ell$ of $x$. We call the set $\HL{x}{\ell}$ the \emph{Hamming-layer} $\ell$ of $x$. We further denote by $\HL{x}{\le \ell} = \bigcup_{i=1}^\ell \HL{x}{i}$ the union of all Hamming-layers within Hamming-distance at most $\ell$.  

\paragraph{Algorithms}

We consider the \olea and the \oclea for $\lambda \in \mathbb{N}$. See \Cref{alg:olea} and \Cref{alg:oclea} for pseudocode. Both algorithms aim to maximize a fitness function $f$ and maintain a solution candidate $x_t$. In each iteration, they create $\lambda$ offspring independently using standard mutation that flips each bit independently with probability $1/n$. The only difference is that the \olea updates its solution candidate only if there is an offspring of fitness at least $f(x_t)$ whereas the \oclea always chooses $x_t$ to be the offspring of largest fitness among all offspring generated in the previous iteration (ties are broken uniformly at random in both algorithms).

\SetKwComment{Comment}{}{}

\begin{algorithm}
\caption{\olea for maximizing fitness function $f$ to target $n - k^*$ }\label{alg:olea}
\kwInit{Choose $x_0 \in \{0,1\}^n$ uniformly at random.}
\While{$f(x_t) < n - k^*$}{
    \kwMut{\For{$i = 1$ \KwTo $\lambda$}{
        $y_t^{(i)} \gets \text{mutate}(x_t)$ by flipping each bit in $x_t$ independently with probability $1/n$.
        }
    }
    \kwSel{
        $i_{\text{opt}} \gets \argmax_{i \in [\lambda]} f(y_t^{(i)})$ breaking ties uniformly at random\\
        \hspace{1.45cm} \textbf{if } $f(y_t^{(i_\text{opt})}) \ge f(x_t)$ \textbf{ then} $x_{t+1} \gets y_t^{(i_\text{opt})}$
    }
  }
\end{algorithm}

\begin{algorithm}
\caption{\oclea for maximizing fitness function $f$ to target $n - k^*$}\label{alg:oclea}
\kwInit{Choose $x_0 \in \{0,1\}^n$ uniformly at random.}
\While{$f(x_t) < n - k^*$}{
    \kwMut{\For{$i = 1$ \KwTo $\lambda$}{
        $y_t^{(i)} \gets \text{mutate}(x_t)$ by flipping each bit in $x_t$ independently with probability $1/n$.
        }
    }
    \kwSel{
        $i_{\text{opt}} \gets \argmax_{i \in [\lambda]} f(y_t^{(i)})$ breaking ties uniformly at random.\\
        \hspace{1.5cm}$x_{t+1} \gets y_t^{(i_\text{opt})}$
    }
  }
\end{algorithm}

\paragraph{Auxiliary Lemmas.} We further use the following estimation of the factorial an the binomial coefficient. 

\begin{lemma}\label{lem:fac}
    For every $k \in \mathbb{N}$ we have $k ! = \Theta(k)^k$.
\end{lemma}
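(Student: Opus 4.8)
The plan is to prove $k! = \Theta(k)^k$, i.e., that there are constants $0 < c_1 \le c_2$ such that $(c_1 k)^k \le k! \le (c_2 k)^k$ for all $k \in \mathbb{N}$. The cleanest route is Stirling's approximation, $k! = \sqrt{2\pi k}\,(k/e)^k (1 + o(1))$, which immediately gives $k! = \Theta(\sqrt{k})\cdot (k/e)^k$, and since the polynomial factor $\Theta(\sqrt k)$ is swallowed by adjusting the base constant (as $(k/e)^k$ grows/shrinks exponentially in $k$ relative to any $(ck)^k$ with $c \ne 1/e$), we get $k! = \Theta(k)^k$ with the understanding that the hidden constant in $\Theta(k)$ sits in front of $k$ inside the $k$-th power. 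Concretely, I would take $c_2 = 1/e \cdot 2 = 2/e$ for the upper bound and $c_1 = 1/(3e)$ for the lower bound, say, and verify the inequalities directly.

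If one wants to avoid invoking Stirling, an elementary self-contained argument works just as well. For the upper bound $k! \le k^k$ is trivial (each of the $k$ factors is at most $k$), and in fact $k! \le (k/e)^k \cdot e \cdot k$ or even just $k! \le k^k$ suffices since $k^k = (1\cdot k)^k = \Theta(k)^k$. For the lower bound, pair up the factors: $k! = \prod_{i=1}^{k} i$ and use the bound $i \cdot (k+1-i) \ge k$ for $1 \le i \le k$ (this is a downward parabola in $i$ minimized at the endpoints, where it equals $k$), so $(k!)^2 = \prod_{i=1}^k i(k+1-i) \ge k^k$, giving $k! \ge k^{k/2}$. That alone is not quite $(c_1 k)^k$, so a slightly sharper pairing or a direct induction is needed: one can instead show $k! \ge (k/e)^k$ by induction, using $(1 + 1/k)^k \le e$, which is the standard textbook fact. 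Then $(k/e)^k = (c_1 k)^k$ with $c_1 = 1/e$, completing the lower bound.

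The main ``obstacle'' here is purely a matter of bookkeeping: one must be careful about what the statement $k! = \Theta(k)^k$ actually asserts, namely that the asymptotic hiding is inside the base of the exponential and not a multiplicative constant out front (a multiplicative $\Theta(1)$ would be a strictly weaker and false reading, since $k!$ is not within a constant factor of $k^k$). Once that is pinned down, there is no real difficulty — both the Stirling route and the elementary pairing/induction route are short. I would present the Stirling version for brevity, remarking that the polynomial correction factor $\sqrt{2\pi k}$ is absorbed into the base by noting $\sqrt{2\pi k}\le e^{k}$ for the relevant constant adjustment, so that $k! \le e \cdot (k/e)^k \cdot k \le (2k/e)^k$ for $k$ large and $k! \ge (k/e)^k$, hence $k! = \Theta(k)^k$ as claimed.
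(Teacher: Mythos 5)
Your proposal is correct and its main route is essentially the paper's own: apply Stirling's approximation and absorb the polynomial prefactor into the base of the $k$-th power by noting that its $k$-th root is $\Theta(1)$. The elementary pairing/induction alternative you sketch is a nice self-contained fallback, but it is not needed and does not change the substance.
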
 \begin{proof}
    By Stirling's approximation, there is a constant $c > 0$ such that \begin{align*}
    k! &\le c \sqrt{k} (k/e)^{k} \\
    &= \left( k \cdot c^{1/k}k^{1/(2k)}/e \right)^k\\
    &= \bigO{k}^k
  \end{align*} as $c^{1/k}k^{1/(2k)} = \Theta(1)$ for $k \ge 1$. The fact that $k! = \Omega(k)^k$ can be shown analogously.
\end{proof}

\begin{lemma}\label{lem:binomapprox}
  If $k \le n/2$ then \begin{align*}
    \binom{n}{k} = n^k \Theta(k)^{-k}
  \end{align*}
\end{lemma}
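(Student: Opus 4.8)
The natural route is to write $\binom{n}{k} = \frac{1}{k!}\cdot\frac{n!}{(n-k)!}$ and estimate the two factors separately. For the falling factorial $\frac{n!}{(n-k)!} = n(n-1)\cdots(n-k+1)$, each of the $k$ factors lies between $n-k+1$ and $n$; using the hypothesis $k \le n/2$ we get $n-k+1 > n/2$, so every factor is in the range $[n/2, n]$. Hence $\frac{n!}{(n-k)!} = n^k \Theta(1)^k$ (more precisely, it is sandwiched between $(n/2)^k$ and $n^k$). For the denominator I would simply invoke \Cref{lem:fac}, which gives $k! = \Theta(k)^k$.

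Combining the two estimates yields
\begin{align*}
    \binom{n}{k} = \frac{1}{k!}\cdot\frac{n!}{(n-k)!} = \frac{n^k \Theta(1)^k}{\Theta(k)^k} = n^k \left( \frac{\Theta(1)}{\Theta(k)} \right)^k = n^k\,\Theta(k)^{-k},
\end{align*}
where in the last step I use that a factor $\Theta(1)^k$ can be absorbed into $\Theta(k)^{-k}$ since multiplying the implicit constant inside $\Theta(k)$ by a positive constant does not change the $\Theta$-class (for $k \ge 1$).

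\textbf{Main obstacle.} There is no real difficulty here; the only thing to be careful about is bookkeeping of the multiplicative constants raised to the $k$-th power — in particular, making explicit that "$n^k\Theta(1)^k$ divided by $\Theta(k)^k$" collapses to "$n^k\Theta(k)^{-k}$", which relies precisely on the fact that constants survive being raised to the power $k$ only as a rescaling of the base of the $\Theta(k)$ term. I would also state at the outset the intended reading of the notation $n^k\Theta(k)^{-k}$ (namely: there are constants $0 < c_1 \le c_2$ with $(c_2 k)^{-k} n^k \le \binom{n}{k} \le (c_1 k)^{-k} n^k$ for all $1 \le k \le n/2$) so that the chain of estimates above is unambiguous.
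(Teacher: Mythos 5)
Your proposal is correct and follows essentially the same route as the paper: sandwich $\binom{n}{k}$ between $(n-k)^k/k!$ and $n^k/k!$ (your per-factor bound on the falling factorial is just a slightly finer version of this), invoke \Cref{lem:fac} for $k! = \Theta(k)^k$, and absorb the resulting $\Theta(1)^k$ factor into the base of $\Theta(k)^{-k}$. No gaps.
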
 \begin{proof}
  It is immediate that \begin{align*}
    \frac{n^k}{k!} \ge \binom{n}{k} \ge \frac{(n-k)^k}{k!}.
  \end{align*} By \Cref{lem:fac}, we have $k! = \Theta(k)^k$ and we immediately conclude that $\binom{n}{k} \le n^k/k! = n^k \Omega(k)^{-k}$. For the other direction, we use that \begin{align*}
    \binom{n}{k} \ge \frac{(n-k)^k}{k!} = \frac{n^k(1-k/n)^k}{k!}.
  \end{align*} Due to our assumption $k \le n/2$, the term $(1-k/n)^k$ is $\Omega(1)^k$ and thus $\binom{n}{k} = n^k \bigO{k}^{-k}$.
\end{proof}

We further need the following Chernoff bound. 
\begin{theorem}[\cite{Dubhashi_Panconesi_2009}]\label{thm:Chernoff}
  Let $X_1, \ldots, X_n$ be i.i.d. random variables with range in $[0, 1]$ and let $X \coloneqq \sum_{i=1}^n X_i$. Then, \begin{enumerate}
      \item[(i)] $\Pr{X \le (1 + \delta) \Expected{X}} \le \exp\left( -\delta^2 \Expected{X} / 3 \right)$ for $0 < \delta < 1$.
      \item[(ii)] $\Pr{X \le (1 - \delta) \Expected{X}} \le \exp\left( -\delta^2 \Expected{X}/2 \right)$ for $0 < \delta < 1$.
      \item[(iii)] $\Pr{ X \ge t } \le 2^{-t}$ for all $t \ge 2e\Expected{X}$.
  \end{enumerate}
\end{theorem}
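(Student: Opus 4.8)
The plan is to establish all three deviation bounds by the classical exponential-moment method, using only that the $X_i$ are independent and take values in $[0,1]$. Write $\mu \coloneqq \Expected{X} = \sum_{i=1}^n \Expected{X_i}$. For any real $s$ and any threshold $a$, Markov's inequality applied to the nonnegative random variable $e^{sX}$ gives, when $s>0$,
\begin{align*}
    \Pr{X \ge a} = \Pr{e^{sX} \ge e^{sa}} \le e^{-sa}\,\Expected{e^{sX}} = e^{-sa}\prod_{i=1}^n \Expected{e^{sX_i}},
\end{align*}
where the product factorisation is exactly the point at which independence enters. For a lower tail I would instead take $s<0$, in which case $\{X \le a\}$ coincides with $\{e^{sX} \ge e^{sa}\}$ and the same computation applies.

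The one substantive estimate is a uniform bound on each factor $\Expected{e^{sX_i}}$. Since $x \mapsto e^{sx}$ is convex and $X_i \in [0,1]$, linear interpolation between the endpoints $0$ and $1$ yields the pointwise inequality $e^{sX_i} \le 1 + X_i(e^{s}-1)$, so that
\begin{align*}
    \Expected{e^{sX_i}} \le 1 + \Expected{X_i}(e^{s}-1) \le \exp\!\big(\Expected{X_i}(e^{s}-1)\big),
\end{align*}
using $1+z \le e^{z}$. Multiplying over $i$ produces the master bound $\Expected{e^{sX}} \le \exp\!\big(\mu(e^{s}-1)\big)$, valid for every $s \in \RR$.

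From here each part follows by optimising $s$. For the upper tail I would set $a=(1+\delta)\mu$ and choose $s=\log(1+\delta)>0$, obtaining the raw bound $\big(e^{\delta}/(1+\delta)^{1+\delta}\big)^{\mu}$; it then remains to check the scalar inequality $\delta-(1+\delta)\log(1+\delta)\le-\delta^2/3$ for $0<\delta<1$ to recover the exponent $-\delta^2\mu/3$ of part~(i). Part~(ii) is the symmetric lower-tail statement: with $s=\log(1-\delta)<0$ the master bound gives $\big(e^{-\delta}/(1-\delta)^{1-\delta}\big)^{\mu}$, and the companion inequality $-\delta-(1-\delta)\log(1-\delta)\le-\delta^2/2$ for $0<\delta<1$ yields the exponent $-\delta^2\mu/2$. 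For part~(iii) I would instead take $s=\log(t/\mu)$ (legitimate since $t\ge 2e\mu>\mu$), which after $e^{t-\mu}\le e^{t}$ gives $\Pr{X\ge t}\le(e\mu/t)^{t}$; the hypothesis $t\ge 2e\mu$ forces $e\mu/t\le 1/2$, whence $\Pr{X\ge t}\le 2^{-t}$.

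The only genuinely technical steps are the two scalar Taylor-type inequalities relating the raw products to the clean exponents $-\delta^2\mu/3$ and $-\delta^2\mu/2$; I expect these to be the main (though entirely routine) obstacle, since each reduces to a second-order expansion of $(1\pm\delta)\log(1\pm\delta)$ with an elementary bound on the remainder over $0<\delta<1$. Everything else is a direct consequence of Markov's inequality, independence, and convexity. For faithfulness I note that the deviation event in part~(i) should be read as the upper tail $\{X \ge (1+\delta)\Expected{X}\}$, consistent with the one-sided bound on the right-hand side; as the result is classical (cf.\ the cited reference) the argument above is recorded only for completeness.
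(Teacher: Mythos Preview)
The paper does not prove this statement at all: it is quoted as a preliminary tool with a citation to \cite{Dubhashi_Panconesi_2009}, and no argument is given. Your sketch is the standard exponential-moment proof and is correct, including your observation that the inequality in part~(i) is evidently meant as the upper-tail event $\{X \ge (1+\delta)\Expected{X}\}$; there is nothing to compare against in the paper itself.
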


Moreover, we need the following multiplicative drift theorem with tail bounds which we apply it in a slightly non-standard context in \Cref{lem:onlylogninfitnesslayers}.

\begin{theorem}[Theorem 2.5 in \cite{Lengler_Steger_2018}]\label{thm:multidrift}
    Let $(Z_t)_{t \in \mathbb{N}}$ be a Markov chain with state space $\mathcal{Z}$ and trace function $\alpha: \mathcal{Z} \rightarrow \{0\} \cup [1, \infty)$ with $\alpha(Z_0) = n$, and assume that there is some $\delta > 0$ such that for all $z \in \mathcal{Z}$ \begin{align*}
        \Expected{ \alpha(Z_{t+1}) \mid Z_{t} = z } \ge (1 - \delta) \alpha(z).
    \end{align*} Let further $T = \inf \{ t\ge 0 \mid \alpha(Z_t) = 0\}$. Then for any $k > 0$, \begin{align*}
        \Pr{T \ge \left \lceil \frac{\log(n) + k}{|\log(1 - \delta)|} \right \rceil} \le e^{-k}.
    \end{align*}
\end{theorem}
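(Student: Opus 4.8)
This is the classical multiplicative drift tail bound, and I would prove it by the standard ``exponential potential plus Markov'' route. Abbreviate $\beta := 1-\delta$, so that $\beta \in (0,1)$ and $|\log(1-\delta)| = -\log\beta > 0$, and read the drift hypothesis as the contraction $\Expected{\alpha(Z_{t+1}) \mid Z_t = z} \le (1-\delta)\alpha(z)$ for all $z \in \mathcal{Z}$ (i.e.\ the potential shrinks by a factor $1-\delta$ in expectation at each step). The first step is to iterate this inequality: by the tower rule, $\Expected{\alpha(Z_{t+1})} = \Expected{\Expected{\alpha(Z_{t+1}) \mid Z_t}} \le (1-\delta)\Expected{\alpha(Z_t)}$, so induction on $t$ together with $\alpha(Z_0) = n$ gives $\Expected{\alpha(Z_t)} \le (1-\delta)^t n$ for every $t \ge 0$. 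The same content can be packaged as: $M_t := (1-\delta)^{-t}\alpha(Z_t)$ is a non-negative supermartingale with $M_0 = n$; I would use that form to stay clean if the chain has to be stopped at $T$, but for the bare statement the expectation bound is enough.

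The second ingredient is the ``gap'' in the range of $\alpha$. Since $\alpha$ takes values in $\{0\} \cup [1,\infty)$, surviving up to time $t$ forces $\alpha(Z_t) \ge 1$, i.e.\ $\{T > t\} \subseteq \{\alpha(Z_t) \ge 1\}$. Hence, by Markov's inequality and the previous step, $\Pr{T > t} \le \Pr{\alpha(Z_t) \ge 1} \le \Expected{\alpha(Z_t)} \le (1-\delta)^t n$. It then remains to choose $t$: writing $(1-\delta)^t n = \exp(\log n - t\,|\log(1-\delta)|)$, the choice $t = \lceil (\log n + k)/|\log(1-\delta)| \rceil$ makes $t\,|\log(1-\delta)| \ge \log n + k$, so the right-hand side is at most $e^{-k}$; converting ``$>$'' into ``$\ge$'' through the ceiling and using that $T$ is integer-valued then yields $\Pr{T \ge \lceil (\log n + k)/|\log(1-\delta)| \rceil} \le e^{-k}$, as claimed.

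The only point that needs genuine care --- and essentially the only one --- is the off-by-one bookkeeping between the event $\{T > t\}$ and the event $\{T \ge \lceil L \rceil\}$ with $L := (\log n + k)/|\log(1-\delta)|$, so that the ceiling is placed to produce exactly $e^{-k}$ rather than $e^{-k}/(1-\delta)$; this is where the precise convention $T = \inf\{t \ge 0 : \alpha(Z_t) = 0\}$ and the fact that $\alpha(Z_0) = n$ is deterministic get used. Everything else --- the tower rule, Markov, and rewriting $(1-\delta)^t$ as $e^{-t|\log(1-\delta)|}$ --- is routine. The structural feature worth stressing is that the argument uses nothing about the chain beyond the one-step drift inequality and the range restriction on $\alpha$, which is exactly what makes the theorem usable in the non-standard application in \Cref{lem:onlylogninfitnesslayers}.
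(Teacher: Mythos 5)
The paper does not actually prove this statement; it is imported verbatim as Theorem~2.5 of the cited reference and used as a black box, so there is no in-paper proof to compare against. Your argument is the standard proof of the multiplicative drift tail bound and is essentially correct: iterate the one-step contraction via the tower rule to get $\Expected{\alpha(Z_t)} \le (1-\delta)^t n$, use the range gap $\alpha \in \{0\}\cup[1,\infty)$ to write $\{T>t\}\subseteq\{\alpha(Z_t)\ge 1\}$, apply Markov's inequality, and optimize $t$. You were also right to read the printed hypothesis $\Expected{\alpha(Z_{t+1})\mid Z_t=z}\ge(1-\delta)\alpha(z)$ as a typo for $\le$; with $\ge$ as written the claim is false (a chain frozen at $\alpha\equiv n$ satisfies the hypothesis but never reaches $0$).

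The one place where your write-up overclaims is exactly the point you flag and then wave through: the ceiling does not convert $>$ into $\ge$ for free. Setting $L=(\log n+k)/|\log(1-\delta)|$ and $t_0=\lceil L\rceil$, the event $\{T\ge t_0\}$ only forces $\alpha(Z_{t_0-1})\ge 1$, so your chain of inequalities gives $\Pr{T\ge t_0}\le(1-\delta)^{t_0-1}n$; since $t_0-1$ may be as small as $\lfloor L\rfloor<L$, this is bounded only by $e^{-k}/(1-\delta)$, not by $e^{-k}$. What the argument cleanly proves is $\Pr{T>\lceil L\rceil}\le e^{-k}$, i.e.\ the statement with $\ge$ replaced by $>$ (equivalently, with one extra step in the time bound). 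This is a discrepancy with the theorem as transcribed here rather than with its substance: the only use of the theorem in the paper, in \Cref{lem:onlylogninfitnesslayers}, is asymptotic, where one extra step or a constant factor $1/(1-\delta)$ is immaterial. To get the $\ge$-form verbatim you should either quote the source's exact formulation or absorb the loss by taking $t_0=\lceil L\rceil+1$.
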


Additionally, we use the following straightforward estimate of the maximum number of bits flipped during a fixed time horizon.

\begin{lemma}\label{lem:numberofbitflips}
    Within the first $\timehorizon$ iterations of the algorithm, the maximum number of bits flipped in a single mutation is at most $\log_2(\timehorizon) + \log^2{n}$ with probability at least $1 - n^{-\omega(1)}$.
\end{lemma}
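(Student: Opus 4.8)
The plan is a straightforward union bound over all mutations performed during the time horizon. In each of the first $\timehorizon$ iterations the algorithm creates $\lambda$ offspring, and every offspring is produced by flipping each bit of the parent independently with probability $1/n$. Hence the number of bits flipped in a single mutation is a sum $X$ of $n$ i.i.d.\ indicator variables with range in $[0,1]$ and $\Expected{X} = 1$, and there are at most $\lambda \timehorizon$ such mutations in total during the first $\timehorizon$ iterations.

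First I would fix one mutation and set $t \coloneqq \log_2(\timehorizon) + \log^2 n$. Since $\Expected{X} = 1$ and $\log^2 n \to \infty$, for $n$ large enough we have $t \ge 2e \Expected{X}$, so part (iii) of \Cref{thm:Chernoff} applies and yields
\[
  \Pr{X \ge t} \le 2^{-t} = 2^{-\log_2(\timehorizon)} \cdot 2^{-\log^2 n} = \frac{1}{\timehorizon} \cdot 2^{-\log^2 n}.
\]
A union bound over the at most $\lambda \timehorizon$ mutations in the first $\timehorizon$ iterations then shows that the probability that some mutation flips at least $t$ bits is at most
\[
  \lambda \timehorizon \cdot \frac{1}{\timehorizon} \cdot 2^{-\log^2 n} = \lambda \cdot 2^{-\log^2 n}.
\]
Finally I would note that $2^{-\log^2 n} = n^{-\log(2)\,\log n} = n^{-\omega(1)}$, and since $\lambda$ is at most polynomial in $n$ (indeed $\lambda = O(\log n)$ under our assumptions), the whole bound remains $n^{-\omega(1)}$, which is exactly the claimed failure probability.

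There is no genuine obstacle here; the only points that need a little care are (a) checking that $t \ge 2e\Expected{X}$ so that the Chernoff tail (iii) is applicable, which holds because $\log^2 n$ eventually exceeds $2e$, and (b) observing that the factor $1/\timehorizon$ coming from the tail bound cancels exactly with the factor $\timehorizon$ from the union bound, so that only the sub-polynomial factor $\lambda$ and the super-polynomially small factor $2^{-\log^2 n}$ survive. This cancellation is precisely why the additive term $\log_2(\timehorizon)$ appears in the statement.
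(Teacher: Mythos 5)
Your proof is correct and follows essentially the same route as the paper's: apply part (iii) of the Chernoff bound to get $\Pr{X \ge r} \le 2^{-r}$, then union bound over all mutations in the first $\timehorizon$ iterations with $r = \log_2(\timehorizon) + \log^2 n$. You are in fact slightly more careful than the paper in accounting for the factor $\lambda$ from the $\lambda$ offspring per iteration, which the paper's union bound glosses over but which, as you note, is harmless since $\lambda = O(\log n)$.
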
 \begin{proof}
  The number of bits flipped in a single mutation is a binomial random variable $R \sim \text{Bin}(n, 1/n)$. By a Chernoff bound (\Cref{thm:Chernoff}), we have that \begin{align*}
    \Pr{R \ge r} \le 2^{-r}
  \end{align*} for $r \ge 2e$.  Hence, by Markov's inequality, the probability that we flip more than $r$ bits at least once during $\mathcal{T}$ iterations is at most $\timehorizon 2^{-r}$. Setting $r = \log_2 (\timehorizon) + \log^2(n)$ this is at most $n^{-\omega(1)}$ as desired.
\end{proof}

\section{Analysis of the $(1+\lambda)$-EA}\label{sec:1+1}
We give a high level description of why plus selection takes super-polynomial time before giving the formal proofs. The idea is as follows. The $(1+\lambda)$-EA will very likely run into a distorted point while being within distance roughly $n^{1 - \varepsilon}$ to the optimum for any sufficiently small constant $\varepsilon$. If we are at a point of distortion $d$, then in order to accept a search point of lower distortion, we would have to increase the number of ones in the current individual. However, as the number of zeros is sub-linear, this is rather unlikely. Instead, if $\mathcal{D}$ satisfies \Cref{ass:distribution}, it is much more likely to accept a point of distortion at least $d + 1$ because \Cref{ass:distribution} ensures that the CDF of $\mathcal{D}$ decays slower than the probability of increasing the number of ones. Indeed, if we uniformly sample a point in $\HL{x}{\ell}$, then the probability of finding an acceptable individual of distortion at least $d+ 1$ is at least $p^+ \coloneqq p\Pr{D \ge d+\ell}$. On the other hand, the probability of decreasing distortion by finding a point that increases the number of ones is at most $p^- \coloneqq n^{-\Omega(\varepsilon \ell)}p\Pr{D \ge d - \ell}$. Hence, we get that $p^+/p^- \ge n^{\Omega(\varepsilon\ell)}\sigma^{2\ell}$, which is $n^{\Omega(1)}$ for all $\ell \ge 1$. Hence, the probability of not increasing the current distortion by at least $1$ is only $n^{-\Omega(1)}$, which implies that we can reach a distortion of $n^{\Omega(1)}$ w.h.p. without even once going back to a smaller distortion.

However, the formal proof of this is significantly complicated by the fact that the distortion of each search point is assigned only once and does not change if we sample a given point multiple times, so we do not get ``fresh randomness'' at each visited search point and potentially even introduce dependencies between visited points and their neighborhoods. One way to remedy this would be to use concentration bounds to show that the number of points of a certain distortion in the neighborhood of a given point concentrates well around its expectation. However, we would have to do this for all possible distortions we are interested in and then apply union bounds over \emph{all} search points in $\{0,1\}^n$, which are simply too many to make the statements obtained this way meaningful. Another approach would be to apply the union bound only to those individuals we actually visit. However, this does not solve the first issue and -- more importantly -- it does not allow us to get rid of possible dependencies between the set of visited points and their neighborhoods. The problem here is that the number of search points close to $x$ that have a certain distortion might depend on which search points we visited before $x$; indeed if we visit many points close to $x$ before $x$ itself, then we know that all the points visited before $x$ (and the points sampled and discarded before visiting $x$) have fitness at most as large as that of $x$ and hence, the number of points with a certain distortion that are close to $x$ might be very different than what we would expect to the case for the ``typicial'' search point. For this reason, we use the following subsection to develop a technique for largely getting rid of these dependencies. We accomplish this by establishing that for every visited point $x$ sufficiently close to the optimum during the first $\timehorizon = \timehorizonvalue$ iterations, the algorithm has only ``seen'' a small fraction of the neighborhood of $x$ before finding $x$ itself.

\subsection{Ensuring uniform Exploration of the Search Space}\label{sec:exploration}

Our argument consists of two steps. First -- in \Cref{lem:onlylogninfitnesslayers} -- we show that w.h.p. from the set of all points with a certain number of ones, only a small number of points is ever visited. To this end, we define the $k$-th \OneMax-layer as $\OneMaxLayer(k) = \{ x \in \{0, 1\}^n \mid \OM(x) = k \}$ and show that only roughly $\log(|\OneMaxLayer(k)|)$ points in $\OneMaxLayer(k)$ are visited. Intuitively, for non-distorted points, this follows because a fitness improvement is almost as easy to find as a search point of the same fitness. For distorted points, we use the fact that once we found the first distorted point in $\OneMaxLayer(k)$, we can only accept another individual in $\OneMaxLayer(k)$ if this point has a larger distortion than before. However, each time we accept a new individual, we cut the number of individuals with even higher distortion in half (in expectation). See \Cref{fig:halving} for an illustration. In the second part of our argument (\Cref{lem:onlyfewoftheneighborhoodsiuncovered}), we use this fact to show that with high probability -- during the first $\timehorizon$ steps -- we only uncover a sub-constant fraction of $\HL{x}{\ell}$ before sampling $x$ itself for the first time, for all points $x$ which are sufficiently close to the optimum.

\begin{lemma}\label{lem:onlylogninfitnesslayers}
  Let $x \in \{0, 1\}^n$ be a fixed search point and let $k \in \mathbb{N}$. If $\hspace{.1cm}V_{k}$ denotes the set of accepted search points in $\OneMaxLayer(k)$, then with probability at least $1 - n^{-\omega(1)}$,
  \begin{align*}
      |V_{k}(x)| = \bigO{\log(|\OneMaxLayer(k)|) + \log^2(n)}.
  \end{align*}
\end{lemma}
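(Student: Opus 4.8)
The plan is to split $V_k$ into the subset $V_k^{\mathrm{un}}$ of undistorted points it contains and the subset $V_k^{\mathrm{di}}$ of distorted points, and to bound each separately up to failure probability $n^{-\omega(1)}$; a union bound then gives the claim. The one structural fact I will use repeatedly is that the \olea never replaces its current individual by one of smaller fitness, so $t \mapsto f(x_t)$ is non-decreasing.

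\emph{The undistorted part.} I claim the set of times $t$ with $x_t \in \OneMaxLayer(k)$ and $x_t$ undistorted is almost surely an interval. Indeed such an $x_t$ has fitness exactly $k$, so the next individual either has fitness $>k$ — after which $f$ stays above $k$ forever and we never return to $\OneMaxLayer(k)$ at fitness $k$ — or has fitness exactly $k$, which (since $\mathcal D$ is continuous, up to a null event) forces it to be another undistorted point of $\OneMaxLayer(k)$. Within this interval I would compare, for an arbitrary undistorted $x_t\in\OneMaxLayer(k)$, the probability $\Pr{E}$ that some offspring has more than $k$ ones against the probability $\Pr{F}$ that some offspring is a new undistorted point of $\OneMaxLayer(k)$: a single mutation moves one level up with probability $\Theta((n-k)/n)$ and performs a non-trivial sidestep within the level with probability $O\!\big((n-k)k/n^2\big)=O((n-k)/n)$, whence $\Pr{E}=\Omega(\Pr{F})$ uniformly in the state, so $\Pr{E\mid E\cup F}\ge c$ for some constant $c>0$. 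Every point added to $V_k^{\mathrm{un}}$ after entering the interval corresponds to an $F$-iteration, and the interval (if it ends inside the run) ends at an $E$-iteration, so $|V_k^{\mathrm{un}}|$ is stochastically dominated by $1$ plus a geometric variable with parameter $c$; hence $|V_k^{\mathrm{un}}|\le\log^2 n$ with probability $1-(1-c)^{\log^2 n}=1-n^{-\omega(1)}$. (If $\mathcal D$ also has mass on the negatives the same argument applies, since a negatively distorted point of $\OneMaxLayer(k)$ has fitness below $k$ and cannot be accepted out of an undistorted point of $\OneMaxLayer(k)$.)

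\emph{The distorted part.} Enumerate the distorted points of $\OneMaxLayer(k)$ as $w_1,w_2,\dots$ in the order in which the algorithm first generates them as offspring (there are at most $m:=|\OneMaxLayer(k)|$ of them). The heart of the argument is the claim that every point of $V_k^{\mathrm{di}}$ is a left-to-right maximum (``record'') of the sequence $d(w_1),d(w_2),\dots$ of distortions: if some $w_b$ with $d(w_b)>d(u)$ were generated before an accepted point $u\in V_k^{\mathrm{di}}$, then already at the iteration where $w_b$ first appeared the current fitness rose to at least $k+d(w_b)>k+d(u)$ — regardless of whether $w_b$ itself was accepted — and by monotonicity it can never drop back, contradicting that $u$, of fitness $k+d(u)$, gets accepted at a later iteration. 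Moreover, since the mutation operator is oblivious to the frozen distortions and $d(w_i)$ is revealed only the first time $w_i$ is generated, the sequence $d(w_1),d(w_2),\dots$ is i.i.d. with law $\mathcal D$; the record indicators are therefore independent $\mathrm{Ber}(1/i)$ variables and the number of records equals $\sum_{i=1}^{m}\mathrm{Ber}(1/i)$, with expectation $\sum_{i=1}^m 1/i=O(\log m)=O(\log|\OneMaxLayer(k)|)$. Applying \Cref{thm:Chernoff}(iii) with threshold $t=\Theta\!\big(\max\{\log|\OneMaxLayer(k)|,\log^2 n\}\big)$ yields $|V_k^{\mathrm{di}}|=O(\log|\OneMaxLayer(k)|+\log^2 n)$ with probability $1-2^{-\log^2 n}=1-n^{-\omega(1)}$.

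The step I expect to be the main obstacle is making the ``lazy revelation'' argument rigorous: one must verify that although the discovery order $w_1,w_2,\dots$ of the distorted layer-$k$ points may well depend on the distortions discovered earlier (through the trajectory), the distortion of the next newly discovered point is still a fresh $\mathcal D$-sample independent of the past, so that the classical independence of the record indicators applies unchanged. Once first-appearance times (rather than acceptance times) are used consistently, the record claim itself and the undistorted part are routine.
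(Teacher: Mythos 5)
Your proof is correct, and it splits the argument into the same two halves as the paper (undistorted vs.\ distorted points of $\OneMaxLayer(k)$). The undistorted half is essentially identical to the paper's: both compare the per-offspring probability of a level-up step, $\Omega((n-k)/n)$, against that of a sidestep within the layer, $O(k(n-k)/n^2)$, and conclude a geometric/Chernoff bound of $O(\log^2 n)$; your observation that the undistorted visits form an interval is the same fact the paper phrases as ``any later return to the layer must be to a distorted point.'' The distorted half is where you genuinely diverge. The paper defers the distortion values and only reveals, at each newly sampled distorted layer-$k$ point, which candidates lie above it; since the new point's rank is uniform in the surviving set, the set halves in expectation, and the multiplicative drift theorem with tail bounds (their Theorem 3.4) gives $O(\log|\OneMaxLayer(k)|+\log^2 n)$ halving steps. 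You instead observe that every accepted distorted layer-$k$ point must be a left-to-right record of the i.i.d.\ sequence of distortions in discovery order (your argument for this — the current fitness rises to at least $k+d(w_b)$ already when $w_b$ is merely \emph{generated}, whether or not it is accepted — is exactly right and is the same elitism fact the paper uses), and then invoke R\'enyi's theorem that the record indicators are independent $\mathrm{Ber}(1/i)$ with mean $H_m=O(\log m)$. Both routes rest on the identical deferred-decision/exchangeability principle (mutation is oblivious to the frozen distortions, so each newly discovered point's distortion, resp.\ rank, is fresh), and the ``dependence of the discovery order on the past'' issue you flag is resolved the same way in both. What your version buys is the exact distribution $\sum_{i\le m}\mathrm{Ber}(1/i)$ and independence from the drift machinery; what it costs is that the paper's Theorem 3.3 is stated for i.i.d.\ summands, so you need the standard Chernoff bound for independent but non-identically distributed $[0,1]$-valued variables (the proof of part (iii) goes through unchanged, so this is cosmetic, not a gap).
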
\begin{proof}
  Recall that $\OneMaxLayer(k)$ is the set of all bit strings with exactly $k$ ones. We show the statement in two parts. First, we show that only $\log^2(n)$ non-distorted points with $k$ ones are visited. Afterwards, we show that the number of visited points that are distorted is $\bigOCompact{\log(|\OneMaxLayer(k)| + \log^2(n))}$. 
 
  For the first part, we bound the number of visited search point $x$ in $\OneMaxLayer(k)$ which are not distorted. To this end, we show that -- once we visit a non-distorted point in $\OneMaxLayer(k)$ -- it is very likely that we only sample few other points in $\OneMaxLayer(k)$ before finding a point of strictly larger fitness than $x$. This implies that whenever we go back to an individual in $\OneMaxLayer(k)$ at a later time, this point must be distorted as the overall fitness never decreases. To show this, we note two things. First, if the next point we visit has fewer one-bits than the current point, then it must be distorted with fitness at least as large as $x$. Since the distribution $\mathcal{D}$ is continuous, the probability of getting exactly the same fitness is zero, so the offspring has \emph{strictly} larger fitness. Secondly, if we sample a point with strictly more one-bits than $x$, then we also strictly increase the fitness, regardless of whether or not said point is distorted. Hence, it suffices to bound the number of times we create an offspring with parent in $\OneMaxLayer(k)$ that is also in $\OneMaxLayer(k)$ before we sample the first point with more ones than $x$. To this end, note that in order to sample offspring in $\OneMaxLayer(k)$, we need to flip the same number of one- and zero-bits, so the probability of sampling such an offspring is at most \begin{align*} 
    \sum_{\ell=1}^{n} \hspace{.1cm} \binom{k}{\ell}  \binom{n-k}{\ell}\hspace{.1cm} n^{-2\ell} \le \sum_{\ell=1}^{n} \left( \frac{k (n-k)}{n^2} \right)^{\ell} &= \bigO{\frac{k (n-k)}{n^2}} \\ &= \bigO{(n-k)/n}.
  \end{align*} On the other hand, the probability of increasing the number of one-bits by flipping exactly one zero-bit to a one and not changing anything else is at least $(n-k)/n (1 - 1/n)^{n-1} = \Omega((n-k)/n)$. Accordingly, conditional on the event that a given offspring is either in $\OneMaxLayer(k)$ or increases the number of one-bits, the latter case occurs with probability $\Omega(1)$. Hence, by a Chernoff-bound as in \Cref{thm:Chernoff}, after $\log^2(n)$ such samples, we created at least one offspring with more one-bits with probability at least $ 1 - \exp(-\Omega(\log^2(n)) \ge 1 - n^{-\omega(1)}$. We may still create at most $\lambda-1$ offspring of the same generation afterwards, but those are negligible since $\lambda =O(\log n)$.
  
  For the second part, we bound the number of visited distorted points in $\OneMaxLayer(k)$. To this end, we uncover for all bit-strings in $\OneMaxLayer(k)$ if they are distorted or not, but without uncovering their exact distortion. Then, every time we sample one of these distorted points $z$, we only uncover which points have distortion larger and smaller than $z$ but without uncovering the exact distortion. This way -- due to the elitist behavior of the \olea -- we cut the expected number of visitable search points in half every time we sample one of them. To this end, let $R_0$ be the set of distorted individuals obtained in this way. Clearly, $|R_0| \le |\OneMaxLayer(k)|$. Every time we sample an individual $y$ in $R_0$, we uncover only which individuals in $R_0$ have strictly higher and strictly lower distortion than $y$, respectively. Note that there are no points of equal distortion because $\mathcal{D}$ is a continuous distribution. We denote the set of individuals with strictly higher fitness than $y$ by $R_1$ and repeat the same procedure whenever we sample the first individual in $R_1$ to obtain $R_2$, then $R_3$ and so on. Now in every step $t$, the expected size of $R_t$ is $\Expected{R_t} = (|R_{t-1}|-1)/2 \le |R_{t-1}|/2$.  Using multiplicative drift with tail bounds as in \Cref{thm:multidrift}, we get that the number of steps $T$ until $R_T$ consists only of a single individual is $\bigOCompact{\log(|\OneMaxLayer(k)|) + \log^2(n)}$ with probability at least $1  - n^{-\omega(1)}$.
\end{proof}

We now proceed with the second part of our exploration argument and show that for search points relatively close to the optimum, only a small fraction of its neighborhood is already uncovered when we first visit said point.

\begin{figure}
  \includegraphics[width=.45\textwidth]{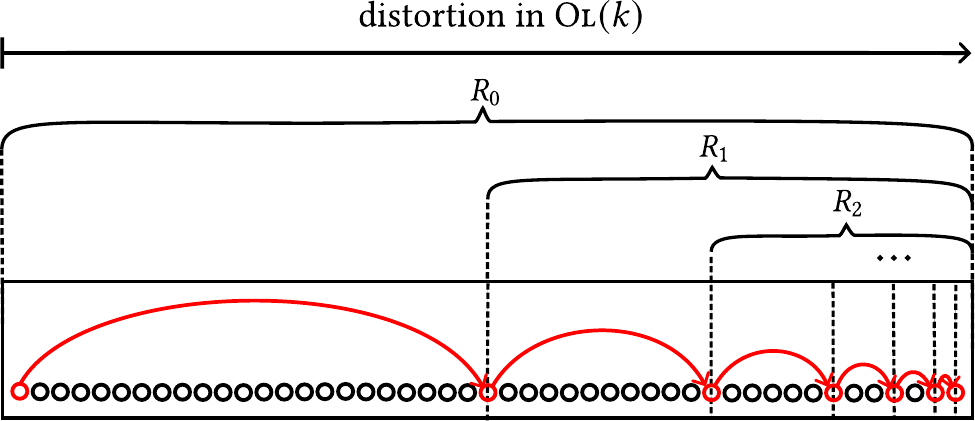}
  \caption{\normalfont Illustration for the proof of \Cref{lem:onlylogninfitnesslayers}. The circles represent all distorted points in $\OneMaxLayer(k)$, the red ones are the ones visited by the algorithm. Every time, we sample a new distorted point $y$ in $\OneMaxLayer(k)$, we cut the set of points with distortion larger than $y$ in half (in expectation). Hence, only $\bigOCompact{\log(|\OneMaxLayer(k)|)}$ distorted points in $\OneMaxLayer(k)$ are visited w.h.p.}\label{fig:halving}
\end{figure}

\begin{figure}
  \includegraphics[width=.45\textwidth]{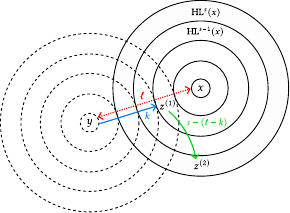}
  \caption{\normalfont Illustration for the proof of \Cref{lem:onlyfewoftheneighborhoodsiuncovered}. We generate offspring from parent $y$ with $\HD{x}{y} = \ell$ in two steps: First decide which $k \sim \text{Bin}(\ell, 1/n)$ bits to flip among the bits that differ between $x$ and $y$ to create the intermediate offspring $z^{(1)}$; then decide which bits to flip among the remaining bits to create the final offspring $z^{(2)}$.}\label{fig:generatingInTwoSteps}
\end{figure}

\begin{lemma}\label{lem:onlyfewoftheneighborhoodsiuncovered}
  Let $\varepsilon, \delta > 0$ be two sufficiently small constants, and set $\timehorizon \coloneqq \exp(n^{\delta})$ and $r \coloneqq n^{2\delta}$. Then with high probability, for all $1 \le s \le r$ and for all points $x \in \{0,1\}^n$ with at most $t \coloneqq n^{1 - \varepsilon} $ zero-bits, only a fraction of \begin{align*}
    \bigO{r^3 (t + r) \log^2(n)/n} + o(1)
  \end{align*} of $\HL{x}{s}$ is uncovered during the first $\timehorizon$ iterations before $x$ is first sampled.
\end{lemma}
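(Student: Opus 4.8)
The plan is to combine \Cref{lem:onlylogninfitnesslayers}, which bounds the number of \emph{visited} points per \OneMax-layer, with a counting argument over the Hamming layers that separates ``accepted'' points from ``sampled-and-discarded'' points. Fix a point $x$ with at most $t = n^{1-\varepsilon}$ zeros, and fix $1 \le s \le r$. A point $y \in \HL{x}{s}$ becomes uncovered before $x$ is sampled only if some \emph{parent} of an offspring generated before the first visit to $x$ lies in a Hamming layer $\HL{x}{\ell}$ with $|\ell - s| \le R$, where $R := \log_2(\timehorizon) + \log^2(n) = n^{\delta + o(1)}$ is the maximal number of bits flipped in a single mutation during $\timehorizon$ iterations (\Cref{lem:numberofbitflips}); and moreover the mutation must actually hit $y$. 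So first I would condition on the high-probability event from \Cref{lem:numberofbitflips} and on the events from \Cref{lem:onlylogninfitnesslayers} holding simultaneously for all relevant \OneMax-layers (there are only $n+1$ of them, so a union bound costs only a $n^{-\omega(1)} \cdot n = n^{-\omega(1)}$ loss). On these events, every parent ever used is an \emph{accepted} point, and the number of accepted points in \OneMaxLayer$(k)$ is $\bigO{\log(|\OneMaxLayer(k)|) + \log^2(n)} = \bigO{n \log n}$ crudely, but in fact we only care about layers near $x$, where $|\OneMaxLayer(k)| \le \binom{n}{t+R} = \binom{n}{n^{1-\varepsilon}+n^{\delta+o(1)}}$, so $\log|\OneMaxLayer(k)| = \bigO{(t+r)\log n}$ for $\delta$ small relative to $\varepsilon$. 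Hence the total number of distinct parents in the relevant band of $\bigO{r}$ \OneMax-layers is $\bigO{r (t+r)\log^2 n}$.

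The heart of the argument is then: given a fixed parent $y$ with $\HD{x}{y} = \ell$, what is the probability that a single offspring of $y$ equals a fixed $z \in \HL{x}{s}$, and more usefully, what is the expected number of points of $\HL{x}{s}$ hit by one offspring of $y$? This is where I would use the two-step generation idea from \Cref{fig:generatingInTwoSteps}: split the $n$ bits into the $\ell$ bits where $x$ and $y$ differ and the $n - \ell$ bits where they agree, flip $k_1 \sim \mathrm{Bin}(\ell, 1/n)$ of the former and $k_2 \sim \mathrm{Bin}(n-\ell, 1/n)$ of the latter. To land anywhere in $\HL{x}{s}$ we need $\ell - k_1 + k_2 = s$, i.e.\ $k_2 - k_1 = s - \ell$. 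Since $\Expected{k_1}, \Expected{k_2} = \bigO{\ell/n} + \bigO{1} = \bigO{1}$ and $|s - \ell| \le R$, a Chernoff bound (\Cref{thm:Chernoff}(iii)) shows $k_1 + k_2 \le 2\log n$ except with probability $n^{-\omega(1)}$, and conditioned on $(k_1, k_2)$ the offspring is uniform over a set of $\binom{\ell}{k_1}\binom{n-\ell}{k_2}$ points, of which at most $\binom{\ell}{k_1}\binom{t+R}{k_2 - (s - \ell)^+}\cdots$ lie in $\HL{x}{s}$ — the key point being that reaching $\HL{x}{s}$ from $\HL{x}{\ell}$ with $\ell \le t + R$ requires flipping \emph{zero-bits of $x$ that $y$ already has}, of which there are at most $t + R$. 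Carefully bounding $\Pr{k_2 = j} \le (n-\ell)^{-j}\cdot(\text{small})$ and summing, the expected number of $\HL{x}{s}$-points hit by one offspring of one fixed parent is $\bigO{(t+r)/n^2} \cdot \mathrm{poly}\log$ — the two powers of $n$ in the denominator come from needing at least one specific flip among $\Theta(n)$ agreeing bits \emph{and} the mutation being otherwise nearly determined.

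Putting it together: number of parents $\times$ number of offspring per parent visited before $x$ ($\le \lambda \timehorizon$, but the $\timehorizon$ is absorbed because each \emph{distinct} parent only generates $\bigO{\lambda}$ offspring while it is the current individual before being replaced — wait, this needs care) $\times$ expected hits per offspring. The cleaner accounting is: sum over the $\bigO{r(t+r)\log^2 n}$ distinct accepted parents $y$ in the band, over the $\bigO{\lambda} = \bigO{\log n}$ offspring generated from each while it is current, of the expected number of $\HL{x}{s}$-hits per offspring, giving $\bigO{r(t+r)\log^2 n} \cdot \bigO{\log n} \cdot \bigO{(t+r)\log^c n / n^2}$. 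Comparing against $|\HL{x}{s}| = \binom{n}{s}$, which for the relevant uncovered-fraction bound we only need to be $\ge \binom{n}{s} \ge (n/s)^s \ge n^{\Omega(1)}$ — actually the bound claims fraction $\bigO{r^3(t+r)\log^2(n)/n}$, so I'd divide the expected-hit-count by $|\HL{x}{s}|$ and check the powers of $r$ and $n$ line up: the $r^3$ presumably collects one $r$ from the band width, one from a union over $s$, and one from slack in the two-step estimate, while one power of $n$ survives in the denominator and the $o(1)$ term absorbs the $n^{-\omega(1)}$ failure probabilities and the contribution of ``a point $z$ that was uncovered not as an offspring target but because $z$ itself was visited'' (handled directly by \Cref{lem:onlylogninfitnesslayers} applied to $\OneMaxLayer(\OM(z))$). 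Finally a union bound over all $1 \le s \le r$ (only $r = n^{2\delta}$ values) and the claim that it holds ``for all $x$'' — which I would handle not by union-bounding over all $2^n$ points $x$ (too expensive), but by noting the estimate was derived for \emph{fixed} $x$ using only the layer-counting event of \Cref{lem:onlylogninfitnesslayers}, which is itself a statement quantified over all search points, so the ``for all $x$'' comes for free once that event holds.

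\textbf{Main obstacle.} The delicate part is the last point: making the ``for all $x$'' quantifier rigorous without a $2^n$ union bound. The resolution must be that the bound on uncovered points of $\HL{x}{s}$ is a \emph{deterministic} consequence of the event $\mathcal{E}$ that (a) at most $R$ bits flip per mutation in $\timehorizon$ steps and (b) every \OneMax-layer has $\bigO{\log|\OneMaxLayer(k)| + \log^2 n}$ accepted points — because given $\mathcal{E}$, the set of points ever uncovered before visiting $x$ is contained in $\bigcup_{y \text{ accepted, near } x}\{$offspring of $y\}$, a set whose size is bounded purely combinatorially. The subtlety is that ``offspring of $y$'' is still random, so one genuinely needs the per-offspring expectation bound plus a union/Markov argument — and that argument, if done naively per $x$, costs a $2^n$ union bound after all. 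I expect the actual fix (and the thing I'd need to get exactly right) is to bound, for each \emph{fixed accepted parent} $y$ and each $s$, the probability that $y$'s offspring uncover more than a $\bigO{\log^c n / n}$-fraction of \emph{its own} $\HL{y}{s}$ — a statement about $y$, not $x$ — and then transfer to $x$ via the triangle-inequality relation $\HL{x}{s} \cap \HL{y}{s'} $ for $|s - s'| \le \ell \le t + R$; the number of distinct accepted parents is polynomial, so the union bound over \emph{them} is affordable, and every $x$ near the optimum inherits the bound because its uncovered neighbourhood is assembled from those finitely many parents' offspring sets.
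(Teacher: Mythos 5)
There is a genuine gap, and you have correctly located it yourself: your accounting is based on the \emph{expected} number of points of $\HL{x}{s}$ hit by the random offspring of each parent, and converting that expectation into a high-probability statement for a fixed $x$ (via Markov or concentration) and then quantifying over all $x$ would require a union bound over roughly $n^{n^{1-\varepsilon}}$ search points, which your per-offspring bounds cannot pay for. Your proposed repair --- bounding the uncovered fraction of each parent's \emph{own} neighbourhood $\HL{y}{s'}$ and transferring to $x$ --- does not work as stated: $\HL{x}{s}\cap\HL{y}{s'}$ can be a negligible fraction of $\HL{y}{s'}$ while still being a large fraction of $\HL{x}{s}$, so controlling the former says nothing about the latter. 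The paper's resolution is different in kind. Writing $I$ for the set of bits where $x$ and the parent $y$ differ, it splits according to whether the mutation flips \emph{all} of $I$ or not. In the second case the bound is \emph{deterministic}: the set of \emph{all possible} offspring of $y$ that lie in $\HL{x}{s}$ without flipping every bit of $I$ is, by a purely combinatorial count, an $\bigO{\ell s/n}$-fraction of $\HL{x}{s}$, where $\ell=\HD{x}{y}$. No randomness of the mutation enters, so conditional on the events of \Cref{lem:onlylogninfitnesslayers} and \Cref{lem:numberofbitflips} (which do not depend on $x$), the ``for all $x$'' quantifier is free, and summing over the $\bigO{r(t+r)\log^2 n}$ relevant parents gives the main term $\bigO{r^3(t+r)\log^2(n)/n}$.

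The case you do not treat at all is the one where the mutation does flip all of $I$: there a single parent adjacent to $x$ could in principle uncover a large portion of $\HL{x}{s}$ over many samples, and no deterministic counting saves you. The paper handles this by observing that each such sample has constant probability of producing $x$ itself, so among $n/\log n$ such samples $x$ is hit with probability $1-\exp(-n^{1-o(1)})$; hence at most $n/\log n = o(|\HL{x}{s}|)$ points are uncovered this way before $x$ is first sampled. Crucially, the failure probability $\exp(-n^{1-o(1)})$ is small enough to survive a union bound over the $\bigO{n^{n^{1-\varepsilon}}}$ points $x$ with at most $n^{1-\varepsilon}$ zeros --- this is the only place a union bound over $x$ is needed, and it is affordable precisely because the event is so unlikely. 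Without both the deterministic count and this separate all-of-$I$-flipped case, your argument does not close.
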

\begin{proof}
  Let $y$ be an arbitrary search point that was visited before $x$ is first sampled and let $I$ be the set of bits at which $x$ and $y$ differ. We further define $\ell \coloneqq |I| = \HD{x}{y}$. We call $I$ the \emph{first block} and $[n] \setminus I$ the \emph{second block}, respectively, and we think of the process of creating an offspring with parent $y$ as follows. In the first step, we decide which bits to flip within $I$ and we denote the (intermediate) offspring generated this way by $z^{(1)}$. In the second step, we subsequently decide which bits to flip in $[n]\setminus I$ and denote the final offspring obtained this way by $z^{(2)}$. An illustration for this is given in \Cref{fig:generatingInTwoSteps}. Note that in the first step, the Hamming distance to $x$ decreases while in the second step it increases, i.e., $\HD{x}{z^{(1)}} \le \HD{x}{y}$ and $\HD{x}{z^{(1)}} \le \HD{x}{z^{(2)}}$. To bound the number of search points uncovered in $\HL{x}{s}$ before sampling $x$ itself for the first time, we distinguish two ways of uncovering a search point in Hamming-layer $s$ around $x$. 

  Case 1: $\HD{x}{z^{(1)}} = 0$. Let $x$ be a fixed search point. Assuming case 1, we already flipped all the bits in $I$ after the first sampling step. In the second step, we then have a constant probability of not flipping any other bit and thus sampling $x$ itself. This implies that the number of such samples is bounded. More precisely, among $n/\log(n)$ samples in which $z^{(1)} = x$, the probability that we have at least one sample for in which also $z^{(2)} = x$ is at least \begin{align*}
    1 - \exp( - \Omega(n/\log(n)) ) = 1 - \exp( -n^{1 - o(1)} ).
  \end{align*} Even if all these $n/\log(n)$ samples fall into $\HL{x}{s}$ (for $1 \le s \le r$), this only corresponds to a fraction of $o(1)$ uncovered points as $|\HL{x}{s}| \ge n$ for all relevant $s$. Note that this argument works regardless of how many points we visit before sampling $x$. Furthermore, the number of bit strings with at most $n^{1-\varepsilon}$ zero-bits is at most \begin{align*}
    \sum_{k = 0}^{n^{1-\varepsilon}} \binom{n}{k} \le \sum_{k = 0}^{n^{1-\varepsilon}} n^{k} \le n^{n^{1- \varepsilon}} \sum_{k = 0}^{n^{1-\varepsilon}} n^{-k} = cn^{n^{1-\varepsilon}}
  \end{align*} where $c> 0$ is a constant. By a union bound, we conclude that our statement applies to all such strings $x$ with probability at least \begin{align*}
    &1 - cn^{n^{1-\varepsilon}}\exp\left( -n^{1 - o(1)} \right) \\
    &\hspace{1cm} = 1 - c\exp\left(\log(n) n^{1-\varepsilon} - n^{1-o(1)} \right) = 1 - o(1).
  \end{align*}
    
  Case 2: $\HD{x}{z^{(1)}} > 0$. In this case, we show that (for suitable $r,k$), the number of points in $\HL{x}{s}$ that can be reached from a fixed parent $y$ in this way is sub-constant \emph{deterministically}. Afterwards, we use \Cref{lem:onlylogninfitnesslayers} to bound the number of distinct parents $y$ and \Cref{lem:numberofbitflips} to bound the maximal number of bits flipped in a single iteration to show that all points visited sufficiently close to $x$ combined can only uncover a small fraction of $\HL{x}{s}$. In the following, we assume that $\ell \le 2r$, because we will later show that at most $r$ bits are flipped in a single iteration, so for $\ell$ larger than $2r$, we cannot reach $\HL{x}{s}$.

  For the first part, consider a fixed parent $y$ and recall that $\ell = |I| = \HD{x}{y}$. To reach a point in $\HL{x}{s}$ from $y$, we need to flip at least $k_{\min} = \max\{0, \ell - s\}$ and at most $k_{\max} = \ell - 1$ bits in $I$ as otherwise, we either do not reach $\HL{x}{s}$ or we are in case 1. We count all points in $\HL{x}{s}$ that can be reached from parent $y$  by flipping between $k_{\min}$ and $k_{\max}$ bits in $I$. To this end, let $A_k(x,y)$ be the set of points that differ in $k$ bits from $y$ in $I$. Note further that if we flip $k$ bits in $I$, then we need to additionally flip $s - (\ell - k)$ bits in $[n] \setminus I$ to reach $\HL{x}{s}$. Accordingly, the fraction of points in $\HL{x}{s}$ reachable from $y$ this way is
  \begin{align*}
    \sum_{k = k_{\min}}^{k_{\max}} \frac{|\HL{x}{s} \cap A_k(x,y)|}{|\HL{x}{s}|} & = \sum_{k = k_{\min}}^{k_{\max}} \binom{\ell}{k} \binom{n - \ell}{s - (\ell - k)} / \binom{n}{s}\\
    & \le \sum_{k = k_{\min}}^{k_{\max}} \binom{\ell}{k} \binom{n}{s - (\ell - k)} / \binom{n}{s}\\
    &\le \sum_{k = k_{\min}}^{k_{\max}} \binom{\ell}{k} \frac{(n-s)! s!}{(n-s+\ell - k)!(s - \ell + k)!}.
  \end{align*}
  It is easy to see that $\frac{s!}{(s - \ell + k)!} \le s^{\ell-k}$ and that $\frac{(n-s)!}{(n-s+\ell - k)!} \le (n-s)^{-(\ell - k)}$. Accordingly, \begin{align*}
    \sum_{k = k_{\min}}^{k_{\max}} \frac{|\HL{x}{s} \cap A_k(x,y)|}{|\HL{x}{s}|} &\le \sum_{k = k_{\min}}^{k_{\max}} \binom{\ell}{k} \left( n - s \right)^{k-\ell} s^{\ell - k}\\
    &= \sum_{k = k_{\min}}^{k_{\max}} \binom{\ell}{k} \left( \frac{s}{n-s} \right)^{\ell - k}\\
    &= \sum_{k = k_{\min}}^{k_{\max}} \binom{\ell}{\ell - k} \left( \frac{s}{n-s} \right)^{\ell - k}\\
    &\le \sum_{k = k_{\min}}^{k_{\max}} \bigO{\frac{\ell}{\ell-k}}^{\ell-k} \left( \frac{s}{n-s} \right)^{\ell - k}\\
    &\le \sum_{k = k_{\min}}^{k_{\max}} \bigO{ \frac{\ell s}{(\ell - k) (n - s)} }^{\ell - k}\\
    &\le \sum_{k = k_{\min}}^{k_{\max}} \bigO{ \frac{\ell s}{n - s} }^{\ell - k}.
  \end{align*} Now, because the sum is geometric, because $k \le \ell -1$ and because $\ell s \le 2r^2 = o(n)$ (for sufficiently small $\delta$, i.e., $\delta < 1/4$), we conclude that \begin{align*}
    \sum_{k = k_{\min}}^{k_{\max}} \frac{|\HL{x}{s} \cap A_k(x,y)|}{|\HL{x}{s}|} &= \bigO{ \frac{\ell s}{n} }.
  \end{align*}

  We have thus shown that for every visited point $y$ at Hamming distance $\ell$ of $x$, only a fraction of $\bigO{\ell s / n}$ of $\HL{x}{s}$ is uncovered for all $1 \le s \le r$ under the prerequisites of case 2. However, there might be multiple points $y$ we visit before sampling $x$ for the first time. To bound this number, we use \Cref{lem:onlylogninfitnesslayers}, and we denote by $\mathcal{E}_{\text{visit}}$ the event that the number of visited points with exactly $t$ one-bits is \begin{align*}
    \bigO{ \log(|\OneMaxLayer(t)|) + \log^2(n)} 
  \end{align*} for all $t \in [n]$. By \Cref{lem:onlylogninfitnesslayers} and a union bound, this occurs with probability $1 - n^{-\omega(1)}$. Furthermore, we let $\mathcal{E}_{\text{flip}}$ be the event that the maximum number of bits flipped in a single mutation during the first $\timehorizon = \exp(n^{\delta})$ iterations is $r = n^{2\delta}$. Note that by \Cref{lem:onlylogninfitnesslayers} and \Cref{lem:numberofbitflips}, $\mathcal{E}_{\text{visit}} \cap \mathcal{E}_{\text{flip}}$ occurs with probability $1 - n^{-\omega(1)}$.

  Assuming this, we bound the total fraction of the number of uncovered points in $\HL{x}{s}$ under the prerequisites of case 2. To this end, note that conditional on $\mathcal{E}_{\text{flip}}$, only points within Hamming distance $r + s \le 2r$ of $x$ can uncover points in $\HL{x}{s}$, so we only have to take case $\ell \le 2r$ into account. Note that the set of points within hamming distance $2r$ of $x$ covers at most $4r$ \OneMax-layers. Furthermore, because $x$ has at most $t \coloneqq n^{1-\varepsilon}$ zero-bits, the size of each such level is at most \begin{align*}
    \binom{n}{t + 4r} \le n^{t + 4r},
  \end{align*} and since $\mathcal{E}_{\text{visit}}$ occurs, in each such level only \begin{align*} 
    \bigO{\log(n^{t + 4r}) + \log^2(n)} = \bigO{(t + r) \log^2(n) }
  \end{align*} points are visited. Therefore, if $\mathcal{E}_{\text{visit}} \cap \mathcal{E}_{\text{flip}}$ occurs, then the fraction of uncovered points in $\HL{x}{s}$ under the prerequisites of case 2 is at most \begin{align*}
    \bigO{ r (t + r) \log^2(n) \frac{\ell s}{n}} = \bigO{\frac{r^3( t  + r) \log^2(n)}{n}}
  \end{align*} where we bounded $\ell \le 2r$ and $s\le r$. 
\end{proof}

\subsection{Climbing up the Distortion Ladder}

In this section, we prove our first main theorem stating that under \Cref{ass:distribution} and with high probability, the \olea reaches a point of large distortion before coming close to the optimum. \oleareachesdistortion*

We prove this statement by showing that we likely reach a point that has sufficiently large distortion and \ZeroMax-value in the order of $n^{1 - \varepsilon}$ for arbitrarily small $\varepsilon> 0$. When this happens, it is very likely that the next accepted individual has larger distortion. 

In the following, we will fix the time horizon $\timehorizon = \timehorizonvalue$ as in \Cref{thm:oleadistortionisreached} mainly to ensure a bounded number of bit-flips in a single iteration. We further denote by $\goodevent$ the event described in \Cref{lem:onlyfewoftheneighborhoodsiuncovered} for $\delta = \varepsilon/8$. That is, $\goodevent$ denotes the event that during the first $\timehorizon$ iterations, for all $x \in \{0, 1\}^n$ with at most $t = n^{1 - \varepsilon}$ zeros and for all $1 \le s \le r = n^{\varepsilon/4}$, the fraction of points uncovered in $\HL{x}{s}$ before first sampling $x$ is at most \begin{align*}
    \bigO{r^3( t  + r) \log^2(n)/n} + o(1) = o(1).
\end{align*}
We note that $\goodevent$ occurs with high probability, and we shall make it clear when we condition on $\goodevent$ in the following lemmas. We start with the following auxiliary statement that bounds the probability of not decreasing the number of ones for a search point with $k$ zeros.
\begin{lemma}\label{lem:bitflipfitnessincrease}
  Let $x$ be the current individual with $\ZM(x) = k = o(n)$, and let $\ell \le n/2$. Let further $y$ be an offspring with parent $x$. Then for all $t \in \mathbb{N}_0$, \begin{align*}
    \Pr{\OM(y) \ge \OM(x) + t \mid \ell \text{ bits flip in } x } = \bigO{ k/n }^{\lceil \frac{\ell + t}{2} \rceil} 
  \end{align*}
\end{lemma}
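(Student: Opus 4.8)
The plan is to reduce the claim to a standard tail bound for a hypergeometric random variable. Condition throughout on the event that exactly $\ell$ bits of $x$ are flipped; then the set $S$ of flipped positions is a uniformly random $\ell$-element subset of $[n]$. Write $Z$ for the set of the $k$ zero-positions of $x$, and let $A \coloneqq |S \cap Z|$ be the number of zero-bits that get turned into ones. The other $\ell - A$ flipped positions carry ones that become zeros, so $\OM(y) = \OM(x) + A - (\ell - A) = \OM(x) + 2A - \ell$. Hence $\{\OM(y) \ge \OM(x) + t\}$ is exactly the event $\{A \ge (\ell+t)/2\}$, which --- $A$ being integer-valued --- equals $\{A \ge a_0\}$ with $a_0 \coloneqq \lceil (\ell+t)/2\rceil$.

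Before estimating, I would clear away the degenerate cases. If $t > \ell$ or $a_0 > k$ then $\{A \ge a_0\}$ is impossible, so its probability is $0$; and if $a_0 = 0$ (equivalently $\ell = t = 0$) the bound $\bigO{k/n}^0$ holds trivially. It remains to treat $1 \le a_0 \le \min\{\ell, k\}$; note that then $t \le \ell$, so $\ell \le \ell + t \le 2a_0$, an inequality used at the end. For the main estimate, bound $\Pr{A \ge a_0}$ by a union bound over the $a_0$-subsets of $Z$: the event holds iff some $T \subseteq Z$ with $|T| = a_0$ satisfies $T \subseteq S$, and for a fixed such $T$ one has $\Pr{T \subseteq S} = \binom{n - a_0}{\ell - a_0}/\binom{n}{\ell} = \prod_{i=0}^{a_0-1}\frac{\ell - i}{n - i} \le (\ell/n)^{a_0}$ since $\ell \le n$. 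Combining this with $\binom{k}{a_0} \le k^{a_0}/a_0!$ and $a_0! = \Theta(a_0)^{a_0}$ from \Cref{lem:fac} gives
\[
  \Pr{A \ge a_0} \le \binom{k}{a_0}\left(\frac{\ell}{n}\right)^{a_0} \le \left(\frac{k\ell}{\Theta(a_0)\,n}\right)^{a_0} \le \left(\frac{2k}{\Theta(1)\,n}\right)^{a_0} = \bigO{k/n}^{a_0},
\]
where the last inequality uses $\ell \le 2a_0$. Since this bound is uniform in $x$, it is the claim.

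I do not anticipate a real obstacle: this is a routine hypergeometric tail estimate, and one could equally well expand $\Pr{A \ge a_0} = \sum_{a \ge a_0}\binom{k}{a}\binom{n-k}{\ell-a}/\binom{n}{\ell}$ and bound it term by term with \Cref{lem:binomapprox}. The only points needing a little care are the bookkeeping around the ceiling in $a_0$, the three boundary cases ($t > \ell$, $a_0 > k$, $\ell = 0$), where the probability is either $0$ or trivially bounded, and recording the inequality $\ell \le 2a_0$, which is precisely what turns the otherwise-unhelpful factor $(\ell/a_0)^{a_0}$ into a constant that is absorbed by the $\bigO{\cdot}$.
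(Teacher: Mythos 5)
Your proof is correct and follows essentially the same route as the paper's: both condition on exactly $\ell$ bits flipping, observe that the event requires at least $a_0 = \lceil (\ell+t)/2\rceil$ of the flipped positions to lie among the $k$ zero-bits, and use $a_0! = \Theta(a_0)^{a_0}$ (\Cref{lem:fac}) together with $\ell \le 2a_0$ to absorb the factor $(\ell/a_0)^{a_0}$ into the $\bigO{\cdot}$ constant. The only difference is how the tail probability is bounded: you apply a union bound over the $\binom{k}{a_0}$ minimal witness sets $T\subseteq Z$ with $\Pr{T\subseteq S}\le (\ell/n)^{a_0}$, while the paper sums the hypergeometric pmf over $i \ge a_0$ and invokes $k=o(n)$ to dominate that sum by its first term; your variant is marginally cleaner in that it does not need $k=o(n)$ for this step, but the underlying combinatorics is the same.
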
 \begin{proof}
  To change \OneMax-fitness by at least $t$, at least $\lceil (\ell + t)/2 \rceil$ of the flipped bits need to be among the current $0$-bits. As there are $\binom{n}{\ell}$ possibilities of flipping $\ell$ bits in total, we have \begin{align*}
    &\Pr{\OM(y) \ge \OM(x) + t \mid \ell \text{ bits flip in } x } \\
    & \hspace{3cm} = \sum_{ i = \lceil \frac{\ell + t}{2} \rceil}^\ell \frac{\binom{k}{i}\binom{n - k}{\ell - i}}{\binom{n}{\ell}} \\
    & \hspace{3cm} \le \sum_{ i = \lceil \frac{\ell + t}{2} \rceil}^\ell \frac{k^i}{i!} \frac{\ell! (n - \ell)!}{(\ell - i)!(n - \ell + i)!}\\
    & \hspace{3cm} \le \sum_{ i = \lceil \frac{\ell + t}{2} \rceil}^\ell \frac{k^i}{(n - \ell)^i} \frac{\ell^i}{i!} \\
    & \hspace{3cm} \le \sum_{ i = \lceil \frac{\ell + t}{2} \rceil}^\ell \frac{k^i}{(n/2)^i} \frac{\ell^i}{\Omega(\ell)^i}
  \end{align*} where in the last step we used $i! = \Omega(i)^i$ by \Cref{lem:fac} and $l \le n/2$. Hence, \begin{align*}
    \Pr{\OM(y) \ge \OM(x) + t \mid \ell \text{ bits flip in } x } &= \sum_{ i = \lceil \frac{\ell + t}{2} \rceil}^\ell \bigO{ k/n }^i \\ 
    &= \bigO{ k/n }^{\lceil \frac{\ell + t}{2} \rceil}
  \end{align*} because $k = o(n)$ so the sum is geometric.
\end{proof} 

We further need the following lemma ensuring that the number of points of a certain distortion in $\HL{x}{\ell}$ concentrates well if $\ell$ is of the same order as $d$.

\begin{lemma}\label{lem:distortionconcentration}
    Consider any fixed visited point $x$ during the first $\timehorizon$ iterations with $\ZM(x) \le n^{1-\varepsilon}$. Assume that $\ell, d$ (with $d \le \hat{d}$) are such that there are  constants $\varepsilon_1, \varepsilon_2 > 0$ such that \begin{align*}
        \varepsilon_1 d \le \ell \le r
    \end{align*} and \begin{align*}
        \frac{\log(\ell)}{\log(n)} + \frac{\log(1/p)}{\ell\log(n)} \le 1 - \varepsilon_2.
    \end{align*}
    Let $z$ be a uniform sample from $\HL{x}{\ell}$ and let $Z \subseteq \HL{x}{\ell}$ be the set of points with distortion $\ge d$. Then, we have \begin{align*}
        \frac{|Z|}{|\HL{x}{\ell}|} = \Omega(p \sigma^{-d}) 
    \end{align*} with probability $1 - n^{-\omega(1)}$. 
\end{lemma}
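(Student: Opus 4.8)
The plan is to recognise $|Z|$ as, essentially, a binomial random variable and apply a Chernoff bound: each point of $\HL{x}{\ell}$ independently lies in $Z$ with probability $\pi \coloneqq p\Pr{D \ge d}$ (it must be distorted, probability $p$, and then have distortion $\ge d$, probability $\Pr{D \ge d}$). The only genuine difficulty is that the noise is frozen and $x$ is a point the algorithm \emph{chose} to visit, so conditioning on the run might bias the distortions in $\HL{x}{\ell}$ downward. I would remove this using \Cref{lem:onlyfewoftheneighborhoodsiuncovered}, applied to the single fixed point $x$ — legitimate, since the per-point failure probability in that proof is $n^{-\omega(1)}$, and it is only the union over all $x$ that weakens the guarantee to ``with high probability'' — which, using $\ell \le r$, shows that with probability $1 - n^{-\omega(1)}$ only an $o(1)$-fraction of $\HL{x}{\ell}$ has been evaluated by the time $x$ is first sampled. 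I would then condition on the algorithm's trajectory up to that moment: it determines the set $U \subseteq \HL{x}{\ell}$ of not-yet-evaluated points, with $|U| \ge (1-o(1))|\HL{x}{\ell}|$ on the above event, and the distortions $\{\DisOMpar(z)\}_{z \in U}$ are still fresh and independent. Hence $|Z| \ge |Z \cap U|$, and conditionally on the trajectory $|Z \cap U| \sim \mathrm{Bin}(|U|, \pi)$.

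Next I would lower-bound the mean $|U|\pi$. Iterating \Cref{ass:distribution} (valid since $d \le \hat d$) together with $\Pr{D \ge 0} = \Omega(1)$ gives $\Pr{D \ge d} = \Omega(\sigma^{-d})$, so $\pi = \Omega(p\sigma^{-d})$, and \Cref{lem:binomapprox} (using $\ell \le r \le n/2$) gives $|\HL{x}{\ell}| = \binom{n}{\ell} = n^{\ell}\,\Theta(\ell)^{-\ell}$; thus $\log(|U|\pi) \ge \ell\log n - \ell\log\Theta(\ell) + \log p - d\log\sigma - O(1)$. Here the two hypotheses enter: multiplying the second by $\ell\log n$ gives $\ell\log\ell - \log p \le (1-\varepsilon_2)\ell\log n$, i.e.\ $\ell\log n - \ell\log\ell + \log p \ge \varepsilon_2\ell\log n$; the first, $\varepsilon_1 d \le \ell$, together with $\sigma = n^{o(1)}$, gives $d\log\sigma \le \varepsilon_1^{-1}\ell\log\sigma = o(\ell\log n)$; and $\ell\log\Theta(\ell) = \ell\log\ell + O(\ell) = \ell\log\ell + o(\ell\log n)$. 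Combining these, $\log(|U|\pi) \ge (\varepsilon_2 - o(1))\ell\log n \ge \tfrac{\varepsilon_2}{2}\log n$ for large $n$ (as $\ell \ge 1$), so $|U|\pi = n^{\Omega(1)}$.

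Finally I would invoke the lower-tail Chernoff bound of \Cref{thm:Chernoff}(ii) with $\delta = \tfrac{1}{2}$: conditionally on any trajectory lying in the exploration event, $\Pr{|Z \cap U| \le \tfrac{1}{2}|U|\pi} \le \exp(-|U|\pi/8) \le \exp(-n^{\Omega(1)}) = n^{-\omega(1)}$, and since this is uniform over such trajectories, a union bound with the $n^{-\omega(1)}$ failure probability of the exploration event gives that, with probability $1 - n^{-\omega(1)}$, $|Z| \ge |Z \cap U| \ge \tfrac{1}{2}|U|\pi \ge \tfrac{1}{2}(1-o(1))|\HL{x}{\ell}|\,\pi$, i.e.\ $|Z|/|\HL{x}{\ell}| = \Omega(\pi) = \Omega(p\sigma^{-d})$ (which in particular means a uniform $z \in \HL{x}{\ell}$ lands in $Z$ with probability $\Omega(p\sigma^{-d})$). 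The main obstacle is the first step — converting the frozen-noise, trajectory-dependent layer $\HL{x}{\ell}$ into one on which a deferred-decisions argument applies; \Cref{lem:onlyfewoftheneighborhoodsiuncovered} does this at the cost of only a harmless $1-o(1)$ factor, and everything afterward is a routine binomial estimate, the one point needing care being to check that the hypotheses on $\ell$ and $d$ force $|U|\pi$ to be \emph{polynomially} large (not merely $\omega(\log n)$), which is what yields the strong $n^{-\omega(1)}$ error probability rather than just $o(1)$.
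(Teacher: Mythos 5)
Your proposal is correct and follows essentially the same route as the paper: both use the exploration lemma (\Cref{lem:onlyfewoftheneighborhoodsiuncovered}) to ensure a $(1-o(1))$-fraction of $\HL{x}{\ell}$ is still unrevealed when $x$ is first visited, lower-bound the expected number of high-distortion points by $n^{\Omega(1)}$ via \Cref{lem:binomapprox}, \Cref{ass:distribution} and the two hypotheses on $\ell, d$, and finish with a lower-tail Chernoff bound. Your treatment is in fact slightly more careful than the paper's on one point — you invoke the per-point $n^{-\omega(1)}$ failure probability of the exploration argument rather than the global ``with high probability'' statement of \Cref{lem:onlyfewoftheneighborhoodsiuncovered}, which is exactly what is needed to justify the final $1-n^{-\omega(1)}$ bound.
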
 \begin{proof}
    Conditional on $\goodevent$, we can uncover a fraction of $(1 - o(1))$ of $\HL{x}{\ell}$ when first visiting $x$. We show that $\Expected{|Z| \mid \goodevent} = n^{\Omega(1)}$, then the statement follows by a Chernoff bound. By \Cref{lem:binomapprox} and \Cref{ass:distribution}, we note that the expected number of points in $Z$ is \begin{align*}
        \Expected{|Z| \mid \goodevent} &\ge (1 - o(1))\binom{n}{\ell} p \Pr{D \ge d} \\ &\ge c_2p\left( \frac{n}{c_1\ell} \right)^{-\ell}  \sigma^{-\ell/\varepsilon_1}\\ &= c_2p\left( \frac{n \sigma^{-1/\varepsilon_1}}{c_1\ell} \right)^\ell = c_2p \cdot n^{\ell(1-o(1))}\ell^{-\ell} 
    \end{align*} where $c_1$ and $c_2$ are constants. To show that this is $\ge n^{\Omega(1)}$, we note that \begin{align*}
        &\frac{\log(\Expected{|Z| \mid \goodevent})}{\log(n)} \\
        &\hspace{1cm}\ge \ell \left( 1-o(1) - \frac{\log(\ell)}{\log(n)} \right) - \frac{\log(1/p)}{\log(n)} + \frac{\log(c_2)}{\log(n)}\\
        &\hspace{1cm}= \ell\left( 1 - \frac{\log(\ell)}{\log(n)} - \frac{\log(1/p)}{\ell\log(n)} - o(1) \right) \pm o(1)\\
        &\hspace{1cm}\ge \ell (\varepsilon_2 - o(1)) \pm o(1)\\
        &\hspace{1cm} = \Omega(1).
    \end{align*} Thus, a Chernoff bound implies that $|Z| = \Theta(\Expected{|Z| \mid \goodevent})$ with probability $1 - n^{-\omega(1)}$ conditional on $\goodevent$. This implies that \begin{align*}
        \frac{|Z|}{|\HL{x}{\ell}|} &= \Theta(\Expected{|Z| \mid \goodevent}) / \binom{n}{\ell}\\
        &= \Theta(p \Pr{D \ge d}) = \Omega(p\sigma^{-d}).
    \end{align*} 
    Now, because $\goodevent$ occurs with probability $1 - n^{-\omega(1)}$, this also holds with probability $1 - n^{-\omega(1)}$ unconditionally, as desired.
\end{proof}

Using this, we show that we likely reach an individual of sufficiently large distortion and roughly $n^{1 - \varepsilon}$ zeros. This will be needed to kickstart our process.

\begin{lemma}\label{lem:wereachdistortion}
  Let $d_0, \varepsilon > 0$ be constants. Then with high probability, the \olea either reaches a point $x$ with distortion at least $d_0$ and $\ZM(x) \in [n^{1-3\varepsilon}, n^{1-\varepsilon}/2]$ within the first $\timehorizon = \timehorizonvalue$ iterations, or $\timehorizon$ iterations pass during which the current individual $x$ fulfills $\ZM(x) \ge n^{1 - \varepsilon}/4$.
\end{lemma}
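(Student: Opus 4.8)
The plan is to establish the statement in contrapositive form: if within the first $\timehorizon=\timehorizonvalue$ iterations the \olea ever reaches a point with $\ZM<n^{1-\varepsilon}/4$, then with high probability it has already visited a point of distortion $\ge d_0$ with $\ZM$ in $[n^{1-3\varepsilon},n^{1-\varepsilon}/2]$. Since proving the claim for a large constant also proves it for every smaller one, we may take $d_0$ as large as convenient and fix a further constant $D_1=D_1(\varepsilon,d_0)>2d_0+2/\varepsilon+2$. We condition on three high-probability events: $\ZM(x_0)\ge n/3$ (Chernoff); the event of \Cref{lem:numberofbitflips} that no mutation flips more than $r=n^{\varepsilon/4}$ bits during the first $\timehorizon$ iterations; and — invoking \Cref{lem:distortionconcentration} with $d=D_1$ and $\ell\in[d_0/2,d_0]$ together with a union bound over all visited points (whose number per $\ZM$-layer is polynomially bounded by \Cref{lem:onlylogninfitnesslayers}) — the event $\mathcal E_{\mathrm{conc}}$ that every visited point $x$ with $\ZM(x)\le n^{1-\varepsilon}$ has at least $\Omega(\binom n\ell\,p\,\sigma^{-D_1})$ neighbours of distortion $\ge D_1$ in $\HL{x}{\ell}$ for all $\ell\in[d_0/2,d_0]$. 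Routing everything through \Cref{lem:distortionconcentration} is deliberate: it lets the rest of the argument speak only about the \emph{true} density of distorted neighbours, so no ``freshness'' of individual offspring is ever needed.

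The core is a per-iteration estimate. Call an iteration active if the current individual $x_t$ satisfies $\ZM(x_t)\in[n^{1-3\varepsilon}+r,\,n^{1-\varepsilon}/2-r]$ and no point of distortion $\ge d_0$ with $\ZM$ in $[n^{1-3\varepsilon},n^{1-\varepsilon}/2]$ has yet been accepted. I claim that, conditionally on the past, in an active iteration the next accepted individual is a distorted point of distortion $\ge d_0$ (with $\ZM$ automatically in $[n^{1-3\varepsilon},n^{1-\varepsilon}/2]$, using $\mathcal E_{\mathrm{flip}}$) with probability $\rho:=\Omega(\lambda\,p\,\sigma^{-D_1})$. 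Indeed, by $\mathcal E_{\mathrm{conc}}$ and \Cref{lem:binomapprox} a single offspring lands, for some $\ell\in[d_0/2,d_0]$, on a distortion-$\ge D_1$ neighbour $z$ of $x_t$ with probability $\Omega(p\sigma^{-D_1})$; such $z$ differs from $x_t$ in at most $d_0$ bits, so $\OM(z)\ge\OM(x_t)-d_0$, and since no distortion $\ge d_0$ has been accepted, $f(x_t)\le\OM(x_t)+d_0<\OM(x_t)-d_0+D_1\le f(z)$ — the large distortion overwhelms the small \OM-gap — so some offspring of fitness $\ge f(z)$ is accepted. Unless one of the $\lambda$ offspring flips $\ge\lceil 2/\varepsilon\rceil$ of the current zero-bits (probability $O(\lambda n^{-2})=o(\rho)$ by \Cref{lem:binomapprox}), the accepted point cannot be undistorted (it would need $>\lceil 2/\varepsilon\rceil$ more ones than $x_t$) and, by the same fitness arithmetic, has distortion $>d_0$. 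A union bound over the $\lambda$ offspring (in the regime $\lambda p\sigma^{-D_1}\le1$; otherwise the bound is $\Omega(1)$) gives $\rho$.

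Next one argues that, on the event that $\ZM$ drops below $n^{1-\varepsilon}/4$ within $\timehorizon$ iterations without the desired alternative occurring, the walk has at least $\Omega(\varepsilon\,n\log n/\lambda)$ active iterations — and this is where the window $[n^{1-3\varepsilon},n^{1-\varepsilon}/2]$ being \emph{multiplicatively} wide (factor $n^{2\varepsilon}$) is essential, since it is precisely a logarithmic number of ``doublings'' of the $\ZM$-window that supplies the $\log n$ factor the per-iteration bound lacks. Whenever $\ZM(x_t)=k$, an accepted individual with fewer zeros appears with probability only $O(\lambda k/n)$, so by a standard $(1+\lambda)$-EA argument the walk spends, w.h.p., $\Omega(n/(\lambda k))$ iterations at each level $k$ it visits, and $\sum_k n/(\lambda k)$ over the $\Theta(n^{1-\varepsilon})$ window-levels is $\Omega(\varepsilon\,n\log n/\lambda)$. (If the walk instead stalls at a planted optimum inside the window, or terminates at the target inside it, it spends at least that many iterations there; a stall strictly above the window would require distortion $\omega(1)$ and cannot occur within the relevant time — these boundary cases, and the interaction with the target fitness $n-k^*$, are the genuinely fiddly part, but the generous choice $\timehorizon=\exp(n^{\varepsilon/8})$ leaves ample room.) Combining with the per-iteration estimate, the probability of never triggering the first alternative over these iterations is at most
\begin{align*}
  (1-\rho)^{\Omega(\varepsilon n\log n/\lambda)}\le\exp\!\left(-\Omega\!\left(\varepsilon\,n\log(n)\,p\,\sigma^{-D_1}\right)\right)=\exp(-\omega(1))=o(1),
\end{align*}
using \Cref{ass:oleaineff}, i.e.\ $p\sigma^{-D_1}=\omega(1/(n\log n))$; adding the $o(1)$ failure probabilities of the conditioned events completes the proof.

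The main obstacle is exactly this balancing act: \Cref{ass:oleaineff} grants only $p\sigma^{-c}=\omega(1/(n\log n))$, far too weak per iteration, and the argument survives solely because the multiplicative width of the $\ZM$-window multiplies the number of active iterations by $\Theta(\log n)$. Making this rigorous demands (i) a clean high-probability lower bound on the number of active iterations, including the awkward scenarios where the \olea stalls at a local optimum or stops at the target within the window, and (ii) control of the frozen-noise dependencies between visited points and the distortions in their neighbourhoods, which here is delegated wholesale to \Cref{lem:distortionconcentration}.
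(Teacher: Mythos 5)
Your proposal follows essentially the same route as the paper: both arguments rest on the product of two quantities, namely $\Omega(n\log n)$ iterations spent with $\ZM$ in a multiplicatively wide window times an $\omega(1/(n\log n))$ per-iteration probability (via \Cref{lem:distortionconcentration} plus a union bound over the polynomially many visited points) of sampling and accepting a high-distortion neighbour. Two of your deviations are genuine improvements in exposition: choosing the auxiliary threshold $D_1>2d_0+2/\eps+2$ so that \emph{any} accepted offspring of fitness $\ge f(z)$ must itself be distorted with distortion $>d_0$ (unless some offspring makes an $n^{-2}$-unlikely \OM-jump) removes the paper's need to condition on all other $\lambda-1$ offspring being worse than the parent; and your fitness arithmetic $f(x_t)\le\OM(x_t)+d_0$ holds for negative distortions too, so you do not need the paper's separate ``escape strongly negative distortions in $\log n$ iterations'' step. (Both you and the paper implicitly need the relevant distortion level — $D_1$ for you, $d_0+2$ for the paper — to be at most $\hat d$; yours is the more demanding requirement, but this is a shared, unstated assumption.)

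The one step that does not hold as written is the count of active iterations. First, the per-level waiting-time sum ``$\Omega(n/(\lambda k))$ at each level $k$ it visits'' does not yield $\Omega(\eps n\log n/\lambda)$ if the walk skips levels: with accepted steps of size up to $r$, the sum over \emph{visited} levels can degrade to $\Omega(\eps n\log n/(\lambda r))$, which is useless. The correct tool is a drift bound on $\log(\ZM)$ — the expected per-iteration decrease of $\log\ZM$ is $O(\Expected{Y^*(k)}/k)=O(\lambda/n)$ regardless of step sizes — which is exactly what the paper imports as \cite[Theorem~3.6]{Jorritsma_Lengler_Sudholt_2023}. Second, and more importantly, your trigger event ``$\ZM$ drops below $n^{1-\eps}/4$'' only forces the walk through the constant-factor sub-window $[n^{1-\eps}/4,\,n^{1-\eps}/2-r]$, i.e.\ only $\Omega(n/\lambda)$ active iterations; plugging this into your final bound gives success probability only $\Theta(n p\sigma^{-D_1})=\omega(1/\log n)$, which need not be $1-o(1)$ under \Cref{ass:oleaineff}. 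The missing $\log n$ factor comes from the \emph{forward-looking} statement the paper uses: \emph{after} the first drop to $\approx n^{1-\eps}/4$, the descent-time lower bound guarantees that the next $\Omega(\eps n\log n)$ iterations all still have $\ZM\ge n^{1-3\eps}$ (and $\le n^{1-\eps}/2$, by the ``zeros can only increase if distortion increases'' argument), and every one of these is an opportunity. Your parenthetical about stalls gestures at this but does not establish it; reorganizing the count into this forward form (and invoking the drift bound rather than the per-level sum) closes the gap.
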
 \begin{proof}
  We follow a strategy very similar to \cite[Section 5]{Jorritsma_Lengler_Sudholt_2023}, i.e., we show our statement in the following three parts. The difference is that we have to ensure that we reach a point of \emph{sufficiently large} distortion instead of just any distorted point. As key steps, we shows that the following three points hold w.h.p.
  \begin{enumerate}
    \item[(i)] Either $\mathcal{T}$ iterations pass and all visited points have $\ZM$-value at least $n^{1 - \varepsilon}/4$ or a point $x$ with $\ZM(x) \in I \coloneqq [n^{1  - 2\varepsilon}, n^{1 - \varepsilon}/4]$ is visited.
    \item[(ii)] If a point with $\ZM(x) \in I$ is visited, then during the next $\Omega(n \log(n))$ iterations, the current individual $x$ fulfills $\ZM(x) \ge n^{1 - 3\varepsilon}$.
    \item[(iii)] During these $\Omega(n \log(n))$ iterations, only a sub-constant fraction of iterations produces an offspring of higher fitness and smaller $\ZM$-value. In all iterations where this does not occur, we have a good probability of sampling a point of distortion at least $d_0$ in the 2-neighborhood of $x$.
  \end{enumerate}
  
  We start with part (i). To this end, note that by a Chernoff bound, we likely start with an individual $x$ such that $\ZM(x) = \Theta(n)$. Furthermore, by \Cref{lem:numberofbitflips} the maximum number of bit flips that occur within the first $\timehorizon$ iterations is at most $\bigOCompact{\log(\timehorizon) + \log^2(n)} = \bigOCompact{n^{\varepsilon/8}}$ w.h.p. Hence, we do not jump over the interval $I$ as otherwise, this would imply that we flip $\Theta(n^{1 - \varepsilon})$ bits in a single iteration. 

  For part (ii), we show that once a point in $I$ is visited, the algorithm spends $\mathcal{T}_1 \coloneqq \Theta(n \log(n))$ iterations (with a suitably small hidden constant) during which the current individual $x$ is such that $\ZM(x) \ge n^{1-3\varepsilon}$. To this end, we apply \cite[Theorem 3.6]{Jorritsma_Lengler_Sudholt_2023} stating that our algorithm on any fitness function takes time $\Omega(n \log(a / b))$ to reach a point with Hamming-distance $b$ from a fixed point $x^*$ when starting at a point $x$ with Hamming distance $a$ from $x^*$. In our case, this means that we spend $\Omega(n \log(n))$ many iterations after visiting the first point in $I$ before the first point with less than $n^{1 - 3\varepsilon}$ zeros is visited.

  For part (iii), we show that we reach a point $x$ with distortion at least $d_0$ and $\ZM(x) \in [ n^{1 - 3\varepsilon}, n^{1 - \varepsilon}/2]$ during these $\Omega(n \log(n))$ iterations w.h.p. To this end, we start by considering the first $\log(n)$ iterations after first visiting a point $x$ with $x \in I$, and we show that we escape potential strongly negative distortions within this time. To see this, first note that during this time, the maximum number of bits flipped is $\bigOTilde{1}$, so the number of zeros remains $\le n^{1-\varepsilon}/3$. Now, while the current individual $x$ has distortion $d < -1$ the number of points in $\HL{x}{1}$ that are not distorted or distorted with positive distortion is $\Omega(n)$ w.h.p. To see this, note that the probability that a point is either non-distorted or has distortion $\ge 0$ is $\Omega(1)$, so if $\goodevent$ happens, we get from a Chernoff bound that the number of such points in $\HL{x}{1}$ is $\Omega(n)$ w.h.p. As $\goodevent$ happens w.h.p. as well, the statement follows. Hence, we will accept a point of distortion $d \ge 0$ if we sample one such point while all other generated offspring decrease the number of ones. By \Cref{lem:bitflipfitnessincrease} and since $\lambda = \bigO{\log(n)}$ this occurs with probability at least 
  \begin{align}\label{eq:lower}
    \left( 1 - cn^{-\varepsilon/2} \right)^{\lambda} = \Omega(1).
  \end{align}
  Accordingly, all this happens at least once during $\log(n)$ iterations w.h.p.  

  With this, we can now assume that we start at a point $x$ with distortion $d \ge -1$ (this includes the case where $x$ is not distorted) and $\ZM(x) \le n^{1-\varepsilon}/3$. We now consider the remaining $\mathcal{T}_1 - \log(n) = \Omega(n \log(n))$ iterations, and show that we reach a point of distortion $\ge d_0$ w.h.p. during this time.
  
  To this end, we let $x_1, \ldots, x_m$ be the search points visited in this time interval before the first point with distortion $d_0$ is reached. Again, by \Cref{lem:numberofbitflips}, we note that we only flip at most $\bigOTilde{1}$ bits in a single mutation during this time. This implies that we can assume that $\ZM(x_i) \le n^{1 - \varepsilon}/2$ for all $x_1, \ldots, x_m$ because we start at a point $x$ with $\ZM(x) \le n^{1 - \varepsilon}/3$ as well as distortion $d \ge -1$ and every time we increase the number of zeros, this must be compensated by an increase in distortion that is at least as large, so -- since the maximum number of bits flipped is $\bigOTilde{1}$ -- before falling back to a point with $ \ge n^{1 - \varepsilon}/2$ zeros, we reach a point of distortion at least $d_0$.

  With that in mind, we show that in every iteration, we have a probability of $\omega(1 / (n \log(n)))$ of sampling and accepting a point of distortion $\ge d_0$ while visiting $x_1, \ldots, x_m$. To this end, we apply \Cref{lem:distortionconcentration} and a union bound over all the $m = \bigO{n \log(n)}$ visited search points $x_1, \ldots, x_m$ to conclude that -- while the current individual is any of $x_1, \ldots, x_m$ -- the probability of sampling a point with distortion $d_0 + 2$ in $\HL{x}{2}$ is \begin{align*}
      \Omega(p \sigma^{-(d_0 + 2)}) = \omega(1/(n \log(n)))
  \end{align*} by \Cref{ass:oleaineff}. Note that the prerequisites of \Cref{lem:distortionconcentration} are met because that $p = \omega(1/(n \log(n)))$, $\ell = 2$ and because $d_0$ is constant. So in each iteration, we have a chance of $\omega(1/(n \log(n)))$ that the first sample has distortion $\ge d_0 + 2$ and is in $\HL{x}{2}$. Furthermore -- as noted before in \Cref{eq:lower} -- the probability that all the other $\lambda - 1$ offspring we create have \OneMax-value smaller than $x$ is at least $\Omega(1)$.
  Thus, in each iteration, there is a probability of $\omega(1/(n \log(n)))$ of finding and accepting a point of distortion $\ge d_0$, which implies that this happens at least once during $\Omega(n \log(n))$ iterations w.h.p.
\end{proof}

We proceed by showing that if we visit a point $x$ with distortion $d$ and $\ZM(x) \le n^{1-\varepsilon}$, then -- among the points of higher fitness than $x$ -- the number of points of distortion at least $d + 1$ is (in expectation) much larger than the number of points that have  distortion $< d + 1$. To this end, we estimate the number of points with fitness at least $f(x)$ and distortion larger and smaller than $d+1$, respectively, in $\HL{x}{\ell}$. To this end, we denote by $\D$ the set of points in $\HL{x}{\ell}$ with fitness at least $f(x)$. Let further $\Dminus \subseteq \D$ be the set of search points in $\D$ of distortion smaller than $d + 1$, and let $\Dplus$ be the set of search points with distortion at least $d + 1$. 

\begin{lemma}\label{lem:expectedneighbors}
  Let $x$ be a point visited during the first $\timehorizon$ iterations with $k \le n^{1- \varepsilon}$ zeros and distortion $d > 0$. Let further $\hat{\ell} \coloneqq \min\{r, \hat{d} - d\}$. Then conditional on $\goodevent$, for all $\ell$ with $1 \le \ell \le \hat{\ell}$, we have\footnote{Technically, the statement refers to the conditional expectations $\Expected{|\Dminus| \mid \goodevent}, \Expected{|\Dplus| \mid \goodevent}$, but we omit this notation for the sake of better readability} \begin{align*}
    \Expected{|\Dminus|}/\Expected{|\Dplus|} \le \begin{cases}
      \bigO{k \sigma^4 /n}^{\ell/2} & \text{if } \ell < d\\
      \bigO{k \sigma^4 /n}^{\ell/2} / p & \text{otherwise.}
    \end{cases}
  \end{align*}
\end{lemma}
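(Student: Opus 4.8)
The plan is to compare $\Expected{|\Dplus|}$ and $\Expected{|\Dminus|}$ term by term, after first deferring as far as possible the revelation of the distortions around $x$. Write $k=\ZM(x)$, so $f(x)=\OM(x)+d$, and recall $\ell\le r$ and $d+\ell\le\hat{d}$ from $\ell\le\hat{\ell}$. Every $y\in\HL{x}{\ell}$ arises from $x$ by flipping some number $i\in\{0,\dots,\ell\}$ of the $k$ zero-bits of $x$ together with $\ell-i$ of its one-bits; hence $\OM(y)=\OM(x)-\ell+2i$, there are $\binom{k}{i}\binom{n-k}{\ell-i}$ such $y$ for each $i$, and the single class $i=0$ already contains $\binom{n-k}{\ell}=(1-o(1))\binom{n}{\ell}$ points of $\HL{x}{\ell}$ since $k\ell/n=o(1)$. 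Such a $y$ lies in $\D$ iff either (a) $y$ is undistorted and $\OM(y)\ge\OM(x)+d$, forcing $i\ge\lceil(\ell+d)/2\rceil$ and in particular $\ell\ge d$, or (b) $y$ is distorted with $D_y\ge d+\ell-2i$; in case~(a) $y\in\Dminus$, and in case~(b) $y\in\Dplus$ if additionally $D_y\ge d+1$ while $y\in\Dminus$ if $D_y<d+1$, the latter interval $[d+\ell-2i,\,d+1)$ being non-empty only when $i\ge\lceil\ell/2\rceil$.

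The next step is to reduce to a ``fresh'' neighbourhood. Since $\mathcal D$ is continuous, almost surely $f(x)\notin\ZZ$ and no point of $\HL{x}{\ell}$ has fitness exactly $f(x)$; and since the \olea is elitist, its current fitness is nondecreasing and equals $f(x)$ when $x$ is first visited, so any point sampled before that first visit had a parent of fitness $\le f(x)$ and therefore itself has fitness $<f(x)$, hence lies outside $\D$. Consequently $\D,\Dplus,\Dminus$ only involve points whose distortion is revealed no earlier than the first visit to $x$; conditioning on $\goodevent$ (as in the proof of \Cref{lem:distortionconcentration}), at most an $o(1)$-fraction of $\HL{x}{\ell}$ has been revealed by then, so these sets are supported on $(1-o(1))\binom{n}{\ell}$ points whose distortions are i.i.d.\ fresh samples. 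Keeping only the fresh $i=0$ points, which lie in $\Dplus$ exactly when distorted with $D_y\ge d+\ell$, yields $\Expected{|\Dplus|\mid\goodevent}\ge(1-o(1))\binom{n-k}{\ell}p\Pr{D\ge d+\ell}$ (positive, since \Cref{ass:distribution} forbids $\Pr{D\ge d+\ell}=0$ when $d+\ell\le\hat{d}$ lest $\Pr{D\ge0}=0$), while $\Expected{|\Dminus|\mid\goodevent}\le\sum_{y\in\HL{x}{\ell}}\Pr{y\in\Dminus}=:N+M$, with $N$ the undistorted contribution (zero unless $\ell\ge d$) and $M\le\sum_{i\ge\lceil\ell/2\rceil}\binom{k}{i}\binom{n-k}{\ell-i}p\Pr{D\ge d+\ell-2i}$ the distorted one.

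It then remains to insert standard estimates and take the ratio. For $i\ge\ell/2$, elementary estimates (\Cref{lem:fac}) give $\binom{k}{i}\binom{n-k}{\ell-i}/\binom{n-k}{\ell}=\bigO{k/n}^i$, and \Cref{ass:distribution} gives $\Pr{D\ge d+\ell-2i}\le\sigma^{2i}\Pr{D\ge d+\ell}$ (its intermediate arguments stay $\le d+\ell-1<\hat{d}$) together with $\Pr{D\ge d+\ell}\ge\Omega(\sigma^{-(d+\ell)-O(1)})$; since $k\sigma^2/n=n^{-\varepsilon+o(1)}=o(1)$, all the relevant sums are geometric, dominated by their smallest-$i$ term. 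Dividing $M$ termwise by the lower bound for $\Expected{|\Dplus|}$ (the factors $p$ and $\binom{n-k}{\ell}\Pr{D\ge d+\ell}$ cancel) gives $M/\Expected{|\Dplus|}\le\bigO{k\sigma^2/n}^{\lceil\ell/2\rceil}\le\bigO{k\sigma^4/n}^{\ell/2}$. For $N$ (so $\ell\ge d$), $N\le\binom{n-k}{\ell}\bigO{k/n}^{\lceil(\ell+d)/2\rceil}$, hence $N/\Expected{|\Dplus|}\le\bigO{k/n}^{\lceil(\ell+d)/2\rceil}\sigma^{(d+\ell)+O(1)}/p=\bigO{k\sigma^4/n}^{\ell/2}/p\cdot(k/n)^{\lceil(\ell+d)/2\rceil-\ell/2}\sigma^{d-\ell+O(1)}$, where the trailing factor is $\le n^{-\varepsilon/2}n^{o(1)}\le1$: $d\le\ell$ makes $\sigma^{d-\ell+O(1)}\le\sigma^{O(1)}=n^{o(1)}$, while the integrality of $\OM$ forces $\lceil(\ell+d)/2\rceil-\ell/2\ge1/2$, so $(k/n)^{\lceil(\ell+d)/2\rceil-\ell/2}\le n^{-\varepsilon/2}$. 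Adding the two contributions gives $\Expected{|\Dminus|}/\Expected{|\Dplus|}\le\bigO{k\sigma^4/n}^{\ell/2}$ when $\ell<d$ (only $M$ survives) and $\le\bigO{k\sigma^4/n}^{\ell/2}/p$ when $\ell\ge d$ (absorbing $1+1/p$ into $1/p$ using $p\le1$), which is the claim.

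I expect the genuine obstacle to be the reduction to a fresh neighbourhood in the second step: making rigorous, via $\goodevent$ together with the continuity of $\mathcal D$ and the elitism of the \olea, that no point of $\D$ has had its distortion revealed before $x$ is reached, so that $\Dplus$ and $\Dminus$ may be analysed as though the neighbourhood of $x$ carried independent fresh noise. The remaining work is routine bookkeeping; the one delicate point there is to extract exactly the power $\sigma^4$, which relies on $k/n$ decaying polynomially whereas $\sigma=n^{o(1)}$, plus the half-power of $k/n$ that integrality of $\OM$ supplies in the undistorted case.
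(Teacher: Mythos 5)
Your proof is correct and follows essentially the same route as the paper's: lower-bound $\Expected{|\Dplus|}$ via the fresh, unexplored $(1-o(1))$-fraction of $\HL{x}{\ell}$ guaranteed by $\goodevent$ using the points that need distortion $\ge d+\ell$, upper-bound $\Expected{|\Dminus|}$ by the $\bigO{k/n}^{\ell/2}$-fraction of points with $\OM(y)\ge\OM(x)$ times the relevant tail probability, and close the gap with \Cref{ass:distribution}. Your per-$i$ decomposition and the explicit elitism-plus-continuity argument showing that previously explored points cannot lie in $\D$ are a somewhat more careful rendering of steps the paper treats more coarsely (it simply uses the uniform thresholds ``$\OM(y)\ge\OM(x)$'' and ``distortion $\ge d-\ell$'' and drops the probability factor entirely when $\ell\ge d$), but the substance is the same.
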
 \begin{proof}
  We start by calculating the expectation of $\Dminus$. To this end, note that in order to not decrease fitness but decrease distortion, the number of ones in any $y \in \Dminus$ must be at least as large as $|x|_1$. By \Cref{lem:bitflipfitnessincrease}, the fraction of individuals in $\HL{x}{\ell}$ having this property is $\bigO{k/n}^{\ell / 2}$. Moreover, the number of ones of any $y \in \HL{x}{\ell}$ increases by at most $\ell$ as compared to $x$, so in order for $y$ to have higher fitness, its distortion needs to be at least $\min\{0, d - \ell\}$. Hence, if $D$ denotes a random variable with distribution $\mathcal{D}$, we have that \begin{align*}
    \Expected{|\Dminus|} \le \begin{cases} \binom{n}{\ell} \bigO{k/n}^{\ell / 2} p \Pr{D \ge d - \ell } & \text{if } \ell < d \\
      \binom{n}{\ell} \bigO{k/n}^{\ell / 2} & \text{otherwise.}
    \end{cases}
  \end{align*}

  To bound the expectation of $|\Dplus|$, we use the fact that we condition on $\goodevent$, so when $x$ is first visited, the algorithm has only seen a sub-constant fraction of $\HL{x}{\ell}$ before. We can now uncover all the not-yet-uncovered points and not that with probability at least $p \Pr{D \ge d + \ell}$, a such point has distortion at least $d + \ell$ and at the same time fitness at least $f(x)$ because the number of ones is at least $\OM(x) - \ell$. This shows that \begin{align*}
    \Expected{|\Dplus|} \ge (1 - o(1))\binom{n}{\ell} p \Pr{ D \ge d + \ell}.
  \end{align*} Now, recall that $\Pr{D \ge a} / \Pr{D \ge a + b} \le \sigma^{b}$ if $a + b \le \hat{d}$ by \Cref{ass:distribution}. Furthermore, said assumption yields a general lower bound of $\Pr{D \ge a} = \Omega(\sigma^{-a})$ for all $a \in [0, \hat{d}]$. Hence, dividing our estimates for $\Expected{|\Dminus|}$ and $\Expected{|\Dplus|}$ yields
  \begin{align*}
    \Expected{|\Dminus|}/\Expected{|\Dplus|} &\le \begin{cases}
      \bigO{k/n}^{\ell/2} \sigma^{2\ell} & \text{if } \ell < d\\
      \bigO{k/n}^{\ell/2} \sigma^{d + \ell} / p & \text{otherwise.}
    \end{cases} \\
    &\le \begin{cases}
      \bigO{k/n}^{\ell/2} \sigma^{2\ell} & \text{if } \ell < d\\
      \bigO{k/n}^{\ell/2} \sigma^{2\ell} / p & \text{otherwise.}
    \end{cases}
  \end{align*}
\end{proof}

We the above lemma shows that the number of acceptable search points in $\HL{x}{\ell}$ for a point $x$ with $\ZM(x) \le n^{1 - \varepsilon}$ is larger than the number of acceptable points with smaller distortion by a factor of $n^{\Omega(1)}$. We show that a similar statement also applies with high probability if we instead consider the expectation of the ratio $|\Dminus | / | \D |$, which is equal to the probability of sampling a point of distortion at most $d+1$ when uniformly sampling among the points of fitness $\ge f(x)$ in $\HL{x}{\ell}$ if we factor in the randomness both from the sampling and the distribution of the distorted points.

\begin{lemma}\label{lem:probabilityofincrease}
    Let $x$ be any visited search point during the first $\timehorizon$ iterations with $k = o(n)$ zero-bits and distortion $d > 0$. Then, for any $1 \le \ell \le \hat{\ell} \coloneqq \min\{r, \hat{d} -d\}$ and conditional on $\goodevent$, we have \footnote{Again, this technically applies to $\Expected{|\Dminus | / |\D| \mid \goodevent}$, which we avoid for the sake of simplicity.} \begin{align*}
        \Expected{|\Dminus | / |\D| } = \begin{cases}
      \bigO{k \sigma^4 /n}^{\ell/4}& \text{if } \ell < d\\
      \bigO{k \sigma^4/n}^{\ell/4} / p & \text{otherwise.}
    \end{cases}
    \end{align*}
\end{lemma}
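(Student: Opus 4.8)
If the claimed bound is at least a suitable constant there is nothing to show, since $\Dminus\subseteq\D$ forces $|\Dminus|/|\D|\le 1$; so assume it is $o(1)$. Abbreviate $A\coloneqq\bigO{k\sigma^4/n}^{\ell/2}$, the quantity bounded in \Cref{lem:expectedneighbors}. Since $k\sigma^4/n=o(1)$ we have $A\le\bigO{k\sigma^4/n}^{\ell/4}$, so it suffices to prove $\Expected{|\Dminus|/|\D|\mid\goodevent}=\bigO{A}$, and $\bigO{A}/p$ when $\ell\ge d$. We condition on $\goodevent$ throughout. As in the proofs of \Cref{lem:expectedneighbors,lem:distortionconcentration}, when $x$ is first sampled a $(1-o(1))$-fraction of $\HL{x}{\ell}$ is still unexplored, and by deferred decisions each such point independently has distortion $\ge d+\ell$ with probability $p\Pr{D\ge d+\ell}$; let $W$ be the set of these points. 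Since flipping $\ell$ bits changes the number of ones by at most $\ell$, every point of $W$ has fitness $\ge f(x)$, so $W\subseteq\D$ and $|\D|\ge|W|$; and, conditionally on $\goodevent$ together with the history up to the first visit of $x$, $|W|$ is a binomial whose mean is at least $\mu\coloneqq(1-o(1))\binom{n}{\ell}p\Pr{D\ge d+\ell}$. Finally, dividing the upper estimate of $\Expected{|\Dminus|}$ obtained in the proof of \Cref{lem:expectedneighbors} by $\mu$ (legitimate since $d+\ell\le\hat d$, so \Cref{ass:distribution} gives $\Pr{D\ge d-\ell}/\Pr{D\ge d+\ell}\le\sigma^{2\ell}$ and $\Pr{D\ge d+\ell}=\Omega(\sigma^{-(d+\ell)})$) yields $\Expected{|\Dminus|\mid\goodevent}\le A\mu$, respectively $A\mu/p$ when $\ell\ge d$.

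\textbf{Main estimate.} Split on whether $|W|\ge\mu/2$. On that event $|\D|\ge\mu/2$, hence $|\Dminus|/|\D|\le 2|\Dminus|/\mu$ and this part contributes at most $2\Expected{|\Dminus|\mid\goodevent}/\mu\le 2A$ (resp.\ $2A/p$). On the complement we use only $|\Dminus|/|\D|\le 1$, so it contributes at most $\Pr{|W|<\mu/2\mid\goodevent}$. If $\mu\ge 8\ln(1/A)$ a Chernoff bound gives $\Pr{|W|<\mu/2}\le e^{-\mu/8}\le A$, and altogether $\Expected{|\Dminus|/|\D|\mid\goodevent}\le 3A=\bigO{A}$. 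If instead $\mu<8\ln(1/A)$ we abandon the split and use $\Expected{|\Dminus|/|\D|\mid\goodevent}\le\Pr{|\Dminus|\ge 1\mid\goodevent}\le\Expected{|\Dminus|\mid\goodevent}\le A\mu<8A\ln(1/A)$; since $k\sigma^4/n\le n^{-\varepsilon+o(1)}$ and $1\le\ell\le r=n^{\varepsilon/4}$ one checks $\ln(1/A)=\Theta(\ell\log n)$ and $\bigO{k\sigma^4/n}^{\ell/4}\cdot\ell\log n\to 0$ (the supremum over $\ell$ is attained at $\ell=1$, where it equals $n^{-\varepsilon/4+o(1)}$), so $8A\ln(1/A)=\bigO{k\sigma^4/n}^{\ell/4}$; the case $\ell\ge d$ is identical with the extra factor $1/p$ carried through. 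A shorter alternative to this split: $\Expected{|\Dminus|/|\D|\mid\goodevent}\le\Expected{|\Dminus|\mid\goodevent}/(\Expected{|\D|\mid\goodevent}-1)=\bigO{A}$ whenever $\Expected{|\D|\mid\goodevent}\ge 2$, via $\Expected{1/(1+S)}\le 1/\Expected{S}$ for a sum $S$ of independent indicators, together with $\Expected{|\Dminus|\mid\goodevent}=\bigO{A}$ in the remaining case.

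\textbf{Where the difficulty lies.} The threshold split and the two borrowed deterministic estimates are routine; the real content is the assertion in the first paragraph that, \emph{after conditioning on $\goodevent$}, the distortions of the still-unexplored points of $\HL{x}{\ell}$ remain fresh, so that $|W|$ is genuinely at least a clean binomial and the first-moment bound on $\Expected{|\Dminus|}$ survives the conditioning. This is exactly what \Cref{sec:exploration} is built to guarantee — $\goodevent$ certifies that by the time $x$ is visited almost none of $\HL{x}{\ell}$ has been touched, so the remaining points can be uncovered by deferred decisions. The only additional bookkeeping is that the $(1-o(1))$ exploration loss is harmless and that dividing an $n^{-\omega(1)}$ Chernoff failure probability by $\Pr{\goodevent}=1-o(1)$ leaves it $n^{-\omega(1)}$.
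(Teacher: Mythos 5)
Your proof is correct and follows essentially the same route as the paper's: both rest on the ratio bound from \Cref{lem:expectedneighbors} together with a dichotomy on whether the expected number of higher-distortion points in $\HL{x}{\ell}$ is large enough to concentrate the denominator (Chernoff) or so small that a first-moment/Markov bound on $\Pr{|\Dminus|>0}$ already suffices; your threshold $\mu\gtrless 8\ln(1/A)$ and the inner split on $\{|W|\ge\mu/2\}$ are only cosmetic variants of the paper's split on $\Expected{|\Dplus|}\gtrless (n/(k\sigma^4))^{\ell/4}$. One minor imprecision: your justification of $8A\ln(1/A)=\bigO{k\sigma^4/n}^{\ell/4}$ invokes $k\le n^{1-\varepsilon}$, which is not among the hypotheses ($k=o(n)$ only), but this is harmless since your opening reduction already disposes of the regime where $k\sigma^4/n$ is not small.
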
\begin{proof}
  We use \Cref{lem:expectedneighbors} and distinguish two cases.
  
  Case 1: $\Expected{|\Dplus|} \le (n/(k\sigma^4))^{\ell/4}$. Assume for now that $\ell < d$. By \Cref{lem:expectedneighbors}, we have \begin{align*}
    \Expected{|\Dminus|} \le \bigO{\frac{k \sigma^4} {n}}^{\ell/2} \left( \frac{k\sigma^4}{n} \right)^{-\ell/4} = \bigO{\frac{k \sigma^4}{n}}^{\ell/4},
  \end{align*} so Markov's inequality tells us that \begin{align*}
    \Expected{|\Dminus | / |\D| } \le \Pr{|\Dminus| > 0} = \bigO{\frac{k \sigma^4}{n}}^{\ell/4}.
  \end{align*} The case $\ell \ge d$ works analogously.

  Case 2: $\Expected{|\Dplus|} > (n/(k\sigma^4))^{\ell/4}$.Again, we assume first that $\ell < d$. Here, we can use a Chernoff bound to conclude that \begin{align*}
   \Pr{|\Dplus| \le \Expected{|\D|} / 2 } \le \exp\left( -\Omega\left(\frac{n}{k\sigma^4}\right)^{\ell/4} \right).
  \end{align*}
  At the same time, we conclude using Markov's inequality that \begin{align*}
    \Pr{|\Dminus| \ge \left(\frac{k\sigma^4}{n}\right)^{\ell/4} \Expected{|\Dplus|} } &\le \left(\frac{n}{k\sigma^4}\right)^{\ell/4} \frac{\Expected{|\Dminus|}}{\Expected{|\Dplus|}} \\
    &= \bigO{\frac{k \sigma^4}{n}}^{\ell/4}. 
  \end{align*} So in total, \begin{align*}
      \Expected{|\Dminus | / |\D| } &\le 2\left(\frac{k\sigma^4}{n}\right)^{\ell/4} + \bigO{\frac{k \sigma^4}{n}}^{\ell/4} \\
      &\hspace{1cm}+ \exp\left( -\Omega\left(\frac{n}{k\sigma^4}\right)^{\ell/4} \right)\\
      &= \bigO{\frac{k \sigma^4}{n}}^{\ell/4}.
  \end{align*} Again, the case $\ell \ge d$ works analogously.
\end{proof}

We finally use this statement to show that the next visited point has distortion at least $d + 1$ with probability $1 - n^{-\Omega(1)}$ and is further within distance $\bigOTilde{d}$ of $x$. The latter ensures that the number of zeros does not drastically change so that we can repeatedly apply this statement afterwards.

\begin{lemma}\label{lem:probincrease}
  Let $\varepsilon > 0$, and let $x$ be a search point visited during the first $\timehorizon$ iterations with $k \le n^{1 - \varepsilon}$ zero-bits and distortion $d$ such that $\max\{2, 12/\varepsilon\} < d \le \min\{ \hat{d}/2, n^{\varepsilon/16} \}$. Let $y$ be the first point accepted after $x$. Let $\mathcal{E}$ be the event that the following two statements hold:
  \begin{enumerate}
    \item[(i)] $y$ has distortion at least $d + 1$
    \item[(ii)] $\HD{x}{y} \le d\log^2(n)$. 
  \end{enumerate}
  Then, $\mathcal{E}$ occurs with probability at least $ 1 - n^{-\varepsilon/8}$.
\end{lemma}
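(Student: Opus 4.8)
The plan is to condition on the relevant high-probability events, exploit that the current individual does not change between $x$ and $y$, and then reduce both claims to a first-success computation over the offspring generated while the algorithm sits at $x$, using \Cref{lem:probabilityofincrease} (or equivalently \Cref{lem:expectedneighbors}) as the quantitative input. Concretely, I would condition on $\goodevent$ and on the event $\mathcal E_{\text{flip}}$ of \Cref{lem:numberofbitflips} that no single mutation among the first $\timehorizon$ iterations flips more than $r=n^{\varepsilon/4}$ bits; each fails only with probability $n^{-\omega(1)}$, negligible against the target $n^{-\varepsilon/8}$, and $\mathcal E_{\text{flip}}$ lets me treat all Hamming distances as $\le r$. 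The structural observation is that, because $y$ is the \emph{first} accepted point after $x$, the current individual equals $x$ at every iteration up to and including the one producing $y$; hence every offspring seen before $x$ first becomes current and every rejected offspring while the current individual is $x$ has fitness strictly below $f(x)$, and thus lies in none of $\D=\Dplus\cup\Dminus$, all of which require fitness $\ge f(x)$. Therefore, conditioning on $\goodevent$ and on the full history up to the moment $x$ becomes current does not disturb the freshness of the layers $\HL{x}{\ell}$, $1\le\ell\le r$, as regards membership in $\Dplus$ and $\Dminus$, and the successive offspring $z_1,z_2,\dots$ generated at $x$ behave as i.i.d.\ draws (a $\text{Bin}(n,1/n)$ Hamming distance, a uniform point of that layer, a fresh $\text{Bernoulli}(p)$--then--$\mathcal D$ distortion), i.e.\ exactly the regime where \Cref{lem:expectedneighbors} and \Cref{lem:probabilityofincrease} hold.

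For claim (i), let $z_{j_0}$ be the first acceptable offspring; then $y$ is $z_{j_0}$ or one of the $\le\lambda-1$ later offspring of its generation with fitness $\ge f(x)$, so $\Pr{\neg(\mathrm i)}\le\Pr{z_{j_0}\in\Dminus}+\lambda\cdot\Pr{z_1\in\bigcup_{\ell}\Dminus}$. For the first term, conditioning on $\HD{x}{z_{j_0}}=\ell$ makes $z_{j_0}$ uniform among acceptable points of $\HL{x}{\ell}$, so $\Pr{z_{j_0}\in\Dminus\mid\HD{x}{z_{j_0}}=\ell}=\Expected{|\Dminus|/|\D|}=\bigO{k\sigma^4/n}^{\ell/4}$ for $\ell<d$ by \Cref{lem:probabilityofincrease} (the same times $1/p\le n^{1+o(1)}$ for $\ell\ge d$, but $d>12/\varepsilon$ renders those terms $n^{-\Omega(1)}$ smaller); averaging over $\ell\le\hat\ell$ yields $\bigO{k\sigma^4/n}^{1/4}=n^{-\varepsilon/4+o(1)}$, and the distances $\ell>\hat\ell=\hat d-d$ (possible only if $\hat d<\infty$) contribute a further $n^{-5+o(1)}$ once one uses that $\Dminus$ forces $\OM\ge\OM(x)$, hence $\Expected{|\Dminus|}/|\HL{x}{\ell}|\le\bigO{k/n}^{\lceil\ell/2\rceil}$ by \Cref{lem:bitflipfitnessincrease}, and divides by $\Pr{z_1\text{ acceptable}}\ge\Pr{z_1\in D^{1}_{+}}=\Omega(p\,\Pr{D\ge d+1})$. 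For the second term, $\Dminus$ again forces $\OM\ge\OM(x)$, so $\Pr{z_1\in\bigcup_\ell\Dminus}\le\sum_\ell\Pr{\text{Bin}(n,1/n)=\ell}\bigO{k/n}^{\lceil\ell/2\rceil}=\bigO{k/n}=n^{-\varepsilon+o(1)}$. Altogether $\Pr{\neg(\mathrm i)}\le n^{-\varepsilon/4+o(1)}$.

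For claim (ii) given (i): if $(\mathrm i)$ holds but $\HD{x}{y}=\ell>d\log^2 n$, then $y$ is an accepted offspring at distance $\ell$, hence $y\in\Dplus$; using the crude bound $\Expected{|\Dplus|}/|\HL{x}{\ell}|\le p\,\Pr{D\ge d+1}$ together with $\Pr{z_1\text{ acceptable}}\ge\Pr{z_1\in D^{1}_{+}}=\Omega(p\,\Pr{D\ge d+1})$, the distortion factors cancel and $\Pr{y\in\Dplus,\ \HD{x}{y}=\ell}=\bigO{\lambda\,\Pr{\text{Bin}(n,1/n)=\ell}}$; summing over $\ell>d\log^2 n$ and invoking \Cref{thm:Chernoff}(iii) ($\Pr{\text{Bin}(n,1/n)\ge t}\le2^{-t}$ for $t\ge 2e$) gives $\bigO{\lambda\,2^{-d\log^2 n}}=n^{-\omega(1)}$. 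Combining, $\Pr{\mathcal E^c}\le n^{-\omega(1)}+n^{-\varepsilon/4+o(1)}+n^{-\omega(1)}\le n^{-\varepsilon/8}$ for $n$ large enough, as claimed. The hard part is the frozen-noise bookkeeping underlying the freshness claim in the first paragraph: one has to argue carefully that conditioning on $\goodevent$ and on everything revealed before $x$ becomes current leaves the neighbourhoods $\HL{x}{\ell}$ ``clean'' for $\Dplus$/$\Dminus$, the crux being that all such reveals have fitness $<f(x)$ and are therefore disjoint from $\Dplus\cup\Dminus$; the remaining fiddly points are bridging from the first acceptable offspring $z_{j_0}$ to the actually accepted $y$ (the source of the $\lambda$-factor sibling terms) and the $\ell>\hat\ell$ tail that is non-empty only for essentially-truncated distributions such as the Gaussian.
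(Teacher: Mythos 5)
Your treatment of the moderate Hamming distances ($1\le\ell\le\hat\ell$) via \Cref{lem:probabilityofincrease}, together with the conditioning on $\goodevent$ and on bounded bit flips and the observation that everything revealed before $x$ becomes current has fitness $<f(x)$ and is therefore disjoint from $\D$, matches the paper's argument in spirit (the paper only uses $\ell\le\lfloor d\rfloor$, but since $d\le\hat d/2$ your extension to $\hat\ell$ is legitimate). The gap is in the two places where you replace the paper's timing argument by a first-success cancellation: the tail $\ell>\hat\ell$ of part (i), and all of part (ii). There you bound $\Pr{y\in A}$ by $\Pr{z_1\in A}/\Pr{z_1\text{ acceptable}}$ and lower-bound the denominator by $\Omega(p\,\Pr{D\ge d+1})$ via single-bit flips into $\HL{x}{1}$. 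This is an \emph{annealed} quantity, but the first-success identity holds only conditionally on the frozen distortion assignment, so you need a high-probability \emph{quenched} lower bound on the acceptance probability. The expected number of points of distortion $\ge d+1$ in $\HL{x}{1}$ is $np\,\Pr{D\ge d+1}$; since $p$ may be as small as $\omega(1/(n\log n))$ and $\Pr{D\ge d+1}$ may be as small as $\sigma^{-\Theta(d)}=n^{-\omega(1)}$ for $d$ near $n^{\varepsilon/16}$, this expectation is typically $o(1)$, and with high probability $\HL{x}{1}$ contains \emph{no} such point. Your denominator is then not a lower bound on anything, the "cancellation of distortion factors" in part (ii) has nothing to cancel against, and in any case $\Expected{N/M}$ is not controlled by $\Expected{N}/\Expected{M}$ without a concentration statement for $M$ (which is exactly what \Cref{lem:distortionconcentration} provides, but only when the expected count is $n^{\Omega(1)}$).

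This is precisely why the paper routes the acceptance guarantee through layer $\lfloor d\rfloor$ and distortion $2d$ rather than layer $1$ and distortion $d+1$: there the expected count $\binom{n}{\lfloor d\rfloor}p\,\Pr{D\ge 2d}$ is $n^{\Omega(1)}$, \Cref{lem:distortionconcentration} applies, and one gets a quenched per-sample acceptance probability of order $p(c\sigma^2d)^{-d}$. The paper then turns this into an explicit sample budget $\mathcal T_1=\log(n)(c_1\sigma^2d)^d/p$ within which (a) such a point is accepted w.h.p., (b) no fitness-non-decreasing point at distance $\ge d$ is ever sampled (by \Cref{lem:bitflipfitnessincrease} and a union bound over $\mathcal T_1$ samples, using $d>12/\varepsilon$), and (c) no mutation flips more than $d\log^2 n$ bits. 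Events (b) and (c) are what deliver both the $\ell>\lfloor d\rfloor$ case of part (i) and part (ii); your proposal needs an analogous budget-based argument (or a corrected quenched denominator) to close these cases. Your description of the offspring as carrying "fresh" distortions is also too strong under frozen noise — only the first sample into each layer's acceptable set is uniform over that (frozen) set — though for the events you actually use this is repairable.
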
\begin{proof}
    Think of our process as follows. When we first visit $x$, we uncover all the not-yet-uncovered points in $\HL{x}{\ell}$ for $1 \le \ell \le \lfloor d \rfloor$. If the next accepted point $y$ is in $\HL{x}{\ell}$, then the probability that said point has distortion $< d+1$ is exactly $|\Dminus|/|\D|$. Accounting for the fact that $|\Dminus|/|\D|$ is again a random variable, said probability (conditional on $\goodevent$) is simply $\Expected{|\Dminus|/|\D|}$, so we can use the previous lemma. Otherwise -- i.e. if $\HD{x}{y} > \lfloor d \rfloor$ -- a different argument is required because \Cref{ass:distribution} is only met up to a maximal distortion of $\hat{d}$ and the previous lemma relies on the fact that there are sufficiently many points of distortion $d + \ell$ in $\HL{x}{\ell}$, which breaks down if $\ell > d$ as $d$ may be as large as $\hat{d}/2$. 
    
    We remedy this by showing that the number of samples until we find a point in $\HL{x}{\lfloor d \rfloor}$ of fitness $> f(n)$ is at most \begin{align*}
        \mathcal{T}_1 \coloneqq \log(n) (c_1\sigma^{2} d)^d/p
    \end{align*} with high probability where $c_1 > 0$ is a constant to be fixed later. This implies that we do not have enough time to find a point $z$ which has $\HD{x}{z} > \lfloor d \rfloor$ and $\OM(z) \ge \OM(x)$, so if the next accepted point is not in $\HL{x}{\le \lfloor d \rfloor}$, then it has distortion $\ge d+1$. The same event further implies that we do not flip too many bits before accepting a new point, so it will also yield part (ii) of the statement.

    We let $\mathcal{E}_1$ be event that we sample a point of distortion $> 2d$ in $\HL{x}{\lfloor d \rfloor}$ within $\mathcal{T}_1$ samples (where the constant $c_1$ will be fixed later), and we show that $\mathcal{E}_1$ occurs with probability $1 - 1/n$. To this end, we apply \Cref{lem:distortionconcentration} to with $\ell = \lfloor d \rfloor$ and disortion $2d$. Note that with this choice -- due to the assumption that $p$ -- we have, \begin{align*}
        \frac{\log(\ell)}{\log(n)} + \frac{\log(1/p)}{\ell\log(n)} \le \frac{\varepsilon}{16} + \frac{1 + o(1)}{2} \le 1 - \varepsilon,
    \end{align*} for sufficiently small $\varepsilon$, so we can indeed apply \Cref{lem:distortionconcentration} to conclude that -- with probability $1 - n^{-\omega(1)}$ -- the neighborhood of $x$ is such that if we sample uniformly from $\HL{x}{\ell}$, we find a point of distortion $> 2 d$ with probability at least $c_2 p \sigma^{-2d}$, where $c_2$ is a positive constant. Let $\mathcal{B}_1$ denote the ``bad'' event that this does not occur\footnote{That is, $\mathcal{B}_1$ is the event that the $\HL{x}{\ell}$ is such that the fraction of points with distortion $\ge 2d$ is smaller than $c_2 p \sigma^{-2d})$} and note that $\Pr{\mathcal{B}_1} \le n^{-\omega(1)}$.

    Note further that the probability of flipping exactly $\ell = \lfloor d \rfloor$ bits is at least \begin{align*}
        \binom{n}{\ell} \hspace{.1cm} \frac{1}{n^{\ell}} \left( 1 - \frac{1}{n} \right)^{n} \ge \frac{n^{\ell}\bigO{\ell}^{-\ell}}{2e n^\ell } = \bigO{d}^{-d}
    \end{align*} by \Cref{lem:binomapprox}. Hence -- conditional on $(\neg \mathcal{B}_1)$ -- the total probability of sampling a point of distortion $> 2d$ in $\HL{x}{\ell}$ is at least \begin{align*}
        p_1 \coloneqq p (c_3\sigma^2d)^{-d}
    \end{align*} for some constant $c_3 > 0$. The probability that this does not occur within $2 \log(n)/p_1 \le \mathcal{T}_1$ (for a suitable choice of $c_1$) samples is at most \begin{align*}
        \left( 1 - p_1 \right)^{2\log(n) /p_1} \le e^{-2\log(n)} = 1/n^2
    \end{align*} Accordingly, \begin{align*}
        \Pr{\neg \mathcal{E}_1} \le 1/n^2 + \Pr{\mathcal{B}_1} \le 1/n.
    \end{align*}

    Now, we show the implications of the above, and we start by showing that $\mathcal{T}_1$ samples are not enough to find a point with more ones than $x$ at Hamming-distance $\ge \lceil d \rceil$.
    To this end, denote by $\mathcal{E}_2$ the event that during $\mathcal{T}_1$ samples, we find a point $z$ with $\HD{x}{z} \ge d $ and $\OM(z) \ge \OM(x)$. Since $\ZM(x) \le n^{1-\varepsilon}$ and due to \Cref{lem:bitflipfitnessincrease} this happens in a single sample only with probability $(c_2n^{-\varepsilon})^{d/2}$ (for some constant $c_2$), so by Markov's inequality, \begin{align*}
        \Pr{\neg\mathcal{E}_2} &\le \log(n) p^{-1} (c_1 \sigma^{2} d)^{d}(c_2n^{-\varepsilon})^{d/2} \\
         &\le \log(n) p^{-1} n^{d(-\varepsilon/2 + \varepsilon/16 + o(1))} \\
         &\le \log(n) p^{-1} n^{d(-\varepsilon/2 + \varepsilon/16 + o(1))}\\
         & \le \log(n) p^{-1} n^{-d\varepsilon/4}.
    \end{align*} Due to our assumption that $d \ge 12/\varepsilon$ and $p = \omega(1/(n\log(n)))$, we get \begin{align*}
        \Pr{\neg\mathcal{E}_2} \le \log^2(n) n n^{-3}\le 1/n.
    \end{align*}

    Finally, we show that the maximum number of bits flipped during $\mathcal{T}_1$ samples is bounded. To this end, denote by $\mathcal{E}_3$ the event that more than $d\log^2(n)$ bits are flipped during  $\mathcal{T}_1$ samples. By \Cref{lem:numberofbitflips}, we get that with probability $1 - n^{-\omega(1)}$, the maximum number of bits flipped during this time is at most \begin{align*}
        &d \log_2( c_1 \sigma^2 d ) + \log_2(\log(n)) + \log_2(1/p) + \log^2(n) \\
        &\hspace{1cm}\le \bigOCompact{d \log(n) + \log\log(n) + \log(n)} + \log^2(n)\\
        &\hspace{1cm}\le d\log^2(n)
    \end{align*} because $d \ge 2$ and because we may assume $n$ to be large enough.
    In total, this yields  \begin{align*}
        \Pr{\neg(\mathcal{E}_1 \cap \mathcal{E}_2 \cap \mathcal{E}_3)} &\le n^{-1} + n^{-1} + n^{-\omega(1)}\\
        &\le 2n^{-1}.
    \end{align*} 
    With this, we bound the probability that the next accepted point has distortion at least $d + 1$ as follows. If $\mathcal{E}_1 \cap \mathcal{E}_2 \cap \mathcal{E}_3$ occurs, then the next point we accept can only have distortion smaller than $d +1$ if it is in $\HL{x}{\le \lfloor d \rfloor}$. In this regime, \Cref{lem:probabilityofincrease} bounds the probability of sampling a point of distortion $< d+1$ when sampling uniformly among the points of fitness $\ge f(x)$ in $\HL{x}{\ell}$ (i.e. sampling in the set $\D$) when factoring in the randomness from both the sampling and the distribution of distorted points in $\D$ (conditional on $\goodevent$). To bound the final probability that $y$ has distortion $< d+1$ if $\HD{x}{y} \le \lfloor d \rfloor$, we produce one sample $y_\ell$ in $\D$ for all $1 \le \ell \le \lfloor d \rfloor$ and denote by $\mathcal{B}$ the ``bad'' event that at least one of them has distortion $< d + 1$. We have
    \begin{align*}
        \Pr{\mathcal{B} \mid \goodevent} &= \Pr{\exists \ell: y_\ell \in \Dminus \mid \goodevent}\\ 
        &\le \sum_{\ell = 1}^{\lfloor d \rfloor} \Expected{|\Dminus|/|\D| \mid \hspace{.05cm} \goodevent} \\
        &\le \sum_{\ell = 1}^{\infty} \bigO{ \frac{k \sigma^4}{n} }^{\ell/4} \le n^{-\varepsilon/4 + o(1)}
    \end{align*} because $k \le n^{1-\varepsilon}$ and because the sum is geometric. In total, we have
    \begin{align*}
        \Pr{\neg\mathcal{E}} &\le \Pr{\mathcal{B} \mid \goodevent} + \Pr{\neg \goodevent} + \Pr{\neg(\mathcal{E}_1 \cap \mathcal{E}_2 \cap \mathcal{E}_3)}\\
        &\le n^{-\varepsilon/4 + o(1)} + n^{-\omega(1)} + 2n^{-1}\\ &\le n^{-\varepsilon/8}
    \end{align*}as desired.
\end{proof}

Plugging everything together, we can prove \Cref{thm:oleadistortionisreached}.

\begin{proof}[Proof of \Cref{thm:oleadistortionisreached}]
  By \Cref{lem:wereachdistortion}, we either reach a point $x$ of distortion at least $d_0 \coloneqq \max\{2, 12/\varepsilon\}$ and $\ZM(x) \in I \coloneqq [n^{1 - 3\varepsilon}, n^{1 - \varepsilon}/2]$, or $\timehorizon$ iterations pass without reaching a point $x$ with $\ZM(x) \le n^{1 - \varepsilon}/4$ w.h.p. In the latter case, our statement follows, so for the rest of this proof, we assume that our process starts at a point $x_0$ with distortion at least $d_0$ and $\ZM(x_0) \in [n^{1-3\varepsilon}, n^{1 - \varepsilon}/2]$. 
  
  Now, \Cref{lem:probincrease} tells us that -- conditional on $\goodevent$ -- if we visit a point $x$ during the first $\timehorizon$ iterations that has distortion $d \ge d_0$ and $\ZM(x) \le n^{1- \varepsilon}$, then the next visited point $y$ has distortion at least $d + 1$ and $\HD{x}{y} \le d\log^2(n))$ with probability at least $1 - n^{-\varepsilon / 8}$. We call jumps in which this happens \emph{good} and jumps for which this does not hold \emph{bad}.

  We now show that our process reaches a point of distortion $d_1 \coloneqq \min\{\hat{d}/2,  n^{\varepsilon/16}\}$ and to this end, we consider the next $m = d_1$ points visited after $x_0$, which we denote by $x_1, \ldots, x_m$ where $x_i$ has distortion $d_i$. We show that there is a point with distortion $\ge d_1$ among these visited points w.h.p. To this end, denote by $\mathcal{E}_i$ the event that the jump from $x_{i-1}$ to $x_i$ is good or that one of the points $x_0, \ldots, x_{i-1}$ already has distortion $\ge d_1$. \Cref{lem:probincrease} tells us that (conditional on $\goodevent$) the probability of $\mathcal{E}_i$ is least $1 - n^{-\varepsilon / 8}$ but only if $\ZM(x_{i-1}) \le n^{1- \varepsilon}$. However, if $\mathcal{E}_1 \cap \ldots \cap \mathcal{E}_{i-1}$ hold, then we know that we either already reached a point of distortion $\ge d_1$ or \begin{align*}
   \ZM(x_{i-1}) &\le n^{1 - \varepsilon}/2 + i d  \log^2(n)  \\
   &\le n^{1 - \varepsilon}/2 + m n^{\varepsilon/16} \log^2(n) \le n^{1 - \varepsilon}
  \end{align*} by the second statement in \Cref{lem:probincrease} and because $m, d \le n^{\varepsilon/16}$. By the same reasoning, we get that \begin{align*}
    \ZM(x_{i-1}) &\ge n^{1 - 3\varepsilon} - m n^{\varepsilon/16} \log^2(n) \ge n^{1 - 4\varepsilon},
  \end{align*} so if $\bigcap_{j=1}^{i-1} \mathcal{E}_j$ occurs, we either already found a point $x$ of distortion $\ge d_1$ and $\ZM(x) \in [n^{1-4\varepsilon}, n^{1- \varepsilon}]$ or we know that the prerequisites of \Cref{lem:probincrease} are met. Hence, we conclude that \begin{align*}
    \Pr{ \mathcal{E}_i \mid \left( \bigcap_{j = 1}^{i-1} \mathcal{E}_j \right)  } \ge 1 - n^{-\varepsilon/8}
  \end{align*}
  and accordingly\footnote{It might be tempting to just use a union bound here. However, in our situation, we can only bound the probability of $\mathcal{E}_i$ if we know that we start at a point with $\ZeroMax$-value at most $n^{1 - \varepsilon}$, so a classical union bound is not applicable}, \begin{align*}
    \Pr{ \bigcap_{i = 1}^m \mathcal{E}_i} &= \prod_{i=1}^{m} \Pr{\mathcal{E}_i \mid \left( \bigcap_{j = 1}^{i-1} \mathcal{E}_j \right) }\\
    &\ge  \left( 1- n^{-\varepsilon/8} \right)^{n^{\varepsilon/16}}\\
    &\ge 1 - n^{-\varepsilon/8}n^{\varepsilon/16} \ge 1 - n^{-\varepsilon/16}
  \end{align*} where in the penultimate step we used Bernoulli's inequality. Accordingly, $\bigcap_{i = 1}^m \mathcal{E}_i$ occurs with high probability when starting at $x_0$, which -- by the definition of $\mathcal{E}_i$ -- implies that a point of distortion at least $d_1 = \min\{\hat{d}/2, n^{\varepsilon/16}\}$ is reached.
\end{proof}

\subsection{Lower Bounds for the Run Time of the \olea}

We continue by using the fact that the \olea likely finds a point of very large distortion to derive explicit lower bounds for its runtime that -- in its most general form -- depend explicitly on the CDF of our distribution $\mathcal{D}$. We then sketch the implications of this bound for some common distributions below.

\olealowerbound*
\begin{proof}
    By \Cref{thm:oleadistortionisreached}, the following event occurs with high probability for some sufficiently small constant $\varepsilon > 0$: Either $\timehorizon = \timehorizonvalue$ iterations pass while $\ZM(x) \ge n^{1 - \varepsilon} / 4$ or we find a point $x$ of distortion at least $d =\min\{ \hat{d}/2, n^{\varepsilon / 16}\}$ before finding the first point with fewer than $n^{1-4\varepsilon}$ zeros.    
    
    In the first case, our statement follows. For the second case -- if we find a point of distortion $d$ -- it is easy to see that the number of samples required until the first point with distortion at least $d$ is found dominates a geometric random variable with success probability $p \Pr{D \ge d}$. Thus the probability of finding not finding a such point after $1/ (g(n) p \Pr{D \ge d})$ samples is at least \begin{align*}
        \left( 1 - p \Pr{D \ge d} \right)^{\frac{1}{g(n) p \Pr{D \ge d}}} \ge 1 - 1/g(n) = 1 - o(1)
    \end{align*} as desired.
\end{proof} 

We show the implications of the above theorem for some common distributions.

\paragraph{Exponential Distribution} If $\mathcal{D}$ is an exponential distribution, i.e., if \begin{align*}
  \Pr{D \ge d} = \exp(-\varrho d)
\end{align*} for some $\varrho > 0$, we obtain a stretched exponential lower bound. 

\expbound* \begin{proof}
  \Cref{thm:olealowerbound} immediately implies that w.h.p., \begin{align*}
    T \ge \min \left\{ \timehorizonvalue, \exp( (1 - o(1)) \varrho n^{\varepsilon/16}) \right\} = \exp(n^{\Omega(1)}).
  \end{align*} 
\end{proof}

\paragraph{Gaussian Distribution} We further consider the case where $\mathcal{D}$ is a Gaussian distribution with density \begin{align*}
    \rho(x) = \frac{1}{s\sqrt{2\pi}} \exp\left(-\frac{1}{2}\left( \frac{x}{s}\right)^2\right)
\end{align*} for $x \in [0,\infty)$. It is not immediately obvious that this distribution fulfills \Cref{ass:distribution} up to a certain point, but we show in the following that \Cref{ass:distribution} is in fact met for $d = o(\log(n))$. To this end, we bound \begin{align*}
    \frac{\Pr{D \ge z}}{\Pr{D \ge z + 1}} &= \frac{\int_{z}^{\infty} \rho(x) \text{d}x}{\int_{z+1}^{\infty} \rho(x) \text{d}x}\\
    &= \frac{\int_{z}^{z+1} \rho(x) \text{d}x + \int_{z+1}^{\infty} \rho(x) \text{d}x}{\int_{z+1}^{\infty} \rho(x) \text{d}x}\\
    &= 1 + \frac{\int_{z}^{z+1} \rho(x) \text{d}x}{\int_{z+1}^{\infty} \rho(x) \text{d}x}\\
    &\le 1 + \frac{\int_{z}^{z+1} \rho(x) \text{d}x}{\int_{z+1}^{z+2} \rho(x) \text{d}x}\\
    &\le 1 + \frac{ \rho(z) }{\rho(z+2) }
\end{align*} because \begin{align*}
    (a - b) f(b) \le \int_a^b f(x) \text{d}x \le (a - b)f(a)
\end{align*} for any monotonically decreasing function $f$. Plugging in the definition of $\rho(x)$ yields.
\begin{align*}
    \frac{\Pr{D \ge z}}{\Pr{D \ge z + 1}} &\le 1 + \frac{\exp\left( -\frac{1}{2} \left( \frac{z}{s} \right)^2 \right)}{\exp\left( -\frac{1}{2} \left( \frac{z+2}{s} \right)^2 \right)}\\
    &= 1 + \exp\left( - \frac{1}{2s^2} \left( z^2 - (z+2)^2 \right) \right)\\
    &= 1 + \exp\left(\frac{4z + 4}{2s^2} \right), 
\end{align*} which is $n^{o(1)}$ if $z = o(\log(n))$. Thus, \Cref{ass:distribution} is met for $\hat{d} = \frac{\log(n)}{\log\log(n)}$ and we obtain the following corollary.

\gaussbound*
\begin{proof}
    As shown above, the absolute value of a Gaussian distribution fulfills \Cref{ass:distribution} for $\hat{d} = \frac{\log(n)}{\log\log(n)}$. Thus, \Cref{thm:olealowerbound} yields a lower bound of \begin{align*}
        T \ge \min \left\{ \timehorizonvalue, \left( g(n) p \Pr{D \ge \hat{d}/2} \right)^{-1} \right\}
    \end{align*} w.h.p. By the definition of the Gaussian distribution, we have \begin{align*}
        \Pr{D \ge \hat{d}/2} &= \frac{1}{s \sqrt{2\pi}} \int_{\hat{d}/2}^\infty \exp\left(-\frac{1}{2}\left( \frac{x}{s}\right)^2\right) \text{d}x\\
        &\le \frac{1}{s \sqrt{2\pi}} \int_{\hat{d}/2}^\infty \frac{x}{s^2} \exp\left(-\frac{1}{2}\left( \frac{x}{s}\right)^2\right) \text{d}x\\
        &= \frac{1}{s \sqrt{2\pi}} \left[ - \exp\left(-\frac{1}{2}\left( \frac{x}{s}\right)^2\right) \right]_{\hat{d}/2}^{\infty}\\
        &= \frac{1}{s \sqrt{2\pi}} \exp\left(-\frac{1}{2}\left( \frac{\hat{d}}{2s}\right)^2\right) \\
        &= \exp\left(-\Omega\left( \frac{\log(n)}{\log\log(n)}\right)^2\right) = n^{-\omega(1)},
    \end{align*} where the second step holds for sufficiently large $n$ since $\hat{d}/2 = \omega(1)$.
\end{proof}

\paragraph{Pareto Distribution} If $\mathcal{D}$ is a Pareto distribution, i.e., if \begin{align*}
    \Pr{D \ge x} = \left( \frac{x}{x_0} \right)^{1 - \tau}
\end{align*} for $x \in [x_0, \infty)$ and $\tau \ge 2$, we show that the run time is polynomial with exponent dependent on $\tau$. We start by verifying that \Cref{ass:distribution} is met. To this end, note that
\begin{align*}
    \frac{\Pr{D \ge z}}{\Pr{D \ge z + 1}} &= \left( \frac{z+1}{z} \right)^{\tau - 1}\\
    &\le \left( 1 + \frac{1}{x_0} \right)^{\tau - 1} = \bigO{1}.
\end{align*} This immediately implies that
\paretobound*
\begin{proof}
    If $g(n) = \omega(1)$, then $x_0^{1-\tau}g(n) = \omega(1)$ as well. Hence, \Cref{thm:olealowerbound} implies a lower bound of \begin{align*}
        T &\ge \min \left\{ \timehorizonvalue, \left( x_0^{1-\tau}g(n) p \Pr{D \ge n^{\varepsilon/16}} \right)^{-1} \right\}\\ 
        &\ge \frac{n^{(\tau - 1)\varepsilon/16}}{g(n)p} 
    \end{align*} 
    for every $g(n) = \omega(1)$, w.h.p.
\end{proof}

\section{Analysis of the $(1, \lambda)$-EA}\label{sec:oclea}

In this section, we show that the \oclea finds the optimum in $\Theta(n \log(n))$ steps w.h.p. Our analysis is essentially the same as in \cite[Section 4]{Jorritsma_Lengler_Sudholt_2023} with a minor technical modification. We give an overview of the analysis and the novel parts without repeating all the proofs. We use this section to prove the following theorem.
\ocleaupperbound*

Like in \cite[Section 4]{Jorritsma_Lengler_Sudholt_2023}, the analysis is based on a drift argument on an auxiliary dynamic version of our fitness function, which we call $\DyDisOmD$ and in which we reveal distorted and clean points gradually, and where distorted points can become non-distorted (but not vice versa). We adopt the terminology and notation from~\cite{Jorritsma_Lengler_Sudholt_2023} and call non-distorted points ``clean''. Formally, we let $x_t$ be the parent individual in the $t$-th iteration, and for $s = \lambda t + j$ for $j \in [\lambda]$, we let $y_s$ be the $j$-th offspring created in the $t$-th iteration. Furthermore, we let $C_s$ be the set of clean points after creating $y_s$, and iteratively define $C_0 = \{x_0\}$ and \begin{align*}
  C_{s} \coloneqq \begin{cases}
    C_{s - 1} \cup \{y_s\} & \text{w.p. } p,\text{ if } y_s \neq x_t \\ 
    C_{s-1} & \text{otherwise.}
  \end{cases}
\end{align*}
Then, \begin{align*}
  \DyDisOmD(x) = \begin{cases}
    \OM(x) & \text{if } x \in C_s\\
    \OM(x) + D & \text{otherwise},
  \end{cases}
\end{align*} where $D \sim \mathcal{D}$ is a sample from the distribution $\mathcal{D}$. To define our potential function as in \cite{Jorritsma_Lengler_Sudholt_2023}, we abbreviate $\ZM(x) = k$ and let $Y_1, \ldots, Y_\lambda \sim \text{Bin}(k, 1/n)$. We further define $Y^* \coloneqq \max\{ Y_1, \ldots, Y_\lambda \}$. Our potential function is \begin{align*}
  P(x) \coloneqq \ZM(x) + \mathds{1}(x \notin C_s) \frac{\delta}{\lambda p} \Expected{Y^*(\ZM(x))}.
\end{align*} 

The intuition behind this potential function is as follows. For a normal run of the \oclea on $\OneMax$, it is not hard to see that the drift of the potential $\ZM(x)$ is $\Omega(\Expected{Y^*(\ZM(x))})$ (because our progress is essentially governed by the maximum number of zero to one flips in a single offspring). By adding a penalty of $\frac{\delta}{\lambda p} \Expected{Y^*(\ZM(x))}$ to distorted points, we decrease said positive drift by a value in the same order of magnitude, since a distorted point is sampled with probability at most $\lambda p$, so the overall drift on clean points remains $\Omega(\Expected{Y^*(\ZM(x))})$ if we choose a suitable constant $\delta$. If the algorithm starts at a distorted point, then we also have a positive drift of the same order because we likely generate a set of offspring that does not include a clone of the current individual and no distorted point, while also incurring only a small increase in $\ZM(x)$. Hence, our potential decreases by approximately $\frac{q}{\lambda p} \Expected{Y^*(\ZM(x))} \ge \Expected{Y^*(\ZM(x))}$ due to our assumption that $\frac{q}{\lambda p} = \omega(1)$. All this was formally calculated in~\cite{Jorritsma_Lengler_Sudholt_2023}, and most cases go through without modifications, as we show below. 

%

The only difference to the analysis in \cite{Jorritsma_Lengler_Sudholt_2023} is that the negative drift incurred by starting at a distorted point increases slightly because we can now increase the potential function even if we clone our current individual by finding a point of larger distortion and larger $\ZeroMax$-value, which was impossible in~\cite{Jorritsma_Lengler_Sudholt_2023} where all points had the same distortion. However, this increase is still small enough to keep the overall (asymptotic) drift unchanged. We show this formally by computing explicit bounds on the drift \begin{align*}
    \Delta(x) \coloneqq \Expected{ P(x_t) - P(x_{t+1}) \mid x_t = x}
\end{align*} in the following lemma.

\begin{lemma}\label{lem:driftDyDisOmD}
    Consider a run of the \oclea on $\DyDisOmD$ under \Cref{ass:distribution} and for a sufficiently small constant $\delta > 0$. Then while $\ZM(x) \le k^*$, \begin{align*}
        \Delta(x) = \Omega(\Expected{Y^*(\ZM(x))}).
    \end{align*}
\end{lemma}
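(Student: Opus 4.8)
The plan is to follow the drift computation of \cite[Section~4]{Jorritsma_Lengler_Sudholt_2023} almost verbatim, splitting into the cases $x \in C_s$ (clean parent) and $x \notin C_s$ (distorted parent), and to insert exactly one extra negative term in the distorted case, reflecting the only genuine novelty here: an offspring of $x$ may now be a distorted point of \emph{strictly} higher peak than $x$. Throughout I would condition, at a cost of $n^{-\omega(1)}$, on the event of \Cref{lem:numberofbitflips} that no mutation in the first $\timehorizon$ iterations flips more than $R = \bigOCompact{\log^2 n}$ bits, so that $|\ZM(x_{t+1}) - \ZM(x)| \le R$ deterministically. The only quantitative facts about $Y^*$ I need are the elementary bounds $\Expected{Y^*(k)} = \Omega(\min\{1, \lambda k/n\})$ and $\Expected{Y^*(k+R)} \le \Expected{Y^*(k)} + \bigOCompact{\lambda R/n}$ (couple $Y^*(k+R)$ to $Y^*(k)$ plus the maximum of $\lambda$ independent $\mathrm{Bin}(R, 1/n)$ variables and sum a geometric tail). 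In the regime $\ZM(x) \ge k^* = n^{\Omega(1)}$ these give $\Expected{Y^*(\ZM(x))} \ge k^*/n$, and \Cref{ass:params} (which forces $p \le (k^*/n)^{(1+\varepsilon)/(1-\varepsilon)}$) then supplies all the slack I will use: $\frac{1}{\lambda p}\Expected{Y^*(\ZM(x))} \ge n^{\Omega(\varepsilon)}/\lambda \gg R$, $\lambda p = o(\Expected{Y^*})$, and $q/(\lambda p) \to \infty$.

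For a clean parent, $P(x) = \ZM(x)$ and the argument is unchanged from \cite{Jorritsma_Lengler_Sudholt_2023}: with probability $\ge 1 - \lambda p$ no offspring is distorted, so the step is an ordinary $(1,\lambda)$-EA step on \OneMax, whose $\ZM$-drift is $\Omega(\Expected{Y^*(\ZM(x))})$ by the standard analysis; on the complementary event (probability $\le \lambda p$) the selected offspring may be distorted, in which case $P$ grows by at most $\ZM(x_{t+1}) - \ZM(x) + \frac{\delta}{\lambda p}\Expected{Y^*(\ZM(x_{t+1}))} \le R + \frac{\delta}{\lambda p}(1+o(1))\Expected{Y^*(\ZM(x))}$, costing the drift at most $\lambda p\, R + (1+o(1))\delta\,\Expected{Y^*(\ZM(x))} = o(\Expected{Y^*}) + (1+o(1))\delta\,\Expected{Y^*}$. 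Taking $\delta$ a sufficiently small constant leaves $\Delta(x) = \Omega(\Expected{Y^*(\ZM(x))})$.

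For a distorted parent, $P(x) = \ZM(x) + \frac{\delta}{\lambda p}\Expected{Y^*(\ZM(x))}$. The positive part of the drift is as in \cite{Jorritsma_Lengler_Sudholt_2023}: with probability $\ge q - \lambda p = q(1-o(1))$ none of the $\lambda$ offspring is a clone of $x_t$ and none is distorted, so the new parent is clean, the penalty term collapses to $0$, and $\ZM$ moves by at most $R$, contributing at least $q(1-o(1))(\frac{\delta}{\lambda p}\Expected{Y^*(\ZM(x))} - R) = \Omega(\frac{q}{\lambda p}\Expected{Y^*(\ZM(x))}) = \omega(\Expected{Y^*})$ to $\Delta(x)$, using $\frac{\delta}{\lambda p}\Expected{Y^*} \gg R$ and $q/(\lambda p) \to \infty$. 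On every other outcome $x_{t+1}$ is either a clone of $x$ (no change in $P$), or a point with $\ZM(x_{t+1}) \le \ZM(x)$ (on which $P$ cannot increase, since the penalty, if present, does not grow), or --- the one new case --- a distorted point with $\ZM(x_{t+1}) = \ZM(x) + j$ for some $1 \le j \le R$. Since the distortion of an offspring is assigned independently of its mutation, the probability that some offspring is both distorted and has $\ZM$-surplus exactly $j$ is $\bigOCompact{\lambda p / j!}$, and on that event $P$ grows by at most $j + \frac{\delta}{\lambda p}\bigOCompact{\lambda j/n}$; summing over $j \ge 1$ (the series $\sum_j j/j!$ converges) bounds the expected increase by $\bigOCompact{\lambda p} + \bigOCompact{\delta\lambda/n} = o(\Expected{Y^*})$. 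Hence $\Delta(x) = \Omega(\frac{q}{\lambda p}\Expected{Y^*(\ZM(x))}) - o(\Expected{Y^*}) = \Omega(\Expected{Y^*(\ZM(x))})$.

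The main obstacle is exactly this last bound: one has to show that admitting distorted points of arbitrary height in $\DyDisOmD$ cannot inflate the potential faster than the $\Theta(\frac{q}{\lambda p}\Expected{Y^*})$ the algorithm gains by escaping. The resolution combines the deterministic bit-flip cap of \Cref{lem:numberofbitflips} --- which controls both the $\ZM$-surplus $j$ and, through the slow variation of $\Expected{Y^*(\cdot)}$, the growth of the penalty term --- with the fact that the probability $\lambda p$ of producing a distorted offspring cancels the $\frac{1}{\lambda p}$ blow-up in the penalty coefficient. Everything else --- the $1/j!$ tail of the $\ZM$-surplus, the smoothness estimate for $\Expected{Y^*}$, and the verification via \Cref{ass:params} that $\ZM(x)$ and $\frac{1}{\lambda p}\Expected{Y^*(\ZM(x))}$ both dominate $\log^2 n$ --- is routine, and all remaining sub-cases go through as in \cite[Section~4]{Jorritsma_Lengler_Sudholt_2023}.
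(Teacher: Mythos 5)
Your proof follows the same overall architecture as the paper's: the identical potential $P$, a split of the drift into positive and negative parts, and a case distinction between clean and distorted parents, with the clean cases and the positive drift for distorted parents carried over from \cite{Jorritsma_Lengler_Sudholt_2023} essentially unchanged. The one place where you genuinely diverge is the new sub-case (a distorted parent whose successor is a distorted point of strictly larger $\ZM$-value). The paper handles it by splitting on whether a clone is present and bounding the increase crudely by $(2q+\lambda p)\bigl(\log(n)+\tfrac{\delta}{\lambda p}o(\Expected{Y^*})\bigr)$; you instead bound the probability of the harmful event directly by $\bigOCompact{\lambda p/j!}$ for each $\ZM$-surplus $j$ and sum the series, obtaining $\bigOCompact{\lambda p}+\bigOCompact{\delta\lambda/n}$. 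Your bound is sharper and arguably cleaner, since it exploits that the successor must actually \emph{be} distorted with positive surplus rather than charging the full no-clone probability $q$; both versions are $o(\Expected{Y^*(\ZM(x))})$ and suffice.

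Two small points to tighten. First, your trichotomy for the distorted-parent backward case omits the successor being a \emph{clean} point with $\ZM$-surplus $j\ge 1$; this is harmless because there $P$ strictly decreases (the penalty $\tfrac{\delta}{\lambda p}\Expected{Y^*}\gg R\ge j$ vanishes), but it should be stated. Second, by working with the global bit-flip cap $R=\bigOCompact{\log^2 n}$ rather than the paper's per-generation event with threshold $\log(n)$, you need the auxiliary inequalities of \Cref{lem:auxpotential} with an extra logarithmic factor of slack, e.g.\ $\lambda p\log^2(n)=o(\Expected{Y^*(k)})$; your justification ``$\tfrac{1}{\lambda p}\Expected{Y^*}\ge n^{\Omega(\eps)}/\lambda$'' overclaims, since \Cref{ass:params} only yields a lower bound of order $(n/k^*)^{\Omega(\eps)}$, which need not be $n^{\Omega(\eps)}$ when $k^*$ is close to $n$. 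The paper sidesteps this by importing the $\log(n)$-versions from prior work, so you should either do the same with the per-generation event or verify the strengthened inequalities explicitly.
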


For the proof we further need the following auxiliary lemma from \cite{Jorritsma_Lengler_Sudholt_2023}.
\begin{lemma}\label{lem:auxpotential}
    Under \Cref{ass:params} and for $k \ge k^*$, we have \begin{enumerate}
        \item[(1)] $\Expected{Y^*(k + \log(n))} = (1 + o(1)) \Expected{Y^*(k)}$.
        \item[(2)] $\lambda p \log(n) = o(\Expected{Y^*(k)}).$
        \item[(3)] $q \log(n) = o(\Expected{Y^*(k)}).$
        \item[(4)] $1/n = o(\Expected{Y^*(k)}).$
        \item[(5)] for all $k \le n/\lambda$, we have $\Expected{Y^*(k)} = \Omega(P(x) \frac{\lambda}{n})$.
    \end{enumerate}
\end{lemma}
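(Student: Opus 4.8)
The plan is to treat $\Expected{Y^*(k)}$, the expected maximum of $\lambda$ i.i.d.\ $\text{Bin}(k,1/n)$ variables, as the single object governing all five items, and to pin down its order before reading off each claim. The backbone estimate is
\[
\Expected{Y^*(k)} = \Theta(\lambda k/n) \qquad \text{for } k^* \le k \le n/\lambda.
\]
The upper bound is free, since $Y^* = \max_i Y_i \le \sum_i Y_i$ gives $\Expected{Y^*(k)} \le \lambda \Expected{Y_1} = \lambda k/n$ for \emph{every} $k$. For the lower bound I would use $\Expected{Y^*(k)} \ge \Pr{Y^* \ge 1} = 1 - (1-1/n)^{\lambda k} \ge 1 - e^{-\lambda k/n}$, and since $1-e^{-x} \ge (1-1/e)x$ on $[0,1]$ and $\lambda k/n \le 1$ in this range, this is $\Omega(\lambda k/n)$. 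Two supporting facts I would record: $\Expected{Y^*(k)}$ is non-decreasing in $k$ (as $\text{Bin}(k+1,1/n)$ stochastically dominates $\text{Bin}(k,1/n)$, hence so do the maxima), and $\lambda = \Theta(\log n)$ together with $k^* \le n/\lambda$ both follow from \Cref{ass:params} and $k^* = n^{\Omega(1)} = o(n)$. In particular $\Expected{Y^*(k^*)} = \Omega(\lambda k^*/n)$, and by monotonicity $\Expected{Y^*(k)} = \Omega(\lambda k^*/n)$ uniformly for all $k \ge k^*$.

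Items (2)--(4) then reduce to one-line comparisons against this uniform lower bound. For (4), $1/n = o(\lambda k^*/n)$ is just $\lambda k^* = \omega(1)$, immediate from $k^* = n^{\Omega(1)}$. For (3), it suffices that $\frac{q\log n}{\lambda k^*/n} = o(1)$; since $\lambda = \Theta(\log n)$ this equals $\Theta(qn/k^*)$, and $q \le (k^*/n)^{1+\varepsilon}$ gives $qn/k^* \le (k^*/n)^{\varepsilon} = n^{-\Omega(1)}$. For (2), I reduce to $pn\log n = o(k^*)$, using both halves of \Cref{ass:params}: $p^{1-\varepsilon} \le q \le (k^*/n)^{1+\varepsilon}$ yields $p \le (k^*/n)^{(1+\varepsilon)/(1-\varepsilon)}$, so writing $k^* = n^{c}$ with $0<c<1$ the exponent of $pn$ is $1 + (c-1)\tfrac{1+\varepsilon}{1-\varepsilon}$, which is strictly below $c$ because $c-1<0$ and $\tfrac{1+\varepsilon}{1-\varepsilon}>1$; hence $pn = n^{c-\Omega(1)} = o(k^*/\log n)$.

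For (1) a $\Theta$-estimate is insufficient, since a $(1+o(1))$ ratio is demanded, so I would use an explicit coupling. Writing $\text{Bin}(k+\log n,1/n)$ as the independent sum $\text{Bin}(k,1/n) + \text{Bin}(\log n,1/n)$ coordinate-wise gives $Y_i(k+\log n) = Y_i(k) + Z_i$ with $Z_i \sim \text{Bin}(\log n,1/n)$, whence $Y^*(k+\log n) \le Y^*(k) + Z^*$ for $Z^* = \max_i Z_i$, using $\max_i(a_i+b_i) \le \max_i a_i + \max_i b_i$. Since $\Expected{Z^*} \le \lambda\log n/n$, this gives $\Expected{Y^*(k+\log n)} \le \Expected{Y^*(k)} + \lambda\log n/n$, so the relative overshoot is at most $\frac{\lambda\log n/n}{\Expected{Y^*(k)}}$; for $k \le n/\lambda$ this is $\bigO{\log n/k} = o(1)$ because $k \ge k^* = n^{\Omega(1)}$, and for $k > n/\lambda$ it is $o(1)/\Omega(1) = o(1)$ since $\Expected{Y^*(k)} = \Omega(1)$ by monotonicity and $\lambda\log n/n = \bigO{\log^2 n/n} = o(1)$. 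The matching direction $\Expected{Y^*(k+\log n)} \ge \Expected{Y^*(k)}$ is monotonicity.

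Finally (5). For a clean point $P(x) = \ZM(x) = k$, so $P(x)\lambda/n = \lambda k/n = \Theta(\Expected{Y^*(k)})$ directly by the backbone estimate. For a distorted point $P(x) = k + \frac{\delta}{\lambda p}\Expected{Y^*(k)}$, whence $P(x)\lambda/n = \frac{\lambda k}{n} + \frac{\delta}{np}\Expected{Y^*(k)}$; the first summand is again $\Theta(\Expected{Y^*(k)})$, and the claim follows once the penalty summand is also $\bigO{\Expected{Y^*(k)}}$, i.e.\ once $\delta/(np) = \bigO{1}$. I expect this to be the main obstacle, as it is the only place a lower bound on $p$ is needed: I would discharge it by invoking the regime in which $pn = \Omega(1)$ (e.g.\ $p \ge n^{-1+\varepsilon}$), which makes the penalty term $\Theta(\delta k/(pn)) = \bigO{k}$ and hence $P(x) = \Theta(k)$. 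Everywhere else the argument is driven purely by $\Expected{Y^*(k)} = \Theta(\lambda k/n)$ and monotonicity; the only genuine work is the parameter bookkeeping in (2) and keeping all estimates uniform across the full range $k \ge k^*$ rather than in a single regime.
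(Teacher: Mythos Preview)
The paper does not prove this lemma; it explicitly imports it from \cite{Jorritsma_Lengler_Sudholt_2023} (``the following auxiliary lemma from \cite{Jorritsma_Lengler_Sudholt_2023}''), so there is no in-paper proof to compare against. Your proposal therefore supplies what the paper omits, and the overall approach---pinning down $\Expected{Y^*(k)} = \Theta(\lambda k/n)$ on $k^* \le k \le n/\lambda$ via $\Expected{Y^*} \le \sum_i \Expected{Y_i}$ and $\Expected{Y^*} \ge \Pr{Y^* \ge 1}$, then reading off each item---is sound and matches how such statements are typically proved in this line of work. The coupling argument for (1) is clean and correct, and (3), (4) follow as you claim.

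Two small caveats. First, your argument for (2) and your claim $\lambda = \Theta(\log n)$ implicitly assume $k^*/n = n^{-\Omega(1)}$ (you write $k^* = n^c$ with constant $c<1$); the paper only states $k^* = n^{\Omega(1)}$ and $k^* = o(n)$, which does not rule out, say, $k^* = n/\log n$. In that edge case the factor $(k^*/n)^{2\varepsilon/(1-\varepsilon)}\log n$ need not vanish for small $\varepsilon$, and the lower bound on $\lambda$ from \Cref{ass:params} degrades to $\Theta(\log\log n)$. This is almost certainly the intended regime in \cite{Jorritsma_Lengler_Sudholt_2023}, but it is worth making explicit. Second, your observation about (5)---that the distorted case requires $np = \Omega(1)$, which is \emph{not} implied by \Cref{ass:params} alone---is correct and honest; this lower bound on $p$ is natural in the comparison regime (e.g.\ $p \ge n^{-1+\varepsilon}$ as in \Cref{ass:oleaineff}) but is indeed an additional assumption for the lemma as stated.
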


\begin{proof}[Proof of \Cref{lem:driftDyDisOmD}]
    As in \cite{Jorritsma_Lengler_Sudholt_2023}, we split the drift into a positive and negative component \begin{align*}
        \Delta^+(x) &\coloneqq \Expected{\max\{ P(x_t) - P(x_{t+1}), 0 \} \mid x_t = x  }\\
        \Delta^-(x) &\coloneqq \Expected{\max\{ P(x_{t+1}) - P(x_{t}), 0 \} \mid x_t = x }
    \end{align*}
    such that $\Delta(x) = \Delta^+(x) - \Delta^-(x)$. We further abbreviate $r^+:=\max\{r, 0\}$. Now, we distinguish the case that $x$ is clean and distorted, respectively, and derive bounds on the positive and negative drift in each case. The only case that is significantly different from the case distinction in \cite{Jorritsma_Lengler_Sudholt_2023} is the case of backward progress for distorted points. For all other cases, we therefore only give a rough proof sketch and refer the interested reader to \cite{Jorritsma_Lengler_Sudholt_2023} for details.
    \paragraph{Forward progress, clean points.} Here, it suffices to consider the event that all $\lambda$ offspring we create are clean. (I.e., we lower bound the forward progress of all other cases by zero.)  Clearly, this happens with probability at least $1 - \lambda p = 1 - o(1)$. Then, we can simply compute the expected maximal number of bits that flip from 0 to 1 in all offspring, which is $\Expected{Y^*(\ZM(x))}$. If we additionally consider the event that no bit flips from 1 to 0 in the offspring with the maximum number of 0-to-1 flips, our expected potential decrease is $\Omega(\Expected{Y^*(\ZM(x))})$. As this occurs with probability at least $1/e$, we get $\Delta^+(x) \ge \Omega(\Expected{Y^*(\ZM(x))})$ in total.
    \paragraph{Forward progress, distorted points.} In this case, we consider the event $\mathcal{E}_1$ that all offspring are clean, the event $\mathcal{E}_2$ that there is no clone of $x$ among the offspring, and the event $\mathcal{E}_3$ that there is an offspring that flips at most one 1-bit to a 0. These events together imply that we accept a clean point that increases the number of zeros by at most $1$ and thus the potential decreases by at least $(\delta/(\lambda p) \Expected{Y^*(\ZM(x))}) - 2$. Note that these events and bounds hold regardless of whether we use constant distortions or not. In total, as shown in \cite[Proof of Lemma 4.2]{Jorritsma_Lengler_Sudholt_2023}, this argument yields a drift of \begin{align}\label{eq:positivedriftdistortedpoints}
        \Delta^+(x) \ge  \frac{\delta q}{4\lambda p}\Expected{Y^*(\ZM(x))}
    \end{align}
    \paragraph{Backward progress, clean points.} Here, to upper bound the negative drift, we consider the event $\mathcal{E}_5$ that all offspring differ in at most $\log(n)$ bits from $x$. It is easy to show that \begin{align*}
        \Expected{ (P(x_{t+1}) - P(x_t))^+ \mathds{1}(\neg \mathcal{E}_5) } = o(1/n)
    \end{align*} since $\neg \mathcal{E}_5$ is very unlikely. However, if $\mathcal{E}_5$ occurs, then we know that the $\ZeroMax$-value of the next accepted point increases by at most $\log(n)$. If there is additionally a distorted offspring -- we denote this event by $\mathcal{E}_4$ -- then the potential increases by at most $\log(n) + \delta/(\lambda p) \Expected{Y^*(\ZM(x) + \log(n))}$ and hence  \begin{align*}
        &\Expected{ (P(x_{t+1}) - P(x_t))^+\mathds{1}(\mathcal{E}_5)\mathds{1}(\mathcal{E}_4) } \\
        & \hspace{3cm} \le \log(n) \lambda p + 2\delta \Expected{Y^*(\ZM(x))}.
    \end{align*}
    On the other hand, if $\neg \mathcal{E}_4$ occurs, i.e., if there is no distorted offspring, then the potential can only increase if we do not create a clone of $x$. Denote this event (that is the event of not cloning $x$) by $\mathcal{E}_2$ and recall that $\Pr{\mathcal{E}_2}= (1+o(1))q$. As $\neg \mathcal{E}_4 \cap \mathcal{E}_5$ implies that there is no distorted offspring and no offspring with more than $\log(n)$ bit flips, the potential increases by at most $\log(n)$ if $\neg \mathcal{E}_4 \cap \mathcal{E}_5 \cap \mathcal{E}_2$ occurs. Hence, \begin{align*}
        \Expected{ (P(x_{t+1}) - P(x_t))^+\mathds{1}(\mathcal{E}_5)\mathds{1}(\neg \mathcal{E}_4)\mathds{1}(\mathcal{E}_2) } \le (1+o(1))q \log(n).
    \end{align*}
    In total, this yields 
    \begin{align*}
        &\Delta^-(x) \le 2\delta\Expected{Y^*(\ZM(x))} \\
        &\hspace{2cm}+ \bigO{\lambda p \log(n) }+ \bigO{q \log(n)} + o(1/n).
    \end{align*} Due to our parameter setup and $\ZM(x) \ge k^*$, this term is $(1 + o(1))2\delta \Expected{Y^*(\ZM(x))}$ by \Cref{lem:auxpotential} (we refer to \cite{Jorritsma_Lengler_Sudholt_2023}) for the computation details), which is smaller than our lower bound on $\Delta^+(x)$ for clean points if we choose $\delta$ small enough. Hence, for clean points, we conclude that \begin{align*}
        \Delta(x) \ge \Delta^+(x) - \Delta^-(x) = \Omega(\Expected{Y^*(\ZM(x)})).
    \end{align*}

    \paragraph{Backward progress, distorted points.} In this case our analysis will slightly divert from that of \cite{Jorritsma_Lengler_Sudholt_2023}. To this end, recall that $\mathcal{E}_2$ is the event that there is no clone of $x$ among the offspring, and that $\mathcal{E}_5$ is the event that there is no offspring with more than $\log(n)$ bit flips. If $\mathcal{E}_2 \cap \mathcal{E}_5$ occurs, the reasoning is just like in  \cite{Jorritsma_Lengler_Sudholt_2023}: In the worst case, the next accepted point is distorted and its $\ZeroMax$-value changes by at most $\log(n)$. Thus, \begin{align*}
        &\Expected{(P(x_{t+1}) - P(x_t))^+ \mathds{1}(\mathcal{E}_2)\mathds{1}(\mathcal{E}_5)} \\
        &\hspace{.5cm}\le \Pr{\mathcal{E}_2} (  \log(n) \\ 
        &\hspace{.8cm}+  \delta / (\lambda p) \left( \Expected{Y^*(\ZM(x) + \log(n))} - \Expected{Y^*(\ZM(x))} \right)) \\
        &\hspace{.5cm}\le 2q \left( \log(n) + \frac{\delta}{\lambda p} o\left(\Expected{Y^*(\ZM(x))}\right) \right)
    \end{align*} because $\Pr{\mathcal{E}_2} \le 2q$ and because of \Cref{lem:auxpotential} item (1). The new part in our analysis is the case where there is a clone among the offspring. In our case, we can increase the potential in this case as well, namely, when we find a point of larger distortion that has a greater $\ZeroMax$-value. However, this is only possible if we sample a distorted offspring, which happens only with probability at most $\lambda p$. So similarly as above, we have \begin{align*}
        &\Expected{(P(x_{t+1}) - P(x_t))^+ \mathds{1}(\neg\mathcal{E}_2)\mathds{1}(\mathcal{E}_5)}
        \\&\hspace{1cm}\le \lambda p \left( \log(n) + \frac{\delta}{\lambda p} o\left(\Expected{Y^*(\ZM(x))}\right) \right).
    \end{align*} In total, \begin{align*}
        \Delta^{-}(x) &= \Expected{(P(x_{t+1}) - P(x_t))^+ \mathds{1}(\mathcal{E}_2)\mathds{1}(\mathcal{E}_5)}\\
        & \hspace{1cm} + \Expected{(P(x_{t+1}) - P(x_t))^+ \mathds{1}(\neg\mathcal{E}_2)\mathds{1}(\mathcal{E}_5)}\\
        & \hspace{1cm} + \Expected{(P(x_{t+1}) - P(x_t))^+ \mathds{1}(\neg\mathcal{E}_5)}\\
        &\le (2q + \lambda p) \left( \log(n) + \frac{\delta}{\lambda p} o\left(\Expected{Y^*(\ZM(x))}\right) \right) + o(1/n).
    \end{align*} By items (2) - (4) in \Cref{lem:auxpotential}, this implies that \begin{align*}
        \Delta^{-}(x) &\le \left( \frac{(2q + \lambda p) \delta}{\lambda p} + 1 \right) o\left(\Expected{Y^*(\ZM(x))}\right) \\
        &= \left(\frac{2q\delta}{\lambda p} + \delta + 1 \right) o\left(\Expected{Y^*(\ZM(x))}\right).
    \end{align*} Using the bound in \Cref{eq:positivedriftdistortedpoints}, we conclude that for distorted points, we have a total drift of \begin{align*}
        \Delta(x) &\ge \frac{\delta q}{\lambda p} \left( \frac{1}{4}\Expected{Y^*(\ZM(x))} - o\left(\Expected{Y^*(\ZM(x))}\right) \right) \\
        & \hspace{1cm} - \left(\delta + 1 \right) o\left(\Expected{Y^*(\ZM(x))}\right) \\
        &= \Omega\left( \frac{q}{\lambda p} \Expected{Y^*(\ZM(x))} \right) = \Omega\left( \Expected{Y^*(\ZM(x))} \right)
    \end{align*} because $q = \omega(p\lambda)$.
\end{proof}

Overall, we use this to prove \Cref{thm:ocleabound}. \begin{proof}[Proof of \Cref{thm:ocleabound}]
    The proof is identical to the proof of \cite[Theorem 1.1]{Jorritsma_Lengler_Sudholt_2023} using \Cref{lem:driftDyDisOmD} instead of \cite[Lemma 4.3]{Jorritsma_Lengler_Sudholt_2023}.
\end{proof}

\section{Experiments}\label{sec:experiments}
We confirm our theoretical results empirically in multiple settings\footnote{The code is available at \url{https://github.com/OliverSieberling/DistortedOneMax}}. First of all, \Cref{fig:oleaexp} shows how the total fitness, \OneMax-fitness, and the distortion evolve over a single run on \DisOMpar with an exponential distribution. We can clearly see that the \olea quickly reaches a local optimum and then tends to increase distortion further while decreasing $\OneMax$-fitness. Furthermore, progress becomes extremely slow, note the logarithmic $x$-axis. After one billion generations the \olea is still moderately far away from the optimum while not having accepted a single offspring in the prior $975$ million generations. On the other hand, the \oclea escapes the local optima it encounters quickly and reaches fitness $n - k^*$ in less than $400$ generations.

\begin{figure}
    \centering
    \resizebox{\linewidth}{!}{\input{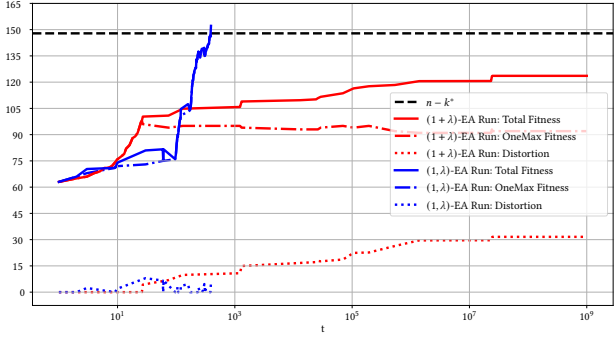}}
    \caption{\normalfont Total fitness, \OneMax-fitness, and distortion of one run of the \olea and the \oclea on \DisOMpar with $\mathcal{D}$ being an exponential distribution with rate parameter $0.4$. We use $n = 150$, $\lambda = 8$, $p = 0.0245$ and $k^* = 2.12$ with a cutoff of $10^9$ generations. The x-axis is scaled logarithmically.}\label{fig:oleaexp}
\end{figure}

This behavior is predominant even for smaller problem sizes. In \Cref{fig:MedianExpPlusVsComma} we plot the median number of generations required by the \olea and the \oclea to optimize \DisOMpar. For distortions sampled from an exponential distribution we observe that already for $n=90$ a majority of the elitist runs exceed the cutoff of one million generations, while the \oclea optimizes efficiently in every single run. This illustrates the exponential slowdown proven in \Cref{cor:expbound}. We remark that for small problem sizes the sampled distortions have an significant impact on the total fitness. Hence, the \olea can find a point of sufficient fitness without being close to the optimum in terms of \OneMax fitness simply by increasing distortion. For distortions sampled from a uniform distribution, which also satisfies Assumption~\ref{ass:distribution} but is much less ``heavy-tailed'', we see a smaller slowdown. This indicates that the governing factor of the run time of the \olea on \DisOMpar is indeed controlled by the ``heavy-tailedness'' of $\mathcal{D}$ as predicted by \Cref{thm:olealowerbound}.

\begin{figure}
    \centering
    \resizebox{\linewidth}{!}{\input{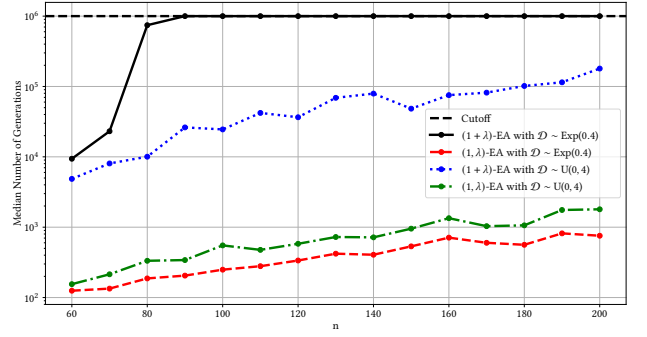}}
    \caption{\normalfont Number of generations required by the \olea and the \oclea to optimize \DisOMpar with $\mathcal{D}$ being an exponential distribution with rate parameter $0.4$ and a uniform distribution from $0$ to $4$. We take the median over $49$ runs. We set $\lambda = \lfloor 1.5 \log{n} \rceil$, $p = 0.3n^{-0.5}$ and $k^* = n^{0.15}$ with a cutoff of $10^6$ generations. The y-axis is scaled logarithmically.} 
    \label{fig:MedianExpPlusVsComma}
\end{figure}

In \Cref{fig:UniformDifferentP}, we investigate this question further and conclude that the tern $(p \Pr{D \ge d})^{-1}$ predicted by \Cref{thm:olealowerbound} as a lower bound for the run time of the \olea can be empirically observed. For this purpose, we compare the performance of the \olea on \DisOMpar with distortions sampled from different truncated exponential distributions for different $p$, and we plot the run time, rescaled by $(p \Pr{D \ge d})^{-1}$ (where $d$ is the cutoff) in \Cref{fig:UniformDifferentP}. We observe that the curves are grouped closely together as we would expect, although the run times seem to grow slightly faster than by a factor of $1/p$. The same figure also shows that the optimization time increases significantly when truncating the distribution $\mathcal{D}$ at higher values. This slowdown seems to be roughly $1/Pr[D \ge d]$ as we would expect.

\begin{figure}
    \centering
    \resizebox{\linewidth}{!}{\input{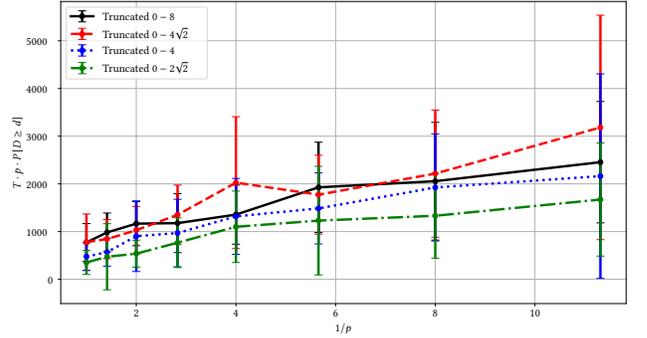}}
    \caption{\normalfont Normalized number of generations required by the \olea to optimize \DisOMpar for different distortion probabilities $p$. The distortions are sampled from an exponential distribution with rate parameter $0.4$ truncated at different cutoffs $d$. We set $n=300$, $\lambda = 9$, $k^* = 2.35$ and average over $49$ runs. Note that the $y$-axis shows the (averaged) run time re-scaled by a factor of $(p\Pr{D \ge d})^{-1}$. This gives evidence that the run time is indeed proportional to $(p\Pr{D \ge d})^{-1}$ as predicted by \Cref{thm:olealowerbound}.} 
    \label{fig:UniformDifferentP}
\end{figure}

\section{Conclusion}

We have shown that plus strategies are prone to get stuck in randomly placed local optima of random height, modeled by the benchmark function \DisOMpar. We gave a sufficient condition (\Cref{ass:distribution}) on the distribution $\mathcal{D}$ ensuring that -- sufficiently close to the optimum -- once we find a distorted point, things only get worse and the \olea accepts points of higher and higher distortion, which leads to super-polynomial run times for multiple natural choices of $\mathcal{D}$ including the exponential and the Gaussian distribution. This holds even if the local optima are planted relatively sparsely, for example $p = n^{-1+\eps}$. Our illustrating example is the exponential distribution, for which we obtain a stretched exponential lower bound of $\exp(n^{\Omega(1)})$. Our results hence indicate that plus strategies on natural ``rugged'' fitness landscapes are unsuitable for practical applications. 

On the other hand, for some parameter choices for which plus strategies need super-polynomial time, we showed that comma strategies optimize \DisOMpar asymptotically in the same time as \OneMax, i.e., in only $\Theta(n \log(n))$ fitness evaluations. Our analysis here relies on the assumption that local optima are planted sufficiently sparsely, i.e., that $p$ is sufficiently small. While this is exactly the same assumption as in previous work \cite{Jorritsma_Lengler_Sudholt_2023}, it is likely not needed to ensure efficiency of the \oclea, as indicated by our experimental evaluation (\Cref{fig:oleaexp}) which takes place in a setting where the assumption $p = o(k^*/n)$ is not met. We leave it as an open question for future work to extend the theoretical study of the \oclea on \DisOMpar for a less restrictive parameter regime. 

While we studied a distorted version of \onemax, any other benchmark function can be distorted in the same way. It would be interesting to study whether comma strategies remain efficient on distorted versions of other benchmarks, for example \leadingones, linear functions, or monotone functions. Finally, there are many other non-elitist selection strategies than comma selection, like tournament selection or, and there are situations in which those other strategies are preferable~\cite{dang2021non}. It would be important to know which other non-elitist selection strategies can also deal well with distortions.



\bibliographystyle{ACM-Reference-Format}
\bibliography{sample-base}

\appendix

\section{Appendix}

\end{document}